\newtheorem{lemma}{Lemma}
\newtheorem{theorem}{Theorem}
\newcommand\tr[2]{\left\langle#1,#2\right\rangle}
\def\svert{\ensuremath{k}}
\def\real{\mathbb{R}}
\newcommand\trace[1]{\text{trace}\left(#1\right)}
\def\L{\mathcal{L}}
\def\supp{\text{Supp}}
\def\EstDual{\widetilde{Z}}
\def\sgn{\text{Sign}}
\def\T{\mathcal{T}}
\def\P{\mathcal{P}}
\def\O{\Omega}
\def\tA{\widetilde{A}}
\def\tL{\widetilde{L}}
\def\hA{\widehat{A}}
\def\hL{\widehat{L}}
\def\S{\mathcal{S}}
\def\H{\mathcal{H}}
\def\G{\mathcal{G}}
\def\tU{\widetilde{U}}
\def\tV{\widetilde{V}}
\def\tSigma{\widetilde{\Sigma}}
\def\tT{\widetilde{\mathcal{\T}}}
\def\defn{:=}
\def\N{\mathcal{N}}
\title{Learning the Dependence Graph of Time Series with Latent Factors}
\author{Ali Jalali \\ University of Texas at Austin\\ alij@utexas.edu \and Sujay Sanghavi\\ University of Texas at Austin\\ sanghavi@mail.utexas.edu}
\begin{document}
\maketitle

\begin{abstract}
This paper considers the problem of learning, from samples, the dependency structure of a system of linear stochastic differential equations, when some of the variables are {\em latent}. In particular, we observe the time evolution of some variables, and never observe other variables; from this, we would like to find the dependency structure between the observed variables -- {\em separating out} the spurious interactions caused by the (marginalizing out of the) latent variables' time series. We develop a new method, based on convex optimization, to do so in the case when the number of latent variables is smaller than the number of observed ones. For the case when the dependency structure between the observed variables is sparse, we theoretically establish a high-dimensional scaling result for structure recovery. We verify our theoretical result with both synthetic and real data (from the stock market).
\end{abstract}

\section{Introduction}

Linear stochastic dynamical systems are classic processes that are widely used to model time series data in a huge number of fields: financial data \cite{COC}, biological networks of species \cite{LAWGIRRATSAN} or genes \cite{BAR}, chemical reactions \cite{GIL,HIG}, control systems with noise \cite{YOU}, etc. An important task in several of these domains is learning the model from data \cite{VAP}; doing so is often the first step in both data interpretation, and making predictions of future values or the effect of perturbations. Often one is interested in learning the {\em dependency structure} \cite{JOR}; i.e. identifying, for each variable, which set of other variables it directly interacts with. For stock market data, for example, this can reveal which other stocks most directly affect a given stock. 

We consider model structure learning in a particularly challenging yet widely prevalent setting: where (the time series of) some state variables are observed, and others are {\em unobserved/latent}. We are interested in learning the dependency structure between the observed variables. However, the presence of latent time series, if not properly accounted for by the model learning procedure, will result in the appearance of spurious interactions between observed variables -- two observed variables that interact with the same unobserved variable may now be reported to be interacting. This happens, for example, if one uses the classic maximum-likelihood estimator \cite{FIS}, and persists even if we have observations over a long time horizon. 

Suppose, for illustration, that we are interested  in learning the dependency structure between the prices of a set of stocks via a linear stochastic model. Clearly, stock prices depend not only on each other, but are also jointly influenced by several variables that may not be part of our model, for example, currency markets, commodity prices etc.; these are latent time series. Their presence means that a naive structure learning algorithm (say max-likelihood) that takes as input only the stock prices, will report several spurious interactions; say, e.g. between all stocks that fluctuate with the price of oil.

Our work involves several significant differences from the large body of work on sparse recovery and graphical model learning. One is the fact that our samples are dependent on each other, with the degree of dependence governed by how finely the system is sampled. Another is the presence of latent variables. We put our work in context with related literature in Section \ref{sec:related}.

Clearly there are several issues with regards to fundamental identifiability, and sample and computational complexity, that need to be defined and resolved. We do so below in the specific context of our model setting. We provide both theoretical characterization and guarantees of the problem, as well as numerical illustrations for both synthetic data and some real data extracted from stock market.  

The rest of the paper is organized as follows: We review the related literature in Section~\ref{sec:related}. We present the main idea and our algorithm in Section~\ref{sec:setting}. Section~\ref{sec:main} reveals our main result followed by the proof in Section~\ref{sec:proof}. The simulation results are included in Section~\ref{sec:exper}.

\section{Related Work} \label{sec:related}

We organize the most directly related work as follows  (recognizing of course that these descriptions overlap).\\

{\bf Sparse Recovery and Gaussian Graphical Model Selection:} It is now well recognized \cite{TIB,WAI,MEIBUL} that a sparse vector can be tractably recovered from a small number of linear measurements; and also that these techniques can be applied to do model selection (i.e. inferring the Markov graph structure and parameters) in Gaussian graphical models \cite{MEIBUL,RAVWAIRASYU,Asp07b,glasso,YuaLin07}. While ours problem is, in a sense, also one of sparse linear model selection, two differences between our setting and these papers is that they do not have any latent factors, and theoretical guarantees typically require independent samples. The simultaneous presence of both these characteristics is what makes ours a challenging setting. In particular, latent factors imply that these techniques will in effect attempt to find models that are dense, and hence not be able to have a high-dimensional scaling. Correlation among samples means we cannot directly use standard concentration results, and also brings in the interesting issue of the effect of sampling frequency; in particular, in our setting one can get more samples by finer sampling, but increased correlation means these do not result in better consistency. \\

{\bf Sparse plus Low-Rank Matrix Decomposition:} Our results are based on the possibility of separating a low-rank matrix from a sparse one, given their sum (either the entire matrix, or randomly sub-sampled elements thereof) -- see \cite{CHASANPARWIL,CANLIMAWRI,CHEJALSANCAR,ZHOLIWRICANMA,CANPLA} for some recent results, as well as its applications in graph clustering \cite{JalCheSanXu11,JalSre12}, collaborative filtering \cite{SreJaa03}, image coding \cite{HazPolsha05}, etc. Our setting is different because we observe correlated linear functions of the sum matrix, and furthermore these linear functions are generated by the stochastic linear dynamical system described by the matrix itself. Another difference is that several of these papers focus on recovery of the low-rank component, while we focus on the sparse one. These two objectives have a very different high-dimensional scaling in our linear observation setting. \\

{\bf Inference with Latent Factors:} In real applications of data driven inference, it is always a concern that whether or not there exist influential factors that have never been observed \cite{Loe04,West03bayesianfactor}. Several approaches to this problem are based on Expectation Maximization (EM) \cite{DemLaiRub77,RenWal84}; while this provides a natural and potentially general method, it suffers from the fact that it can get stuck in local optima (and hence is sensitive to initialization), and that it comes with weak theoretical guarantees. The paper  \cite{CHAPARWIL} takes an alternative, convex optimization approach to the latent factor problem in Gaussian graphical models, and is of direct relevance to our paper. In \cite{CHAPARWIL}, the objective is to find the number of latent factors in a Gaussian graphical model, given iid samples from the distribution of observed variables; they also use sparse and low-rank matrix decomposition. Differences between our paper and theirs is that we focus on recovering the support of the ``sparse part", i.e. the interactions between the observed variables exactly, while they focus on recovery the rank of the low-rank part (i.e. the number of latent variables). Our objective requires $O(\log p)$ samples, theirs requires $\Omega(p)$. Another major difference is that our observations are correlated, and hence sample complexity itself needs a different definition (viz. it is no more the number of samples, but rather the overall time horizon over which the linear system is observed).\\ 

{\bf System Identification:} Linear dynamical system identification is a central problem in Control Theory \cite{Lju99}. There is a long line of work on this problem in that field including recent regularized convex optimization based approaches \cite{FazPon11}. Recently, \cite{BENIBRMON} considered the system identification problem as learning dependence graph of time series, {\em without} any latent variables. They implement the LASSO; the main contribution is characterizing sample complexity in the presence of sample dependence. In our setting, with latent variables, their method returns several spurious graph edges caused by marginalization of latent variables.\\

{\bf Time-series Forecasting:} Motivated by finance applications, time-series forecasting has got a lot of attention during the past three decades \cite{CHA00}. In the model based approaches, it is assumed that the time-series evolves according to some statistical model such as linear regression model \cite{BOWOCO93}, transfer function model \cite{BoxJenRei90}, vector autoregressive model \cite{Wei94}, etc. In each case, researchers have developed different methods to learn the parameters of the model for the purpose of forecasting. In this paper, we focus on linear stochastic dynamical systems that are an instance of vector autoregressive models. Previous work toward estimating this model parameters include ad-hoc use of neural network \cite{Azo94} or support vector machine method \cite{Kim03}, all without providing theoretical guarantees on the performance of the algorithm. Our work is different from these results because although our method provides better prediction comparing to similar algorithm, our main focus is sparse model selection not prediction. Perhaps, once a sparse model is selected, one can study the prediction quality as a separate subject.

\section{Problem Setting and Main Idea} 
\label{sec:setting}

This paper considers the problem of structure learning in linear stochastic dynamical systems, in a setting where only a subset of the time series are observed, and others are unobserved/latent. In particular, we consider a system with state vectors $x(t)\in\real^p$ and $u(t)\in \real^r$, for $t\in\real^+$  and dynamics described by
\begin{equation}
\frac{d}{dt}\left[\begin{array}{c} x(t) \\ u(t) \end{array}\right] = \underbrace{\left[\begin{array}{cc} A^* & B^* \\ C^* & D^* \end{array}\right]}_{\mathcal{A}^*} \left[\begin{array}{c} x(t) \\ u(t) \end{array}\right] + \frac{d}{dt}w(t),
\label{cont_system}
\end{equation}
where, $w(t)\in\real^{p+r}$ is an independent standard Brownian motion vector and $A^*,B^*,C^*,D^*$ are system parameters. 

{\bf Task:} We observe the process $x(t)$ for some time horizon $0\leq t \leq T$, but not the process $u(\cdot)$. We are interested in learning the matrix $A^*$, which captures the interactions between the observed variables. 

We will also be interested in a similar objective for an analogous {\em discrete time} system with parameter $0<\eta<\frac{2}{\sigma_{\max}(\mathcal{A}^*)}\,$:
\begin{equation}
\left[\begin{array}{c} x(n+1) \\ u(n+1) \end{array}\right] - \left[\begin{array}{c} x(n) \\ u(n) \end{array}\right] = \eta\left[\begin{array}{cc} A^* & B^* \\ C^* & D^* \end{array}\right] \left[\begin{array}{c} x(n) \\ u(n) \end{array}\right] + w(n)
\label{disc_system}
\end{equation}
for all $n\in\mathbb{N}_0$. Here, $w(n)$ is a zero-mean Gaussian noise vector with covariance matrix $\eta I_{(p+r)\times (p+r)}$. The prameter $\eta$ can be thought of as the sampling step; in particular notice that as $\eta\rightarrow 0$, we recover model \eqref{cont_system} from model \eqref{disc_system}. The upper bound on $\eta$ ensures the stability of the discrete time system as required by our theorem. Intuitively, $\sigma_{\max}(\mathcal{A}^*)$ corresponds to the fastest convergence rate in the system and the upper bound on $\eta$ corresponds to the Nyquist minimum sampling rate required for the reconstruction of the signal. As done in \cite{BENIBRMON}, our proofs will initially focus on the discrete case \eqref{disc_system}, and derive results for \eqref{cont_system} afterwards. 

{\bf (A1) Stable Overall System}: We only consider stable systems. In fact, we impose an assumption slightly stronger than the stability on the overall system. For the continuous system \eqref{cont_system}, we require $D \defn -\lambda_{\max}(\frac{\mathcal{A}^* + \mathcal{A}^{*T}}{2}) > 0$. With slightly abuse of notation, for the discrete system \eqref{disc_system}, we require $D\defn\frac{1-\Sigma_{\max}^2}{\eta} > 0$, where, $\Sigma_{\max}\defn\sigma_{\max}(I+\eta \mathcal{A}^*)$. ~$\blacksquare$

As a consequence of this assumption, by Lyapunov theory, the continuous system \eqref{cont_system} has a unique stationary measure which is a zero-mean Gaussian distribution with positive definite (otherwise, it is not unique) covariance matrix $\mathcal{Q}^*\in\real^{(p+r)\times (p+r)}$ given by the solution of $\mathcal{A}^*\mathcal{Q}^* + \mathcal{Q}^*\mathcal{A}^{*T} + I = 0$. Similarly, for the discrete time system \eqref{disc_system}, we have
$\mathcal{A}^*\mathcal{Q}^* + \mathcal{Q}^*\mathcal{A}^{*T} +  \eta \mathcal{A}^*\mathcal{Q}^*\mathcal{A}^{*T} + I = 0$. This matrix $\mathcal{Q}^*$ has the form $\mathcal{Q}^* = [ Q^* \, R^{*T} \, ;\,R^* \, P^* ]$, where, $Q^*$ and $P^*$ are the steady-state covariance matrices of the observed and latent variables, respectively, and $R^*$ is the steady-state cross-covariance between observed and latent variables. By stability, $\mathcal{C}_{\min}\defn\Lambda_{\min}(\mathcal{Q}^*)>0$ and $\mathcal{D}_{\max}\defn\Lambda_{\max}(\mathcal{Q}^*)<\infty$.

{\bf Identifiability:} Clearly, the above objective of identifying $A^*$ is in general impossible without some additional assumptions on the model; in particular, several different choices of the overall model (including different choices of $A^*$) can result in the same {\em effective} model for the $x(\cdot)$ process. $x(\cdot)$ would then be statistically identical under both models, and correct identification would not be possible even over an infinite time horizon. Additionally, it would in general be impossible to achieve identification if the number of latent variables is comparable to or exceeds the number of observed variables. Thus, to make the problem well-defined, we need to restrict (via appropriate assumptions) the set of models of interest. 

\subsection{Main Idea} 
Consider the discrete-time system \eqref{disc_system} in steady state and suppose, for a moment, that we ignored the fact that there may be latent time series; in this case, we would be back in the classical setting, for which the (population version of)  the likelihood is
\begin{equation}\L(A) = \frac{1}{2\eta^2} \mathbb{E} \left[ \left\|x(i+1)-x(i)-\eta Ax(i)\right\|_2^2\right].\nonumber\end{equation}
\begin{lemma}
For $x(\cdot)$ generated by \eqref{disc_system}, the the optimum $\widehat{A} := \max_A \L(A)$ is given by \begin{equation}\widehat{A} ~ = ~ A^* + B^*R^*(Q^*)^{-1}.\nonumber\end{equation}
\label{lem:idea}
\end{lemma}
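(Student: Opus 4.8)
\emph{Plan.} Although $\L(A)$ is written with a $\max$, it is (up to additive constants) the population negative log-likelihood of the observed transitions, so the maximum-likelihood optimum $\hA$ is the \emph{minimizer} of this expected squared residual; my plan is to reduce it to an ordinary least-squares problem and solve the resulting normal equations in closed form. First I would substitute the true dynamics into the residual. Writing $w_x(i)$ for the first $p$ coordinates of $w(i)$, the top block of \eqref{disc_system} gives $x(i+1)-x(i) = \eta A^* x(i) + \eta B^* u(i) + w_x(i)$, so that
\begin{equation}
x(i+1)-x(i)-\eta A x(i) = \eta(A^*-A)x(i) + \eta B^* u(i) + w_x(i). \nonumber
\end{equation}

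The key structural observation is that the driving noise $w_x(i)$, which generates the step \emph{out} of time $i$, is independent of the current state $(x(i),u(i))$ and has zero mean. Expanding the squared norm inside the expectation, every cross term involving $w_x(i)$ therefore vanishes, and $\mathbb{E}\|w_x(i)\|_2^2 = \eta p$ is a constant independent of $A$. Hence, up to this constant and the prefactor, minimizing $\L(A)$ is equivalent to minimizing $\mathbb{E}\big[\|(A-A^*)x(i)-B^* u(i)\|_2^2\big]$ over $A$.

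Next I would pass to steady-state second moments. Using the stationary covariance block structure $\mathcal{Q}^* = [\,Q^*\ R^{*T}\,;\,R^*\ P^*\,]$ — so that $\mathbb{E}[x x^T]=Q^*$, $\mathbb{E}[x u^T]=R^{*T}$, and $\mathbb{E}[u u^T]=P^*$ — I would set $M := A-A^*$ and expand the objective into trace form
\begin{equation}
\trace{M Q^* M^T} - 2\,\trace{B^{*T} M R^{*T}} + \trace{B^* P^* B^{*T}}. \nonumber
\end{equation}
This is a convex quadratic in $M$ whose Hessian is governed by $Q^*$, which is positive definite by Assumption (A1) (indeed $\Lambda_{\min}(\mathcal{Q}^*)>0$ forces $Q^*\succ 0$). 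Setting the gradient $\grad_M = 2 M Q^* - 2 B^* R^*$ to zero yields the normal equation $M Q^* = B^* R^*$, and invertibility of $Q^*$ gives the unique minimizer $M = B^* R^* (Q^*)^{-1}$. Substituting back $A = A^* + M$ then gives $\hA = A^* + B^* R^* (Q^*)^{-1}$, as claimed.

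I do not anticipate a genuine obstacle here, since the result is a finite-dimensional least-squares computation; the only delicate points are bookkeeping ones. The first is correctly isolating the independence structure — verifying that $w_x(i)$ is independent of both $x(i)$ and $u(i)$ (which holds precisely because it drives the transition from $i$ to $i+1$) so that the noise decouples into an $A$-free constant. The second is tracking the orientation of the cross-covariance ($\mathbb{E}[x u^T]=R^{*T}$ rather than $R^*$), which is exactly what produces $B^* R^*$ and not its transpose in the final answer, and invoking (A1) to guarantee that $Q^*$ is invertible so that the minimizer is unique.
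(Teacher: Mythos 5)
Your proof is correct and follows essentially the same route as the paper's: both reduce the population objective to a least-squares problem, drop the noise terms by independence and zero mean, pass to the stationary second moments $Q^*$, $R^{*T}=\mathbb{E}[xu^T]$, and solve the resulting normal equations using invertibility of $Q^*$ from (A1). Your reparametrization $M=A-A^*$ is only a bookkeeping variant of the paper's direct expansion in $A$, and your attention to the orientation of the cross-covariance matches the paper's computation exactly.
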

Thus, the optimal $\widehat{A}$ is a sum of the original $A^*$ (which we want to recover) and the matrix $B^*R^*(Q^*)^{-1}$ that captures the spurious interactions obtained due to the latent time series. Notice that the matrix $B^*R^*(Q^*)^{-1}$ has the rank at most equal to number $r$ of latent time series. We will assume that the number of latent time series is smaller than the number of observed ones -- i.e. $r<p$ -- and hence $B^*R^*(Q^*)^{-1}$ is a {\em low-rank matrix}.

\subsection{Identifiability}
Besides identifying the effect of the latent time series, we would need the true model to be such that $A^*$ is uniquely identifiable from $B^*R^*(Q^*)^{-1}$. We choose to study models that have a {\em local-global structure} where {\em (a)} each of the observed time series $x_i(t)$ interacts with only a few other observed series, while {\em (b)} each of the latent series interacts with a (relatively) large number of observed series. In the stock market example, for instance, this would model the case where the latent series corresponds to macro-economic factors, like currencies or the price of oil, that affect a lot of stock prices. 

In particular, let $s$ be the maximum number of non-zero entries in any row or column of $A^*$ ; it is the maximum number of other observed variables any given observed variable directly interacts with. Note that this means $A^*$ is a {\em sparse} matrix. Let $L^* := B^*R^*(Q^*)^{-1}$ and assume it has SVD $L^* = U^*\Sigma^*V^{*T}$, and recall that its rank is $r$. Then, following \cite{CHEJALSANCAR}, $L^*$ is said to be $\mu${\em-incoherent} if $\mu>0$ is the smallest real number satisfying
\begin{equation}
\begin{aligned}
&\max_{i,j}(\|U^{*T}\mathbf{e}_{i}\|, \|V^{*T}\mathbf{e}_{j}\|) \leq\sqrt{\frac{\mu r}{p}}\;\;,\;\;\|U^*V^{*T}\|_{\infty} \leq\sqrt{\frac{r\mu}{p^2}},
\end{aligned}
\nonumber
\end{equation}
where, $\mathbf{e}_{i}$'s are standard basis vectors and $\|\cdot\|$ is vector 2-norm. Smaller values of $\mu$ mean the row/column spaces make larger angles with the standard bases, and hence the resulting matrix is more dense.

{\bf (A2) Identifiability}:  We require that the $s$ of the sparse matrix $A^*$ and the $\mu$ of the low-rank $L^*$, which has rank $r$, satisfy $\alpha \defn 3 \sqrt{\frac{\mu r}{p}}<1$. ~ $\blacksquare$\\

Note that here we provide deterministic worst-case conditions on the sparse matrix. As shown in \cite{CHEJALSANCAR,CANLIMAWRI}, better scaling is possible if we pursue probabilistic guarantees.

\subsection{Algorithm}
Recall that our task is to recover the matrix $A^*$ given observations of the $x(\cdot)$ process. We saw that the max-likelihood estimate (in the population case) was the sum of $A^*$ and a low-rank matrix; we subsequently assumed that $A^*$ is sparse. It is natural to use the max-likelihood as the loss function for the {\em sum} of a sparse and low-rank matrix, and separate appropriate regularizers for each of the components. Thus, for the continuous-time system observed up to time $T$, we propose solving
\begin{equation}
\begin{aligned}
(\hA,\hL)\!\! &=\!\! \arg \min_{A,L}\,\frac{1}{2T}\!\!\int_{t=0}^{T}\!\!\!\left\|(A+L)x(t)\right\|_2^2\,dt -\!\frac{1}{T}\!\!\int_{t=0}^{T}\!\!\!x(t)^T(A+L)^Tdx(t)\\ &\qquad\qquad\qquad\qquad\qquad\qquad\qquad\qquad\qquad\qquad+ \lambda_A\|A\|_1 +  \lambda_L\|L\|_*,
\end{aligned}
\label{eq:opt-orig-cont}
\end{equation}
and for the discrete-time system given $n$ samples, we propose solving
\small\begin{equation}
\begin{aligned}
(\hA,\hL)\! =\! \arg\! \min_{A,L}\,&\frac{1}{2\eta^2n}\sum_{i=0}^{n-1}\left\|x(i+1)\!-\!x(i)\!-\!\eta(A+L)x(i)\!\right\|_2^2 + \, \lambda_A\|A\|_1\, + \, \lambda_L\|L\|_*.
\end{aligned}
\label{eq:opt-orig}
\end{equation}\normalsize
Here $\|\cdot\|_1$ is the $\ell_1$ norm (a convex surrogate for sparsity), and $\|\cdot\|_*$ is the nuclear norm (i.e. sum of singluar values, a convex surrogate for low-rank). The optimum $\hA$ of \eqref{eq:opt-orig} or \eqref{eq:opt-orig-cont} is our estimate of $A^*$, and our main result provides conditions under which we recover the support of $A^*$, as well as a bound on the error in values $\|\hA - A^*\|_\infty$ (maximum absolute value). We provide a bound on the error $\|\hL-L^*\|_2$ (spectral norm) for the low-rank part. Notice that the discrete objective function goes to the continuous one as $\eta\rightarrow 0$. 

\subsection{High-dimensional setting} 
Note that when $A^*$ is a sparse matrix, the actual degrees of freedom between the observed variables is smaller than that evinced by the ambient dimension $p$. Indeed, we will be interested in recovering $A^*$ with a number of samples $n$ that is potentially much smaller than $p$ (for small $s$). In the special case when we are in steady state and $L= 0$ (i.e. $\lambda_L$ large) the recovery of each row of $A^*$ is akin to a LASSO \cite{TIB} problem (of sparse vector recovery from noisy linear measurements) with $Q^*$ being the covariance of the design matrix. We thus require $Q^*$ to satisfy incoherence conditions that are akin to those in LASSO (see e.g. \cite{WAI} for the necessity of such conditions).

{\bf (A3) Incoherence}: To control the effect of the \emph{irrelevant} (not latent) variables on the set of \emph{relevant} variables, we require $$\theta \defn 1 - \max_\svert\|Q_{\S_\svert^c\S_\svert}^*\left(Q_{\S_\svert\S_\svert}^*\right)^{-1}\|_{\infty,1}\!>0,$$ where, $\S_\svert$ is the support of the $\svert^{th}$ row of $A^*$ and $\S_\svert^c$ is the complement of that. The norm $\|\cdot\|_{\infty,1}$ is the maximum of the $\ell_1$-norm of the rows. ~$\blacksquare$

\section{Main Results}
\label{sec:main}
In this section, we present our main result for both Continuous and Discrete time systems. We start by imposing some assumptions on the regularizers and the sample complexity.

{\bf (A4) Regularizers}: We need to impose some assumptions on the regularizers to be able to guarantee our result. Let 
\begin{equation}m\!=\!\max\left(\frac{80}{\sqrt{D}}\|B^*\|_{\infty,1}, \sqrt{\|x(0)\|_2^2+\!\!\|u(0)\|_2^2+\!\!(\sqrt{\eta}+1)^2}\,\right),\nonumber\end{equation} be the constant capturing the effect of initial condition and latent variables through matrix $B^*$. We impose the following assumptions on the regularizers:

\noindent {\bf (A4-1)} $\lambda_A=\frac{16m(4-\theta)}{\theta\sqrt{D}} \sqrt{\frac{\log\left(\frac{4((s+2r)p+r^2)}{\delta}\right)}{n\eta}}$.

\noindent {\bf (A4-2)}
$\frac{\lambda_L}{\lambda_A\sqrt{p}}=\frac{1}{1-\alpha} \left(\!\left(\frac{3\alpha\sqrt{s}}{4} \!+\!\frac{(8-\theta)s}{\theta(4-\theta)}\right)\!\!\left(\frac{\theta\sqrt{p}}{9s\sqrt{s}} \!+\!1\right)\!\!+\!\frac{1}{2}\right)$.\\

{\bf Note:} In practice, we let $\lambda_A = c \sqrt{\log\left(4((s+2r)p+r^2)/\delta\right)/n\eta}$ and $\lambda_L = d \sqrt{p} \lambda_A$, with the constants $c,d$ chosen by cross-validation over prediction performance.\\

{\bf (A5) Sample Complexity}: In our setting, samples are dependent; in particular, the smaller the $\eta$ the more dependent two subsequent samples. Sample complexity is thus governed by the total time horizon $\eta n = T$ over which we observe the system, and not simply $n$; indeed finer sampling (i.e. smaller $\eta$) requires a larger number of samples. For a probability of failure $\delta$, we require
\begin{equation}T=n\eta\geq\frac{K ~ s^3}{D^2\theta^2\mathcal{C}_{\min}^2}\log\left(\frac{4((s+2r)p+r^2)}{\delta}\right).\nonumber\end{equation} Here, $K$ is a constant independent of any other system parameter; for example, $K\geq 3 \times 10^6$ suffices.

The above $T$ is required to ensure that the empirical covariance matrix is close to the steady-state $Q^*, R^*$. Of course the constraint $\eta < 2/\sigma_{max}(\mathcal{A}^*)$ ensures that the sampling intervals cannot be too large. Note that $T$ is the total time over which the system is observed; a finer sampling cannot yield a smaller horizon, because of increased dependence between samples.

Let \small$\nu\defn\frac{\alpha\theta}{2\mathcal{D}_{\max}} +\frac{(8-\theta)\sqrt{s}}{\mathcal{C}_{\min}(4-\theta)}\,$\normalsize and \small$\rho_0\defn\min\left(\frac{\alpha}{4}, \frac{\theta\alpha\lambda_A}{5\theta\alpha\lambda_A+16\mathcal{D}_{\max}\left\|L^*\right\|_2}\right)$\normalsize.
The following (unified) theorem states our main result for both discrete and continuous time systems.\\

\begin{theorem}
If assumptions (A1)-(A5) are satisfied, then with probability $1-\delta$, our algorithm outputs a pair $(\hA,\hL)$ satisfying

\noindent {\bf (a) Subset Support Recovery: } $\supp(\hA) \subset \supp(A^*).$ 

\noindent {\bf (b) Error Bounds:} $$\|\hA-A^*\|_\infty\leq\nu\lambda_A \quad\text{and}\quad \|\hL-L^*\|_2\leq\frac{\rho_0}{1-5\rho_0}\|L^*\|_2.$$

\noindent {\bf (c) Exact Signed Support Recovery:} If additionally we have that the smallest magnitude $A_{min}$ of a non-zero element of $A^*$ satisfies $A_{min} > \nu \lambda_A$, then we obtain full signed-support recovery $\sgn(\hA)=\sgn(A^*)$.\\
\label{thr:main}
\end{theorem}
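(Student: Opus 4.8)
The plan is to establish this as a primal-dual certificate argument for the convex program \eqref{eq:opt-orig}. The structure is standard for sparse-plus-low-rank recovery (à la \cite{CHEJALSANCAR,CHAPARWIL}) but complicated here by two things: the design is the correlated state process rather than iid Gaussians, and the ``noise'' in the effective linear model is a martingale driven by the Brownian term coupled to the latent series. I would work entirely in the discrete setting \eqref{disc_system} and recover the continuous claim by taking $\eta \to 0$ at the end, as the excerpt promises. The loss in \eqref{eq:opt-orig} is convex and separates as a function of $M := A+L$, so I would first write its gradient at the true sum $A^*+L^*$ and identify the stochastic-gradient / noise term $\EstDual$ whose entries are (normalized) sums $\frac{1}{\eta n}\sum_i x(i)\,\xi(i)^T$, where $\xi(i)$ collects the increment noise plus the marginalized latent contribution $B^* u(i)$.

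\textbf{Construction of a dual certificate via primal-dual witness.} I would run the primal-dual witness construction: fix the support of $\hA$ to $\supp(A^*)$ (forcing $\hA_{\S^c}=0$), solve the resulting restricted program for $(\hA,\hL)$, and then verify that the omitted stationarity conditions hold strictly, so that the restricted solution is in fact the unique optimum of the full program. Concretely, stationarity requires the existence of subgradients $\hat Z_A \in \partial\|\hA\|_1$ and $\hat Z_L \in \partial\|\hL\|_*$ with
\begin{equation}
\grad \L(\hA+\hL) + \lambda_A \hat Z_A = 0, \qquad \grad \L(\hA+\hL) + \lambda_L \hat Z_L = 0. \nonumber
\end{equation}
Because $\grad \L$ depends only on $M=A+L$, the two conditions pin down $\lambda_A \hat Z_A = \lambda_L \hat Z_L$, which is the crux: on $\supp(A^*)$ the sign pattern of $\hA$ must be consistent with a valid nuclear-norm subgradient of $\hL$, while off-support I must certify $\|\hat Z_A\|_\infty < 1$ (strict dual feasibility for $\ell_1$) and $\|\hat Z_L\|_2 < 1$ (for the nuclear norm) simultaneously. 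The incoherence assumption (A2), through $\alpha = 3\sqrt{\mu r/p} < 1$, is exactly what decouples the sparse and low-rank subgradient geometries enough to keep both strictly feasible; assumption (A3), through $\theta$, controls the cross-talk $Q^*_{\S^c\S}(Q^*_{\S\S})^{-1}$ when inverting the restricted Hessian, in direct analogy to the LASSO irrepresentability condition.

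\textbf{Deterministic reduction, then concentration.} I would reduce parts (a)--(c) to a handful of deterministic inequalities of the form ``if $\|\EstDual\|_\infty$ and $\|\EstDual\|_2$ are below explicit multiples of $\lambda_A$ and $\lambda_L$, and the empirical covariance is close to $Q^*,R^*$ in the relevant norms, then the witness succeeds.'' The error bounds in (b) come out of this same analysis: solving the restricted stationarity equations expresses $\hA-A^*$ and $\hL-L^*$ linearly in $\EstDual$ and $\lambda_A,\lambda_L$, and bounding those linear maps (using $\mathcal{C}_{\min},\mathcal{D}_{\max}$ and $\theta$) yields the stated $\nu\lambda_A$ and $\frac{\rho_0}{1-5\rho_0}\|L^*\|_2$ quantities. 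Part (c) is then immediate: once $\|\hA-A^*\|_\infty \le \nu\lambda_A < A_{\min}$, no true nonzero entry can be killed, and (a) already forbids spurious ones, so the signs match.

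\textbf{The main obstacle} is the probabilistic step: controlling $\|\EstDual\|_\infty$ and $\|\EstDual\|_2$ and the deviation of the empirical second moments from their steady-state values $Q^*,R^*$. Unlike the iid case, the summands $x(i)\xi(i)^T$ are strongly dependent, with dependence sharpened as $\eta \to 0$, so I cannot invoke off-the-shelf Bernstein bounds. I would instead exploit the geometric mixing guaranteed by the stability assumption (A1) (the bound $D = (1-\Sigma_{\max}^2)/\eta > 0$ gives an explicit contraction rate), decompose each entry into a martingale part plus a bias from transient non-stationarity, and apply a martingale concentration inequality whose effective ``number of independent samples'' scales like the time horizon $T = n\eta$ rather than $n$. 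This is precisely why (A5) is stated in terms of $T$ and not $n$, and why the constant $m$ in (A4) carries the $\|B^*\|_{\infty,1}/\sqrt{D}$ factor absorbing the latent contribution. Getting the union bound over all $O((s+2r)p+r^2)$ relevant coordinates to close with the logarithmic sample complexity in (A5), while simultaneously handling the spectral-norm deviation for the low-rank block, is where the careful bookkeeping lives; everything else is convex-analytic routine once those concentration events are granted.
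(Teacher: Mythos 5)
Your high-level plan (a support-restricted program plus dual-certificate verification, a deterministic reduction followed by concentration governed by the time horizon $T=n\eta$ rather than $n$) is the same skeleton the paper uses, and your observation that the loss depends only on $M=A+L$, so that stationarity forces $\lambda_A \hat Z_A = \lambda_L \hat Z_L$, is correct and matches the paper's single dual matrix $\EstDual$. However, there is a genuine gap in your witness construction: you restrict only the sparse component (forcing $\hA_{\S^c}=0$) and leave the low-rank component completely free. The paper's oracle problem \eqref{eq:opt-oracle} imposes a second constraint, $\rho(\T(L),\T)\leq\rho_0$, forcing the tangent space of the candidate $\tL$ to stay within distance $\rho_0$ of the true tangent space $\T=\T(L^*)$. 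This is not a technical convenience. All of the incoherence and concentration machinery --- assumption \textbf{(A2)}, the bounds on $\P_{\T^c}$, and the Schur-complement perturbation bound of \cite{STE} used to verify \textbf{(C2)} --- is stated relative to the \emph{true} geometry $\T$ and $U^*V^{*T}$. To certify strict nuclear-norm dual feasibility at your restricted solution you must check $\left\|\P_{\T(\hL)^c}(\hat Z)\right\|_2 < \lambda_L$ with respect to $\hL$'s \emph{own} tangent space, and in your formulation nothing ties $\T(\hL)$ to $\T$: the restricted solution's row and column spaces could be badly misaligned with those of $L^*$, and the verification step fails. Relatedly, the quantity $\rho_0$ appearing in the theorem's bound $\|\hL-L^*\|_2\leq\frac{\rho_0}{1-5\rho_0}\|L^*\|_2$ has no origin in your construction; in the paper it is precisely the radius of the oracle constraint, and the bound is derived in Lemma~\ref{lem:tildecond} by combining that constraint with condition \textbf{(C3)}.

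The second, related, missing idea is that the paper's sufficient conditions \textbf{(C1)}--\textbf{(C5)} are deliberately \emph{not} the exact stationarity conditions you write down. Condition \textbf{(C3)} only requires $\left\|\P_\T(\EstDual)-\lambda_L U^*V^{*T}\right\|_2\leq 4\rho_0\lambda_L$, i.e., the certificate need only \emph{approximately} agree with a nuclear-norm subgradient measured at the true $L^*$, with slack proportional to the oracle radius. This relaxation (inspired by \cite{CANLIMAWRI}) is what the authors flag as their key novelty: it lets them certify the exact signed support of $A^*$ while certifying only an approximation of $L^*$, which is why the sparse part is recoverable with $O(s^3\log p)$ samples even though pinning down the low-rank part exactly would require far more. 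Your exact-stationarity formulation would instead force you to control $\widehat{U}\widehat{V}^T$ itself, which is not possible at this sample complexity. To repair your argument you would need to (i) add the tangent-space constraint to the restricted program and (ii) re-derive sufficient optimality conditions carrying slack relative to the true geometry --- at which point you have reproduced Lemmas~\ref{LemWitnessOptCond} and~\ref{lem:tildecond} of the paper.
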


{\bf Note:} Note that $\lambda_A$, as defined in {\bf (A4-1)}, depends on the sample complexity $T$, and goes to $0$ as $T$ becomes large. Thus it is possible to get exact signed support recovery by making $T$ large.

{\bf Remark 1:} Our result shows that, in sparse  and low-rank decomposition for latent variable modeling, recovery of only the sparse component seems to be possible with much fewer samples -- $O(s^3 \log p)$ -- as compared to, for example, the recovery of the exact rank of the low-rank part; the latter was show to require $\Theta(p)$ samples in \cite{CHAPARWIL}. 

%the significant difference in sample complexity required for two objectiv
%
%has a significant difference with other results on sparse and low-rank decomposition. In a typical sparse and low-rank decomposition, the number of observations has the same order as number of variables, i.e., $n=\Theta(p)$, mainly because the typical goal is to recover the low-rank component (either the rank \cite{CHAPARWIL} or values \cite{CANLIMAWRI}) accurately. In contrast, we are interested primarily on the sign support recovery of the sparse component. Our result shows that we can significantly reduce the number of observation to $n=\Theta(\log(p))$ for sparse component recovery.\\

{\bf Remark 2:} The above theorem shows that, even in the presence of latent variables, our algorithm requires a similar number of samples (i.e. upto universal constants) as previous work \cite{BENIBRMON} required in the absence of hidden variables. Of course, this is true as long as identifiability {\bf (A2)} holds. Note that the absence of such identifiability conditions makes even simple sparse and low-rank matrix decomposition \cite{CHASANPARWIL} ill-posed. Note also that the quantity $\rho_0$, which characterizes the error in the low-rank term, goes to 0 as $T$ increases (which decreases $\lambda_A$). 

{\bf Remark 3:} Although our theoretical result shows a scaling proportional $s^3$ for the sample complexity, the theoretical result suggests that the correct scaling factor is $s^2$. We suspect our result as well as \citet{BENIBRMON}, can be tightened and we are currently working on that. 

{\bf Illustrative Example:} Consider a simple idealized example that helps give intuition about the above theorem. Suppose that we are in the continuous time setting, where each latent variable $j$ depends only on its own past, updating according to 
$\frac{dx_j}{dt} = - x_j(t) + \frac{dw_j}{dt}$ 
and for each observed variable $i$ depends only on its own past and a {\em unique} latent variable $j(i)$, i.e.,  
$\frac{dx_i}{dt} = - x_i(t) + x_{j(i)}(t) + \frac{dw_i}{dt}$.
There are $r$ latent variables, and assume that each latent variable affects exactly $\frac{p}{r}$ observed variables in this way.

In terms of the matrix $\mathcal{A}^*$, the overall (observed + latent) system has the form given by the matrix below

\begin{figure}[h]
\centering
\includegraphics[width=0.28\linewidth]{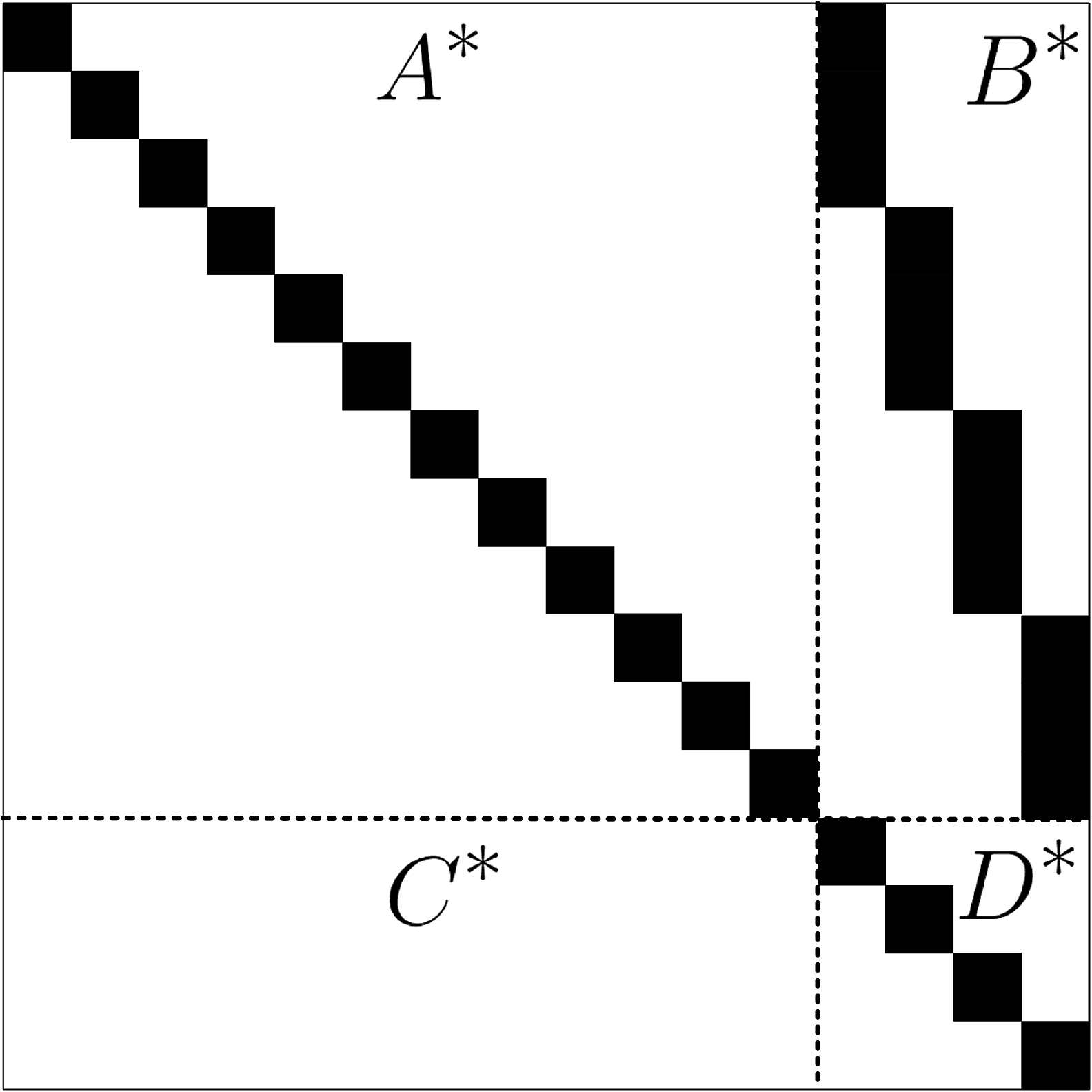}
\end{figure}

Here $A^* = - I_{p\times p}$, $C^* = 0$, and $D^* = - I_{r\times r}$. This matrix satisfies stability assumption {\bf (A1)}. In the matrix $B^*$ , each column has exactly $\frac{p}{r}$ entries that are $1$, and the remaining are $0$. Each row of $B^*$ has exactly one entry that is $1$, and the remaining are $0$; note that the columns of $B$ are orthogonal. We start from zero initial condition with $\eta=0$ (continuous time system). With this, $D=2$ and $\|B^*\|_{\infty,1}=1$.  

For this idealized setting, we can exactly evaluate all the quantities we need. In particular, it is not hard to show (done in Appendix) that the steady-state covariance matrices are $Q^* = \frac{1}{2}(I + BB^T)$ and $R^* = B^{*T}$. The resulting low-rank matrix is $L^* = \frac{r}{p+r} BB^T$, which gives $U = V = \sqrt{\frac{r}{p}}B$; the incoherence parameter $\mu = r$, and hence we need $r < \sqrt{p} / 3$ by assumption {\bf (A2)}. Moreover, we can show that $\theta =\frac{1}{2}$ for this example and hence the assumption {\bf (A3)} is also satisfied.  

Similarly. evaluating the other parameters in Theorem \ref{thr:main}, we get that the observation time should be $T \geq K s^3 \log \frac{4(1+2r)p + 4r^2}{\delta}$ for structure recovery with probability greater than $\delta$. In this case, we also have $\nu =\frac{3r}{4\sqrt{p}}+\frac{25\sqrt{s}}{7}$ and $\rho_0 = \frac{1}{5+\frac{32\sqrt{p}}{3r\lambda_A}}$ providing the error bounds $\|A^*-\hat{A}\|_\infty\leq \left(\frac{3r}{4\sqrt{p}}+\frac{25\sqrt{s}}{7}\right)\lambda_A$ and $\|L^*-\hat{L}\|_2\leq \frac{3r}{32\sqrt{p}}\lambda_A$.

\section{Proof of the Theorem}
\label{sec:proof}
\noindent In this section, we first introduce some notations and definitions and then, provide a three step proof technique to prove the main theorem for the discrete time system. The proof of the continuous time system is done via a coupling argument in the appendix.

Before we proceed to the details, we would like to make a high level technical remark on the novelties of our proof. There are two key novel ingredients in the proof enabling us to get the low sample complexity result in our theorem. The first ingredient comes from our new set of optimality conditions inspired by \cite{CANLIMAWRI}. This optimality conditions enable us to certify an approximation of $L^*$ while certifying the exact sign support of $A^*$. The second ingredient comes from the bounds on the Schur complement of the perturbation of positive semi-definite matrices \cite{STE}. This result enables us to get a bound on the Schur complement of a perturbation of a positive semi-definite matrix of size $p$ with only $\log(p)$ samples. 

Given a matrix $A^*$, let $\O$ be the subspace of matrices whose their support is a subset of the matrix $A^*$. The orthogonal projection of a matrix $M$ to $\O$ is denoted by $\P_\O(M)$. Denote the orthogonal complement space with $\O^c$ with orthogonal projection $\P_{\O^c}(M)$.

For any matrix $L\in\real^{p\times p}$, if the SVD is $L = U\Sigma V^T$, then let $\T(L) := \{M | M = UX^T + YV^T \text{for some $X,Y$}\}$ denote the subspace spanned by all matrices that have the same column space or row space as $L$. The orthogonal projection of a matrix $N$ to $\T$ is denoted by $\P_\T(N)$. Denote the orthogonal complement space with $\T^c$ with orthogonal projection $\P_{\T^c}$. We define a metric to measure the \emph{closeness} of two subspaces $\T_1$ and $\T_2$ as follows
$$\rho\left(\T_1,\T_2\right) = \max_{N\in\real^{p\times p}}\, \frac{\|\P_{\T_1}(N) - \P_{\T_2}(N)\|_2}{\|N\|_2}.$$ Finally, let $\T=\T(L^*)$ to shorten the notation and $L^*=U^*\Sigma^*V^*$ be a singular value decomposition.

The proof steps are as follows:
\begin{itemize}
\item {\bf STEP 1:} We construct a candidate primal optimal solution $(\tA,\tL)$ with the desired sparsity pattern using the restricted support optimization problem. We refer to this as \emph{oracle problem}:
\small\begin{equation}
\begin{aligned}
(\tA,\tL)\! &=\arg \min_{L:\rho(\T(L),\T)\leq\rho_0\atop{A:\P_{\O^c}(A)=0}}\frac{1}{2\eta^2n}\sum_{i=0}^{n-1} \left\|x(i+1)\!-\!x(i)\!-\!\eta(A+L)x(i)\right\|_2^2\\ &\qquad\qquad\qquad\qquad\qquad\qquad\qquad\qquad\qquad\qquad+\lambda_A\|A\|_1\!+\!\lambda_L\|L\|_*.
\end{aligned}
\label{eq:opt-oracle}
\end{equation}\normalsize
This oracle is similar to the one used in \cite{CHAPARWIL}. Note that this is a proof technique, not a method to construct the solution.\\

\item {\bf STEP 2:} We Write down a novel set of sufficient (stationary) optimality conditions for $(\tA,\tL)$ to be the unique solution of the (unrestricted) optimization problem~\eqref{eq:opt-orig}:

\begin{lemma}\label{LemWitnessOptCond}
If $\O\cap\T=\{0\}$, then $(\tA,\tL)$, the solution to the oracle problem~\eqref{eq:opt-oracle}, is the unique solution of the problem \eqref{eq:opt-orig} if there exists a matrix $\EstDual\in\real^{p\times p}$ such that\\

\small
\noindent {\bf (C1)} $\P_\O(\EstDual)=\lambda_A\sgn\left(\tA\right)$. $\qquad\qquad\qquad${\bf (C2)} $\left\|\P_{\O^c}(\EstDual)\right\|_{\infty} < \lambda_A$.\\

\noindent {\bf (C3)} $\left\|\P_\T(\EstDual)-\lambda_LU^*V^{*T}\right\|_2 \leq 4\rho\lambda_L$. $\qquad${\bf (C4)} $\left\|\P_{\T^c}(\EstDual)\right\|_2< (1-\alpha)\lambda_L$.\\

\noindent {\bf (C5)} $-\underbrace{\frac{1}{\eta n} \sum_{i=1}^n \left(x(i+1)-x(i)-\eta(\tA+\tL)x(i)\right)x(i)^T}_{J_n} + \EstDual = 0$.\normalsize\\
\end{lemma}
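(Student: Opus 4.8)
The plan is to prove the lemma by a primal–dual (dual certificate) argument, showing directly that the oracle solution $(\tA,\tL)$ is the \emph{unique} global minimizer of the convex objective
$$g(A,L) := \frac{1}{2\eta^2 n}\sum_{i=0}^{n-1}\left\|x(i+1)-x(i)-\eta(A+L)x(i)\right\|_2^2 + \lambda_A\|A\|_1 + \lambda_L\|L\|_*.$$
Writing an arbitrary competitor as $(\tA+\Delta_A,\tL+\Delta_L)$, I would establish $g(\tA+\Delta_A,\tL+\Delta_L)-g(\tA,\tL)>0$ for every $(\Delta_A,\Delta_L)\neq(0,0)$; the matrix $\EstDual$ and conditions (C1)–(C5) are exactly the ingredients that make this gap strictly positive. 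For the smooth part, a direct differentiation gives that the loss gradient at $(\tA,\tL)$ equals $-J_n$ in both the $A$ and $L$ coordinates, so by (C5) it equals $-\EstDual$. Since the loss sees $A,L$ only through $M=A+L$, its curvature in the direction $\Delta_M:=\Delta_A+\Delta_L$ is $\tr{\Delta_M\widehat Q}{\Delta_M}$ with $\widehat Q=\tfrac1n\sum_i x(i)x(i)^T$; the loss is therefore convex, and strictly convex in $\Delta_M$ on the (high-probability) event $\widehat Q\succ0$. I would lower-bound the loss increment by its first-order term $\tr{-\EstDual}{\Delta_A+\Delta_L}$, reserving the strict-convexity surplus for the endgame.

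For the sparse part, the subgradient inequality $\|\tA+\Delta_A\|_1-\|\tA\|_1\geq \tr{\sgn(\tA)}{\Delta_A}+\|\P_{\O^c}(\Delta_A)\|_1$ combined with (C1), namely $\P_\O(\EstDual)=\lambda_A\sgn(\tA)$, cancels the on-support first-order terms and leaves the nonnegative residual $\left(\lambda_A-\|\P_{\O^c}(\EstDual)\|_\infty\right)\|\P_{\O^c}(\Delta_A)\|_1$, which is strictly positive whenever $\P_{\O^c}(\Delta_A)\neq0$ by the strict bound (C2).

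The crux is the low-rank part. Using $\|\tL+\Delta_L\|_*-\|\tL\|_*\geq \tr{\tU\tV^T}{\Delta_L}+\|\P_{\T(\tL)^c}(\Delta_L)\|_*$, the combined first-order-plus-norm contribution becomes
$$\tr{\lambda_L\tU\tV^T-\P_{\T(\tL)}(\EstDual)}{\P_{\T(\tL)}(\Delta_L)} + \Big(\lambda_L-\|\P_{\T(\tL)^c}(\EstDual)\|_2\Big)\|\P_{\T(\tL)^c}(\Delta_L)\|_*.$$
The difficulty is that (C3)–(C4) constrain $\EstDual$ relative to $\T=\T(L^*)$ and $U^*V^{*T}$, whereas the nuclear-norm subgradient at $\tL$ lives on $\T(\tL)$ and uses $\tU\tV^T$. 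I would bridge this via the oracle constraint $\rho(\T(\tL),\T)\leq\rho_0$ together with the metric $\rho$, which yields $\|(\P_{\T(\tL)}-\P_\T)(N)\|_2\leq\rho_0\|N\|_2$ for all $N$, plus the companion bound $\|\tU\tV^T-U^*V^{*T}\|_2=O(\rho_0)$. Feeding these in: from (C3) and $\|U^*V^{*T}\|_2=1$ one gets $\|\EstDual\|_2\leq(2-\alpha+4\rho)\lambda_L$, whence (C4) and the projection-perturbation bound give $\|\P_{\T(\tL)^c}(\EstDual)\|_2\leq(1-\alpha)\lambda_L+\rho_0\|\EstDual\|_2<\lambda_L$, making the second term nonnegative and strict when $\P_{\T(\tL)^c}(\Delta_L)\neq0$; meanwhile (C3) and the $\rho_0$-order slacks make $\|\lambda_L\tU\tV^T-\P_{\T(\tL)}(\EstDual)\|_2$ small enough that the (possibly signed) first term is dominated. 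The factor $4\rho$ in (C3) and the factor $(1-\alpha)$ in (C4) are calibrated precisely to absorb these $\rho_0$-order perturbations, so I expect this transfer from $\T(L^*)$ to $\T(\tL)$ — and the attendant bookkeeping of the slack constants against $\rho_0\leq\alpha/4$ — to be the main obstacle.

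Finally, the endgame assembles the three contributions. If $\Delta_M\neq0$, the strict-convexity surplus of the loss alone forces a strict increase. If $\Delta_M=0$, i.e. $\Delta_A=-\Delta_L$, the loss increment vanishes and I rely on the strict slack terms: if either $\P_{\O^c}(\Delta_A)\neq0$ or $\P_{\T(\tL)^c}(\Delta_L)\neq0$ the gap is already strictly positive, and in the remaining case $\Delta_A\in\O$ and $\Delta_L\in\T(\tL)$ with $\Delta_A=-\Delta_L$ force $\Delta_A\in\O\cap\T(\tL)$. Invoking the hypothesis $\O\cap\T=\{0\}$, which persists as $\O\cap\T(\tL)=\{0\}$ since $\rho_0<1$, yields $\Delta_A=\Delta_L=0$. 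Thus the objective gap is strictly positive for every nonzero perturbation, establishing both global optimality of $(\tA,\tL)$ for \eqref{eq:opt-orig} and its uniqueness.
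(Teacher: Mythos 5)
You take the same overall route as the paper: a dual-certificate/subgradient argument in which (C1)--(C2) handle the $\ell_1$ terms, (C3)--(C4) are meant to control the nuclear-norm terms, and (C5) identifies $\EstDual$ with the negative loss gradient (the paper phrases it as a comparison against a hypothetical optimum $(\hA,\hL)$ rather than against an arbitrary perturbation, which is equivalent). The gap is exactly at the step you yourself flag as the main obstacle. You propose to bound $\bigl\|\lambda_L\tU\tV^T-\P_{\T(\tL)}(\EstDual)\bigr\|_2=O(\rho_0\lambda_L)$ and argue that the resulting signed first-order term is ``dominated.'' It cannot be: the only strictly positive quantities available are the curvature surplus, which is \emph{quadratic} in $\Delta_M=\Delta_A+\Delta_L$, and the slack terms from (C2)/(C4), which \emph{vanish} for perturbations with $\Delta_A\in\O$ and $\Delta_L\in\T(\tL)$. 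Concretely, take $\Delta_A=0$ and $\Delta_L=tN$ with $N=\P_{\T(\tL)}(\EstDual)-\lambda_L\tU\tV^T\in\T(\tL)$. Then the change in the objective of \eqref{eq:opt-orig} is
\[
-t\left\|\P_{\T(\tL)}(\EstDual)-\lambda_L\tU\tV^T\right\|_F^2+o(t),
\]
which is negative for small $t>0$ whenever $\P_{\T(\tL)}(\EstDual)\neq\lambda_L\tU\tV^T$. So if that identity fails even by an $O(\rho_0\lambda_L)$ amount, $(\tA,\tL)$ is not even a stationary point of \eqref{eq:opt-orig} and the conclusion is false; no calibration of constants against $\rho_0\leq\alpha/4$ can repair a linear-versus-quadratic comparison.

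The ingredient you are missing is that this identity holds \emph{exactly}, and it comes not from (C1)--(C5) nor from oracle \emph{feasibility} (the constraint $\rho(\T(\tL),\T)\leq\rho_0$, which is all you use), but from the first-order \emph{stationarity} of the oracle problem, which $(\tA,\tL)$ satisfies by definition: optimality of \eqref{eq:opt-oracle} over directions inside $\T(\tL)$ forces $\P_{\T(\tL)}(J_n)=\lambda_L\tU\tV^T$, hence $\P_{\T(\tL)}(\EstDual)=\lambda_L\tU\tV^T$ by (C5). This is precisely what the paper's proof invokes: it builds explicit subgradients $Z_A=\lambda_A\sgn(\tA)-F$ and $Z_L=\lambda_LU^*V^{*T}+W_1-W_0=\lambda_L\tU\tV^T+W_2$ (legitimate by the bounds of Lemma~\ref{lem:W-existence}), and then uses oracle stationarity together with the Step-3 construction ($\P_\T(\EstDual)=\P_\T(\lambda_L\tU\tV^T)$, Lemma~\ref{lem:TtTConnection}) to make the cross terms on $\O$ and on $\T$ vanish identically, leaving only nonnegative slack terms that are strict off the supports. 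Once you replace your approximate transfer by these exact equalities, your endgame essentially goes through; note, however, a second smaller gap there: $\O\cap\T=\{0\}$ does not ``persist'' to $\O\cap\T(\tL)=\{0\}$ merely because $\rho_0<1$ --- an element of $\O$ can lie within angular distance $\rho_0$ of $\T$ without the two subspaces intersecting --- so this step needs the quantitative transversality afforded by $\alpha<1$ (as in \cite{CHEJALSANCAR}), not just the set-theoretic hypothesis.
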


Upon existence of $\EstDual$, the solution of the oracle problem not only is the solution to the original problem~\eqref{eq:opt-orig}, but also satisfies the claim of the theorem. 
\begin{lemma}
Provided $\EstDual$ in Lemma~\ref{LemWitnessOptCond} exists, we have
\begin{itemize}
\item [(a)] $\supp(\tA) \subset \supp(A^*)$.
\item [(b)] $\|\tA - A^*\|_\infty\leq \nu \lambda_A$ and $\|\tL - L^*\|_2\leq \frac{\rho_0}{1-5\rho_0}\|L^*\|_2$
\item [(c)] If $A_{min} > \nu \lambda_A$ then $\sgn(\tA) = \sgn(A^*)$.
\end{itemize}
\label{lem:tildecond}
\end{lemma}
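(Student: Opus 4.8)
Parts (a) and (c) are quick consequences, so my plan centers on the error bounds in (b). Part (a) is immediate from feasibility of the oracle problem~\eqref{eq:opt-oracle}: the constraint $\P_{\O^c}(A)=0$ forces $\tA\in\O$, i.e. $\supp(\tA)\subset\supp(A^*)$. Part (c) then follows from (a) together with the $\ell_\infty$ bound in (b): on the support of $A^*$ every entry of $\tA$ lies within $\nu\lambda_A<A_{min}$ of the corresponding entry of $A^*$, so no sign can flip, while off the support $\tA$ vanishes by (a); hence $\sgn(\tA)=\sgn(A^*)$.

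For (b) I would first turn the stationarity condition {\bf (C5)} into a usable ``master equation.'' Writing the quadratic loss in terms of $M=A+L$, its gradient is $\grad_M f(M)=-\widehat{\Gamma}+M\widehat{Q}$, where $\widehat{Q}=\frac1n\sum_i x(i)x(i)^T$ and $\widehat{\Gamma}=\frac1{\eta n}\sum_i (x(i+1)-x(i))x(i)^T$, so {\bf (C5)} reads $\EstDual=\widehat{\Gamma}-(\tA+\tL)\widehat{Q}$. The population identity of Lemma~\ref{lem:idea} says the steady-state analogue of this gradient vanishes at $A^*+L^*$, i.e. $\Gamma^*=(A^*+L^*)Q^*$. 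Subtracting, I obtain
\begin{equation}
\EstDual \;=\; W-(\Delta_A+\Delta_L)\widehat{Q},\qquad W:=(\widehat{\Gamma}-\Gamma^*)-(A^*+L^*)(\widehat{Q}-Q^*),
\nonumber
\end{equation}
where $\Delta_A:=\tA-A^*\in\O$ and $\Delta_L:=\tL-L^*$. This isolates a mean-zero ``noise'' matrix $W$ whose relevant norms must be shown to be small. The next step is concentration: using {\bf (A5)} I would show that $\|\P_\O(W)\|_\infty$, the spectral norm $\|W\|_2$, and the deviation of $\widehat{Q}$ from $Q^*$ on the supports are all controlled by a constant multiple of $\lambda_A$ (the constant being absorbed into the definition of $\lambda_A$ in {\bf (A4-1)}). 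The difficulty peculiar to this model is that the samples $x(i)$ are strongly dependent (more so as $\eta\to0$), so ordinary i.i.d. concentration is unavailable; this is exactly where the Schur-complement-of-a-PSD-perturbation bound advertised in the proof outline enters, letting me control the inverse restricted Hessian $(\widehat{Q}_{\S_\svert\S_\svert})^{-1}$ while keeping the horizon requirement at $O(\log p)$ rather than $O(p)$.

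Finally comes the decoupling of the two errors, which I expect to be the main obstacle. Because the loss is row-separable but the nuclear norm couples rows through $\widehat{Q}$, the term $(\Delta_A+\Delta_L)\widehat{Q}$ entangles the sparse and low-rank errors, and the projections $\P_\O,\P_\T$ do not commute with right-multiplication by $\widehat{Q}$. For the sparse bound I would project the master equation onto $\O$, invert the well-conditioned block $\widehat{Q}_{\S_\svert\S_\svert}$ (condition number controlled by $\mathcal{C}_{\min},\mathcal{D}_{\max}$), and bound the right-hand side using {\bf (C1)}--{\bf (C2)} for the certificate part, the concentration bound on $W$, the incoherence {\bf (A3)} (through $\theta$) for off-support leakage, and {\bf (A2)} (through $\alpha$, which makes $\|U^*V^{*T}\|_\infty$ small) for the low-rank cross term; the $\ell_1$-to-$\ell_\infty$ conversion contributes the $\sqrt{s}$ factor, yielding exactly $\|\Delta_A\|_\infty\le\nu\lambda_A$ with $\nu=\frac{\alpha\theta}{2\mathcal{D}_{\max}}+\frac{(8-\theta)\sqrt{s}}{\mathcal{C}_{\min}(4-\theta)}$.

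For the low-rank bound I would split $\Delta_L=\P_\T(\Delta_L)+\P_{\T^c}(\Delta_L)$. The component off $\T$ is controlled directly by the oracle constraint $\rho(\T(\tL),\T)\le\rho_0$: since $L^*\in\T$ and $\tL\in\T(\tL)$, one has $\P_{\T^c}(\Delta_L)=(\P_{\T(\tL)}-\P_\T)(\tL)$, giving $\|\P_{\T^c}(\Delta_L)\|_2\le\rho_0\|\tL\|_2\le\rho_0(\|L^*\|_2+\|\Delta_L\|_2)$; the component in $\T$ is controlled through {\bf (C3)}--{\bf (C4)} and the master equation. Collecting the several $\rho_0$-proportional terms and solving the resulting self-bounding inequality for $\|\Delta_L\|_2$ produces the stated $\frac{\rho_0}{1-5\rho_0}\|L^*\|_2$. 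The crux throughout is preventing the sparse and low-rank contributions from inflating one another, which is precisely what identifiability {\bf (A2)} and incoherence {\bf (A3)} are engineered to guarantee.
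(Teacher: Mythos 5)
Your proposal is correct and follows essentially the same route as the paper: parts (a) and (c) are handled identically, your ``master equation'' $\EstDual = W - (\Delta_A+\Delta_L)\widehat{Q}$ is exactly the paper's equation \eqref{eq:err-bnd} with $W = Y^{(n)}+W^{(n)}$ repackaged, the sparse bound is obtained the same way (row-wise restriction to $\S_\svert$, inversion of $Q^{(n)}_{\S_\svert\S_\svert}$, concentration plus incoherence, yielding the same $\nu$), and the low-rank bound uses the same ingredients (oracle constraint $\rho_0$, condition {\bf (C3)}, and a self-bounding inequality giving $\frac{\rho_0}{1-5\rho_0}\|L^*\|_2$). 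The only differences are cosmetic: you split $\Delta_L$ orthogonally via $\P_\T,\P_{\T^c}$ where the paper uses a telescoping triangle inequality, which is the same argument in different notation.
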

Part (a) is immediate by the constraints of the oracle problem and provided the $\ell_\infty$ bound in (b) and part (a), the result of part (c) naturally follows. We prove part (b) in the Appendix~\ref{sec:lemtilde}. Now, it suffices to construct a dual variable $\EstDual$.\\

\item{\bf STEP 3:} Constructing a dual variable $\EstDual$ that satisfies the sufficient optimality conditions stated in Lemma~\ref{LemWitnessOptCond}. First notice that under assumption {\bf (A2)}, we have $\O\cap\T=\{0\}$ \cite{CHEJALSANCAR}. For matrices $M\in\O$ and $N\in\T$, let
\small\begin{equation}
\begin{aligned}
\H_M &= M - \P_\T(M) + \P_\O\P_\T(M) - \P_\T\P_\O\P_\T(M) + \ldots\\
\G_N &= \,N - \,\P_\O(N) + \,\P_\T\P_\O(N) - \,\P_\O\P_\T\P_\O(N) + \ldots.
\end{aligned}
\nonumber
\end{equation}\normalsize
It has been shown in \cite{CHEJALSANCAR} that if $\alpha<1$ then both infinite sums converge. Suppose we have the SVD decomposition $\tL=\tU\tSigma\tV^T$. Let
\begin{equation}
\EstDual = \H_{\lambda_A\sgn(\tA)} + \G_{\P_\T(\lambda_L\tU\tV^T)} + \Delta,
\nonumber
\end{equation}
where, $\Delta$ is a matrix such that {\bf (C5)} is satisfied. As a result of our construction, we have $\P_\O(\EstDual-\Delta)=\lambda_A\sgn(\tA)$ and by optimality of $(\tA,\tL)$, we have $\P_\O(J_n)=\lambda_A\sgn(\tA)$. This entails that $\P_\O(\Delta)=0$ and hence {\bf (C1)} is satisfied.\\

To show {\bf (C3)} holds, we need the next lemma.
\begin{lemma}
$\P_\T(J_n)=\P_\T(\lambda_L\tU\tV^T)$.
\label{lem:TtTConnection}
\end{lemma}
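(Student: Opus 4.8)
The plan is to extract Lemma~\ref{lem:TtTConnection} from the first-order stationarity of the oracle problem~\eqref{eq:opt-oracle} in the low-rank variable. Freezing $A=\tA$ at its oracle value, $\tL$ minimizes $\Phi(L)=\frac{1}{2\eta^2 n}\sum_{i=0}^{n-1}\|x(i+1)-x(i)-\eta(\tA+L)x(i)\|_2^2+\lambda_L\|L\|_*$ over the feasible set $F=\{L:\rho(\T(L),\T)\le\rho_0\}$. Differentiating the quadratic part shows that its gradient at $\tL$ is precisely $-J_n$, with $J_n$ as in {\bf (C5)}; hence the smooth part of $\Phi$ contributes $\tr{-J_n}{N}$ to the directional derivative along any direction $N$. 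The rest is a matter of writing down the correct feasible first-order condition.

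Next I would record the nuclear-norm subdifferential at $\tL=\tU\tSigma\tV^T$, namely $\{\tU\tV^T+W:\P_{\T(\tL)}(W)=0,\ \|W\|_2\le 1\}$. Its component in the manifold tangent space $\T(\tL)$ is the single matrix $\tU\tV^T$, and, crucially, along any direction $N\in\T(\tL)$ the map $L\mapsto\|L\|_*$ is in fact differentiable, with derivative $\tr{\tU\tV^T}{N}$. Thus along tangent directions the penalty is smooth and the two-sided directional derivative of $\Phi$ is well defined.

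The core step is a feasible-perturbation argument. For $N\in\T(\tL)$ the curve $\tL+tN$ stays, to first order, on the rank-$r$ manifold, on which $L\mapsto\P_{\T(L)}$ is smooth, so $\T(\tL+tN)\to\T(\tL)$ and $\rho(\T(\tL+tN),\T)\to\rho(\T(\tL),\T)\le\rho_0$ as $t\to0$; hence $\tL+tN$ is feasible for all small $|t|$ in both signs. Optimality of $\tL$ then forces the two-sided derivative to vanish, i.e. $\tr{-J_n+\lambda_L\tU\tV^T}{N}=0$ for every $N\in\T(\tL)$, which is the clean identity $\P_{\T(\tL)}(J_n)=\lambda_L\tU\tV^T$. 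Projecting this onto the reference space $\T$ and using $\tU\tV^T\in\T(\tL)$ then produces the relation $\P_\T(J_n)=\P_\T(\lambda_L\tU\tV^T)$ stated in the lemma.

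I expect the main obstacle to be precisely the alignment between the tangent space $\T(\tL)$ at the oracle point, where the optimality identity naturally lives, and the fixed reference space $\T=\T(L^*)$ appearing in the statement. Two sub-points need care: (i) justifying that moving along $\T(\tL)$ respects the constraint $\rho(\T(L),\T)\le\rho_0$ even when that bound is active, which rests on $\left.\frac{d}{dt}\rho(\T(\tL+tN),\T)\right|_{t=0}=0$ for $N\in\T(\tL)$, i.e. on the fact that tangent-direction motion does not change $\T(L)$ to first order; and (ii) transporting the $\T(\tL)$-level equality to the fixed $\T$, which uses the $\rho_0$-closeness $\rho(\T(\tL),\T)\le\rho_0<1$ supplied by the oracle constraint (and which is exactly what later feeds the slack $4\rho\lambda_L$ in {\bf (C3)}). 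Everything else — the subgradient computation and the final projection — is routine; all of the difficulty is in the geometry of the tangent-space constraint, not in any concentration or probabilistic estimate.
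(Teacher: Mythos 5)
Your first half matches the paper: both arguments start from first-order stationarity of the oracle problem in the low-rank variable to obtain $\P_{\tT}(J_n)=\lambda_L\tU\tV^T$, where $\tT=\T(\tL)$ (the paper records this identity inside the proof of Lemma~\ref{LemWitnessOptCond} and invokes it at the start of its proof of this lemma). The genuine gap is your final step. Writing $J_n=\lambda_L\tU\tV^T+\P_{\tT^c}(J_n)$ and applying $\P_\T$ gives
\begin{equation}
\P_\T(J_n)=\P_\T\left(\lambda_L\tU\tV^T\right)+\P_\T\left(\P_{\tT^c}(J_n)\right),
\nonumber
\end{equation}
and the cross term $\P_\T\left(\P_{\tT^c}(J_n)\right)$ has no reason to vanish: since $\T\neq\tT$, a matrix lying in $\tT^c$ need not lie in $\T^c$, so $\P_\T\P_{\tT}\neq\P_\T$. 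The closeness $\rho(\tT,\T)\leq\rho_0$ that you invoke can at best bound this cross term by something of order $\rho_0\left\|\P_{\tT^c}(J_n)\right\|_2$, i.e.\ it yields an \emph{approximate} identity, whereas the lemma asserts \emph{exact} equality. Exactness matters downstream: in STEP 3 the lemma is used to conclude $\P_\T(\Delta)=0$ exactly (not merely small), which is what lets {\bf (C3)} hold with its stated bound.

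The paper closes this hole with structural, not metric, information. In the proof of Lemma~\ref{LemWitnessOptCond} it constructs $W_0=(1-\alpha)\lambda_LU_DV_D$ from the SVD $\P_{\T^c}(D_L)=U_D\Sigma_DV_D$, so that $\P_\T(W_0)=0$ holds exactly (the factors $U_D,V_D$ are orthogonal to $U^*,V^*$), and it identifies the $\tT^c$-residual of $J_n$ with $W_2=-\P_{\T^c}(\lambda_L\tU\tV^T)-W_0$; this produces a second decomposition $J_n=\P_\T(\lambda_L\tU\tV^T)-W_0$ whose residual is annihilated by $\P_\T$, and exact equality follows. Any repair of your argument must likewise pin down \emph{which subspace} the residual $\P_{\tT^c}(J_n)$ lies in; closeness of $\tT$ to $\T$ alone cannot deliver the conclusion. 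A secondary caveat: your claim (i), that motion along $N\in\T(\tL)$ does not change $\T(L)$ to first order, is false in general — for $L=\mathbf{e}_1\mathbf{e}_1^T$ and $N=\mathbf{e}_2\mathbf{e}_1^T\in\T(L)$, the column space of $L+tN$ rotates at rate proportional to $t$ — so when the oracle constraint is active, your two-sided feasible perturbation is not justified and a normal-cone term appears in the stationarity condition. The paper silently drops that term as well, so this weakness is shared by both arguments, but it is not fixed by your sub-point (i) as stated.
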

By our construction, we have $P_{\T}(\EstDual-\Delta)=\P_\T(\lambda_L\tU\tV^T)=\P_\T(J_n)$ by Lemma~\ref{lem:TtTConnection}. Consequently, $\P_\T(\Delta)=0$ and hence {\bf (C3)} is also satisfied, considering the oracle constraint bound $\rho_0$.\\

It suffices to show that {\bf (C2)} and {\bf (C4)} are satisfied with high probability. This has been shown in the next Lemma.%~\ref{lem:dual-feasibility}.
\begin{lemma}
Under assumptions {\bf (A1)-(A5)},  $\EstDual$ satisfies conditions {\bf (C2)} and {\bf (C4)} with probability $1-c_1\exp(-c_2n)$ for some positive constants $c_1$ and $c_2$.
\label{lem:dual-feasibility}
\end{lemma}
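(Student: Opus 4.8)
The plan is to bound the two quantities in (C2) and (C4) by splitting the constructed certificate $\EstDual = \H_{\lambda_A\sgn(\tA)} + \G_{\P_\T(\lambda_L\tU\tV^T)} + \Delta$ into a \emph{deterministic} part (the two operator series) and a \emph{stochastic} residual $\Delta$, and controlling each separately:
\[
\|\P_{\O^c}(\EstDual)\|_\infty \le \|\P_{\O^c}(\H_{\lambda_A\sgn(\tA)})\|_\infty + \|\P_{\O^c}(\G_{\P_\T(\lambda_L\tU\tV^T)})\|_\infty + \|\Delta\|_\infty ,
\]
and analogously for $\|\P_{\T^c}(\EstDual)\|_2$. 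For the deterministic pieces I would invoke the operator estimates of \cite{CHEJALSANCAR}: since assumption (A2) gives $\alpha=3\sqrt{\mu r/p}<1$, the series defining $\H$ and $\G$ converge and admit bounds on $\P_{\O^c}(\cdot)$ and $\P_{\T^c}(\cdot)$ that are linear in $\lambda_A,\lambda_L$ with coefficients governed by $\alpha$ and the incoherence of $\T$. The role of assumption (A4-2), which fixes the ratio $\lambda_L/(\lambda_A\sqrt p)$, is precisely to force these deterministic contributions strictly below $\lambda_A$ and $(1-\alpha)\lambda_L$ respectively, leaving a reserved margin for the stochastic term.

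For the stochastic part, recall that (C5) forces $\EstDual=J_n$ and that the construction yields $\P_\O(\Delta)=\P_\T(\Delta)=0$, so $\Delta\in\O^c\cap\T^c$ and it suffices to bound $\|\Delta\|_\infty$ and $\|\Delta\|_2$ directly. Substituting the discrete dynamics into $J_n$ gives the identity
\[
J_n \;=\; (A^*-\tA-\tL)\,\widehat{Q} \;+\; B^*\widehat{R} \;+\; W_n ,
\]
where $\widehat{Q}=\tfrac1n\sum_i x(i)x(i)^T$ and $\widehat{R}=\tfrac1n\sum_i u(i)x(i)^T$ are the empirical (cross-)covariances and $W_n=\tfrac{1}{\eta n}\sum_i w_x(i)x(i)^T$ is a zero-mean innovation--state cross term ($w_x$ being the observed block of the noise). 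Since $L^*=B^*R^*(Q^*)^{-1}$ satisfies the population identity $-L^*Q^*+B^*R^*=0$, I would rewrite $\Delta$ as a sum of (i) the martingale term $W_n$, (ii) the covariance fluctuations $B^*(\widehat R-R^*)$ and $L^*(Q^*-\widehat Q)$, and (iii) oracle-error terms in $\tA-A^*$ and $\tL-L^*$, the latter already controlled by Lemma~\ref{lem:tildecond}(b) on the same high-probability event. For the $\ell_\infty$ bound feeding (C2) I would additionally use the mutual-incoherence assumption (A3), $\theta>0$, to control the leakage from the relevant support $\S_\svert$ to its complement exactly as in the primal--dual witness for the LASSO.

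The concentration step is the technical core. Because the $x(i)$ are generated by the recursion \eqref{disc_system} they are strongly correlated --- more so as $\eta\to0$ --- so i.i.d.\ tail bounds do not apply and the effective sample size is the horizon $T=n\eta$ rather than $n$. The plan is to use the stability margin $D>0$ of (A1) (equivalently the spectral gap of $I+\eta\mathcal A^*$) to obtain exponential mixing, and then to control $\widehat Q-Q^*$, $\widehat R-R^*$ and the martingale $W_n$ both entrywise and in spectral norm. The crucial high-dimensional ingredient is the Schur-complement perturbation bound of \cite{STE}, which certifies closeness of the relevant covariance blocks to $Q^*,R^*$ using only $O(\log p)$ effective samples and yields deviations of order $\sqrt{\log(4((s+2r)p+r^2)/\delta)/(n\eta)}$ with failure probability $c_1\exp(-c_2 n)$. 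The constant in $\lambda_A$ from (A4-1) is exactly this deviation scale times $16m(4-\theta)/(\theta\sqrt D)$, so that under the sample-complexity bound (A5) the stochastic deviations $\|\Delta\|_\infty$ and $\|\Delta\|_2$ stay below the reserved margins and (C2), (C4) hold strictly.

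I expect the main obstacle to be the correlated-sample concentration of the preceding paragraph: obtaining entrywise and spectral-norm control of $\widehat Q,\widehat R$ and of $W_n$ uniformly over the $O(p^2)$ coordinates while keeping the requirement at $\log p$ rather than $p$ samples. This is where the Schur-complement estimate of \cite{STE} and a martingale/mixing argument (the authors' ``second novel ingredient'') must be combined, and where one must ensure that the randomness in the oracle solution $(\tA,\tL)$ is handled on the \emph{same} event as the covariance concentration, so as to avoid a circular dependence between the oracle error bounds and the gradient bound.
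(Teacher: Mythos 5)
Your proposal assembles the right raw materials---the identity $\EstDual=J_n$ forced by {\bf (C5)}, substitution of the true dynamics to produce a martingale term $W^{(n)}$, covariance-fluctuation terms $Y^{(n)}$, and oracle-error terms in $\tA-A^*,\tL-L^*$, LASSO-style leakage control via {\bf (A3)}, correlated-sample concentration at the horizon scale $T=n\eta$, and the Schur-complement perturbation bound of \cite{STE}---and these are exactly the ingredients of the paper's proof. However, the architecture you build around them has a genuine flaw. You bound $\|\P_{\O^c}(\EstDual)\|_\infty$ by a triangle inequality over the three pieces $\H_{\lambda_A\sgn(\tA)}$, $\G_{\P_\T(\lambda_L\tU\tV^T)}$ and $\Delta$, treating the first two as ``deterministic'' series controlled by \cite{CHEJALSANCAR} and $\Delta$ as the stochastic residual, which you then rewrite as martingale plus covariance fluctuations plus oracle errors. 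That rewriting is not available: the decomposition into $W^{(n)}$, $Y^{(n)}$ and $(\tA-A^*)Q^{(n)}$, $(\tL-L^*)Q^{(n)}$ is a decomposition of $J_n=\EstDual$ itself, not of $\Delta=J_n-\H_{\lambda_A\sgn(\tA)}-\G_{\P_\T(\lambda_L\tU\tV^T)}$. Using it for $\Delta$ double-counts the $\H+\G$ part; bounding $\Delta$ honestly as $\P_{\O^c}(J_n)-\P_{\O^c}(\H)-\P_{\O^c}(\G)$ (valid since $\P_\O(\Delta)=0$) returns you to bounding $\|\P_{\O^c}(J_n)\|_\infty$---the original goal---now with the extra overhead $2\left(\|\P_{\O^c}(\H)\|_\infty+\|\P_{\O^c}(\G)\|_\infty\right)$, which only loosens the bound. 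The split is therefore either circular or lossy; in the paper the series $\H,\G$ serve only to verify {\bf (C1)} and {\bf (C3)} in STEP 3 and play no role in the proof of {\bf (C2)}/{\bf (C4)}.

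The step your outline is missing is the one that makes the budget close. The paper works on $\EstDual=J_n$ directly: it writes the $\svert^{th}$ row of the substituted stationarity equation separately on $\S_\svert$ and $\S_\svert^c$ as in \eqref{eq:err-bnd-row}, solves the on-support block for $(\tA-A^*+\tL-L^*)^{(\svert)}_{\S_\svert}$, and substitutes it into the off-support block. Only after this substitution does the $\lambda_A$-sized quantity $\EstDual_{\S_\svert}$ (pinned by {\bf (C1)}) enter through the factor $Q^{(n)}_{\S_\svert^c\S_\svert}\left(Q^{(n)}_{\S_\svert\S_\svert}\right)^{-1}$, whose $\|\cdot\|_{\infty,1}$ norm is at most $1-\theta/2$ (the empirical version of {\bf (A3)}, Lemma~\ref{lem:Q-Incoherence-bound}), and does $(\tL-L^*)^{(\svert)}_{\S_\svert^c}$ enter through the empirical Schur complement, controlled via \cite{STE} in Lemma~\ref{lem:schur}; the remaining terms $Y^{(n)},W^{(n)}$ are handled by the concentration lemmas, and {\bf (C4)} additionally exploits that $\P_{\T^c}(FQ^*)=0$ for $F\in\T$ so that $Q^{(n)}$ can be traded for $Q^{(n)}-Q^*$. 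You invoke the ``primal--dual witness'' idea by name but never perform this substitution on the equation $\EstDual=J_n$. Without it, a direct bound on the off-support entries of the oracle-error term $(\tA-A^*)Q^{(n)}$ is of order $s\nu\lambda_A\mathcal{D}_{\max}$ (with $\nu\sim\sqrt{s}/\mathcal{C}_{\min}$), which is far larger than the total budget $\lambda_A$, so no allocation of margins through {\bf (A4-1)}--{\bf (A4-2)} can rescue the triangle-inequality scheme.
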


This concludes the proof of the theorem for the discrete time system.\\

\item{\bf STEP 4:} Denote $X(t)=[x(t)\,u(t)]^T$ and let 
\begin{equation}
\widehat{\mathcal{Q}}=\frac{1}{T}\int_{t=0}^TX(t)X(t)^Tdt \qquad \widehat{\mathcal{W}}=\frac{1}{T}\int_{t=0}^Tdw(t)X(t)^T.
\nonumber
\end{equation}
Having the result for the discrete time system, it suffices (see proof of Theorem 1.1 in \cite{BENIBRMON} for more details) to show that for a given continuous time system, there exists a discrete time system with $\mathcal{Q}^{(n)}$ and $\mathcal{W}^{(n)}$ such that almost surely,
\begin{equation}
\mathcal{Q}^{(n)}\longrightarrow\widehat{\mathcal{Q}} \qquad\qquad\qquad \mathcal{W}^{(n)}\longrightarrow\widehat{\mathcal{W}},
\nonumber
\end{equation}
as $n\rightarrow\infty$ for a fixed $T=n\eta$ (and hence, $\eta\rightarrow 0$).

Let $\mathcal{Q}^*$ be the matrix satisfying the continuous time Lyapunov stability equation $\mathcal{A}^*\mathcal{Q}^*+\mathcal{Q}^*\mathcal{A}^{*T}+I=0$ and $\mathcal{Q}^*(\eta)$ be the matrix satisfying the discrete time Lyapunov stability equation $\mathcal{A}^*\mathcal{Q}^*(\eta)+\mathcal{Q}^*(\eta)\mathcal{A}^{*T} +\eta\mathcal{A}^*\mathcal{Q}^*(\eta)\mathcal{A}^{*T}+I=0$. It is easy to see that $\mathcal{Q}^*(\eta)\rightarrow\mathcal{Q}^*$ as $\eta\rightarrow 0$ by the uniqueness of the stationary distribution. Moreover, by Lemma~\ref{lem:Q-bound}, we know that $\mathcal{Q}^{(n)}\rightarrow\mathcal{Q}^*(\eta)$ as $n\rightarrow\infty$.

Now, let the initial state of the discrete time system be $$X(i=0)=\left(\mathcal{Q}^*(\eta)\right)^{1/2}\left(\mathcal{Q}^*\right)^{-1/2}X(t=0),$$ and the noise $w(i)=\,w(t=i\eta)-w(t=(i-1)\eta)$. It can be easily checked that $w(i)\sim\mathcal{N}(0,\eta I)$ if the continuous time $w(t)$ is a Brownian motion. Thus, $x(i)$ and $x(t)$ are coupled and the almost sure convergence, follows from the convergence of random walks to Brownian motions \cite{MAR}. This concludes the proof of the theorem for continuous time systems.\\

\end{itemize}

\begin{figure}
\centering
\subfigure[{\small Effect of $\eta$}]{
\includegraphics[width=3.5in]{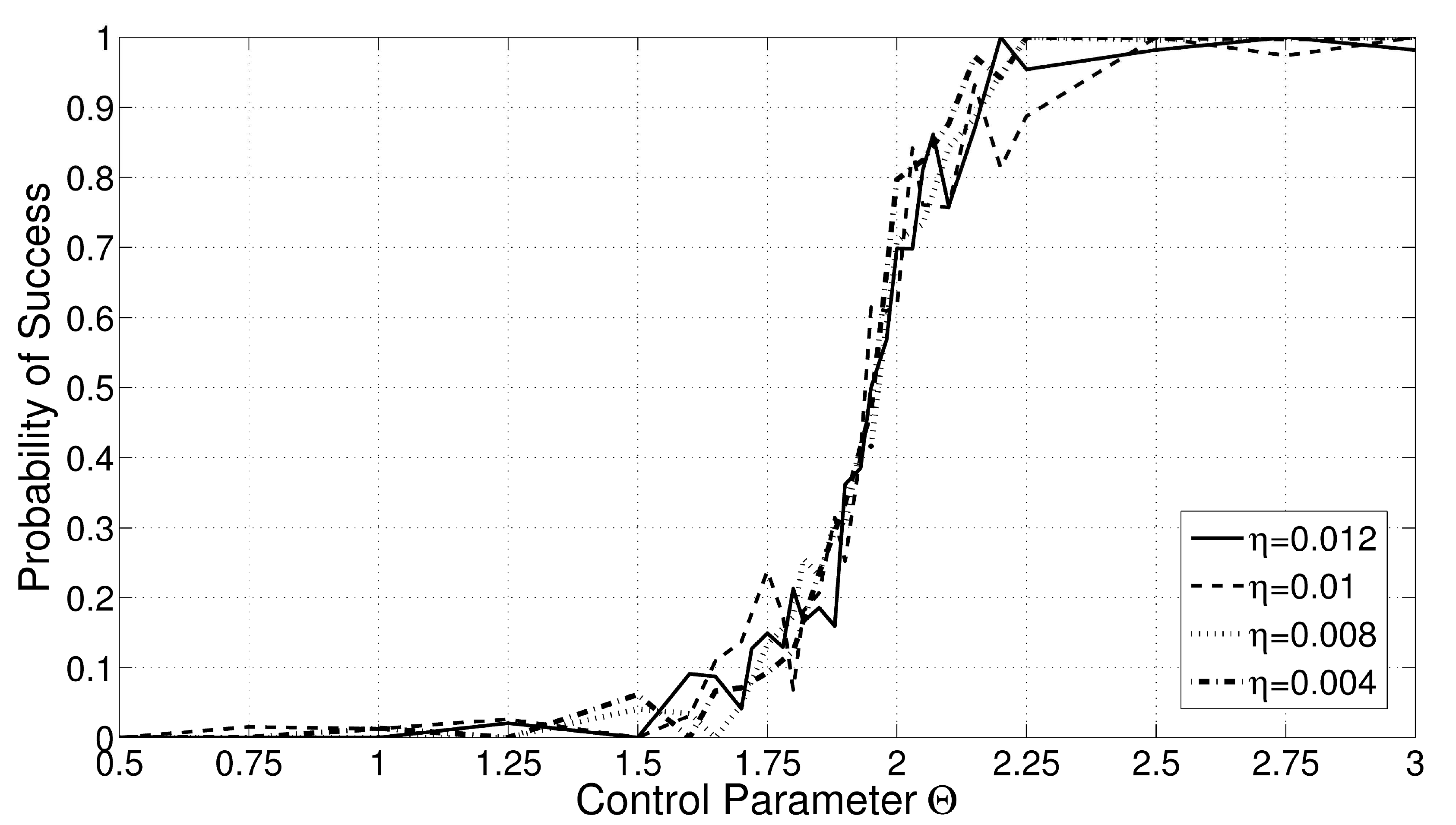}
\label{fig1:eta}
}
\subfigure[{\small Effect of $r$}]{
\includegraphics[width=3.5in]{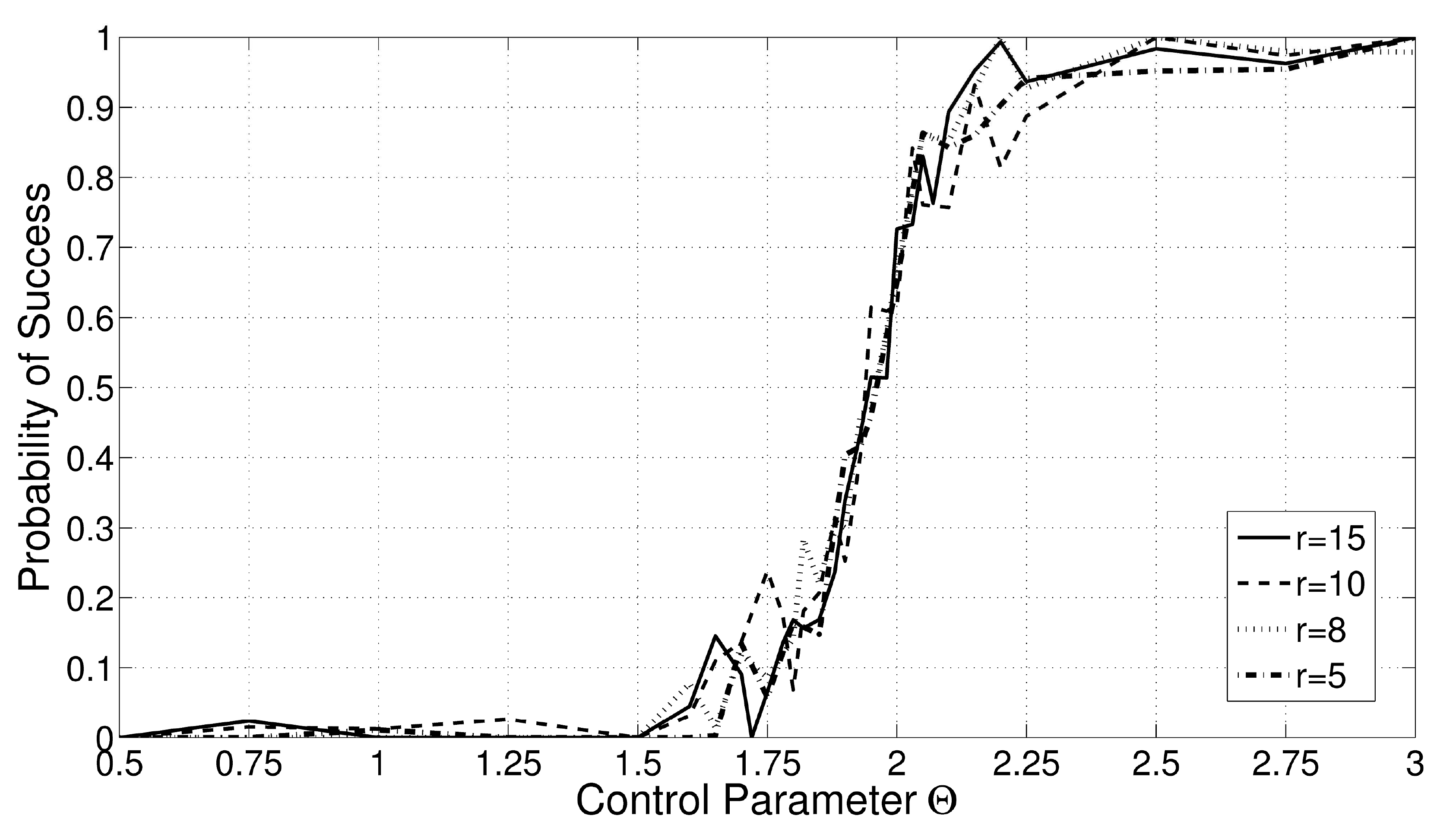}
\label{fig1:r}
}
\subfigure[{\small Effect of $s$}]{
\includegraphics[width=3.5in]{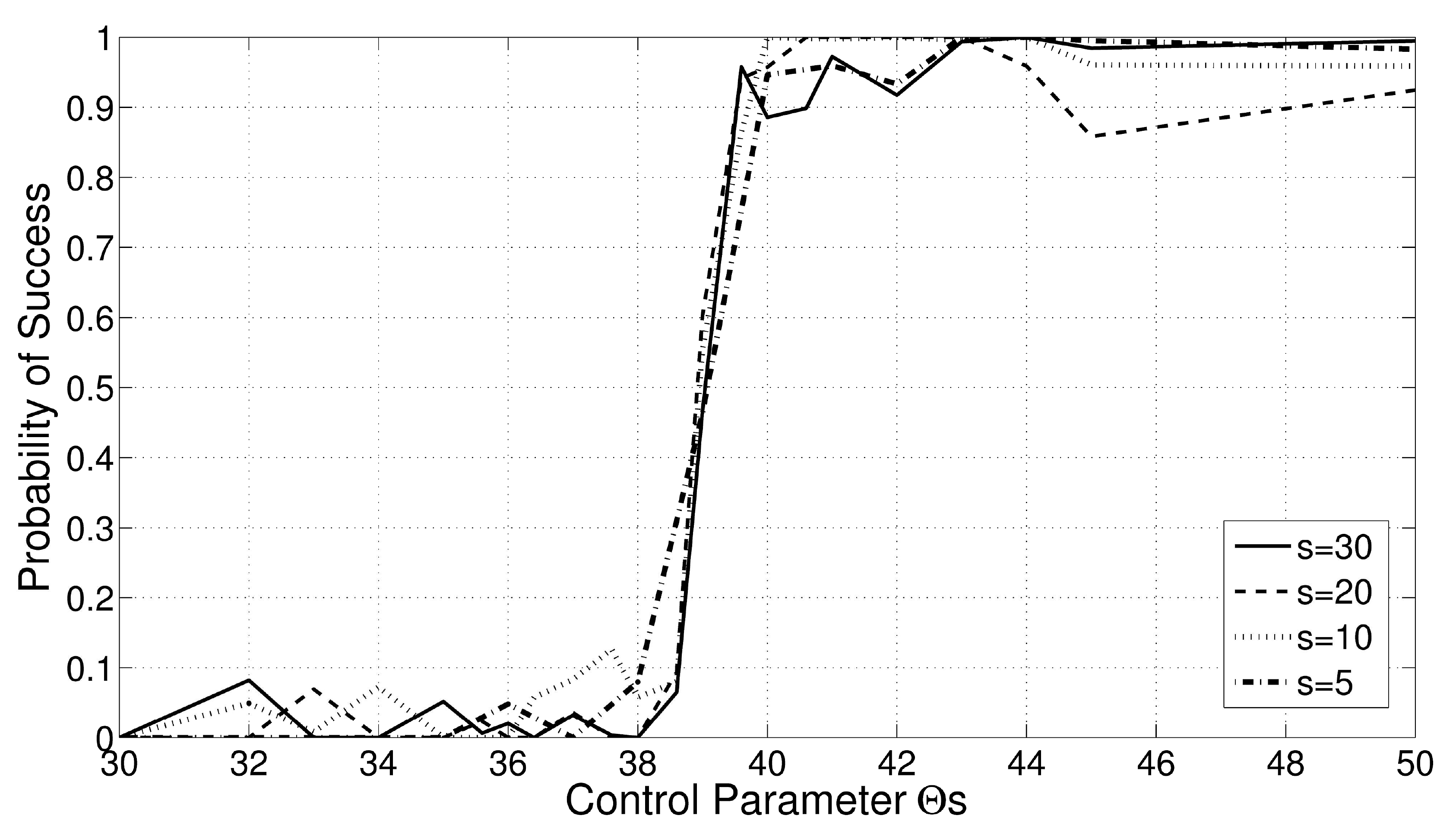}
\label{fig1:s}
}
\label{fig1}
\caption{\small Probability of success in recovering the true signed support of $A^*$ versus the control parameter $\Theta$ with $p=200$, $r=10$ and $s=20$ for different values of $\eta$ in \ref{fig1:eta}, and, with $p=200$, $s=20$ and $\eta=0.01$ for different number of latent time series $r$ in \ref{fig1:r}, and, with $p=200$, $r=10$ and fixed $\eta=0.01$ for different sparsity sizes $s$ in \ref{fig1:s}. Notice that Fig.~\ref{fig1:s} is plotted versus $\Theta\times s$ which means $n\eta$ scales with $s^2$ not $s^3$. This means our theoretical result can be tightened.}
\end{figure}

\section{Experimental Results}
\label{sec:exper}

\subsection{Synthetic Data}

Motivated by the example discussed in the paper, we simulate a similar (but different) dynamic system for the purpose of our experiments. Consider the system where each latent variable is only evolves by itself, i.e., $C^*=0$ and $D^*$ is a diagonal matrix. Moreover, assume that each observed variable is affected by exactly two latent variable, i.e., each column of $B^*$ has $2p/r$ non-zeros and each row of $B^*$ has two non-zeros. We randomly select a support of size $s$ per row for $A^*$ and draw all the values of $A^*$ and $B^*$ i.i.d. standard Gaussian. To make the matrix $\mathcal{A}^*$ negative definite (hence, stable), using Ger{\v s}gorin disk theorem \cite{GER}, we put a large-enough negative value on the diagonals of $A^*$ and $D^*$. 

We generate the data according to the continuous time model. The solution to the first order system can be written as
\begin{equation}
\left[\begin{array}{c} x(t)\\ u(t)\end{array}\right] = e^{\mathcal{A}^*(t-t_0)}\left[\begin{array}{c} x(t_0)\\ u(t_0)\end{array}\right] + \int_{t_0}^t e^{\mathcal{A}^*(t-\tau)}dw(\tau),
\nonumber
\end{equation}
where, $e^{\mathcal{A}^*} = I + \mathcal{A}^* + \frac{1}{2}\mathcal{A}^{*2} + \ldots$ is a generalization of the exponential function to matrices. We sub-sample this system at points $t_i = \eta i$ for $i=1,2,\ldots,n$, that is
\begin{equation}
\left[\begin{array}{c} x(i)\\ u(i)\end{array}\right] = e^{\eta \mathcal{A}}\left[\begin{array}{c} x(i-1)\\ u(i-1)\end{array}\right] + \int_{\eta (i-1)}^{\eta i} e^{\mathcal{A}(\eta i-\tau)}dw(\tau)
\nonumber
\end{equation}
The stochastic integral can be estimated by binning the interval and assuming the Brownian motion is constant over the bin and hence, can be estimated by a standard Gaussian. For more information on this integration method, we refer to Chapter 4 of \citet{SHR04}.

Using this data, we solve \eqref{eq:opt-orig} using accelerated proximal gradient method \cite{LINGANWRIWUCHEMA}. Motivated by our Theorem, we plot our result with respect to the control parameter $\Theta=\frac{\eta n}{s^3\log\left((s+2r)p+r^2\right)}$. We pick the values of $\lambda_A$ and $\lambda_L$ by dividing the training data into chunks each having consecutive samples and do the cross validation over those chunks. Note that this is different from the standard cross validation technique due to the dependency of samples. 

Figure~\ref{fig1} shows the phase transition of the probability of success in recovering the exact sign support of the matrix $A^*$. We ran three different experiments, each investigating the effect of one of the three key parameters of the system $\eta$ (sampling frequency), $r$ (number of latent variables) and $s$ (sparsity of the model). These three figures show that the probability of success curves line up if they are plotted versus the correct control parameter. The first two curves for $\eta$ and $r$ line up versus $\Theta$, indicating that our theorem suggests the correct scaling law for the sample complexity. However, from this experiment, it seems that the phase transition probability scales with $s^2$ not $s^3$. Perhaps the result of our theorem and also \citet{BENIBRMON} (for $r=0$) can be tightened.  

\subsection{Stock Market Data}
We take the end-of-the-day closing stock prices for 50 different companies in the period of May 17, 2010 - May 13, 2011 ($255$ business days). These companies (among them, Amazon, eBay, Pepsi, etc) are consumer goods companies traded either at NASDAQ or NYSE in USD. The data is collected from Google Finance website. Our goal is to observe the stock prices for a period of time and predict it for the entire days of the next month with small error.

Applying our method and pure LASSO \cite{BENIBRMON} to the data, we recover the structure of the dependencies among stocks. We represent the result as a graph in Fig~\ref{fig3}; where each company is a node in this graph and there is an edge between company $i$ and $j$ if $\hat{A}_{ij}\neq 0$. This result shows that the recovered dependency structure by our algorithm is order of magnitude sparser than the one recovered by pure LASSO. 

\begin{figure}[t]
\centering
\subfigure[{\small Pure LASSO}]{
\includegraphics[width=2.3in]{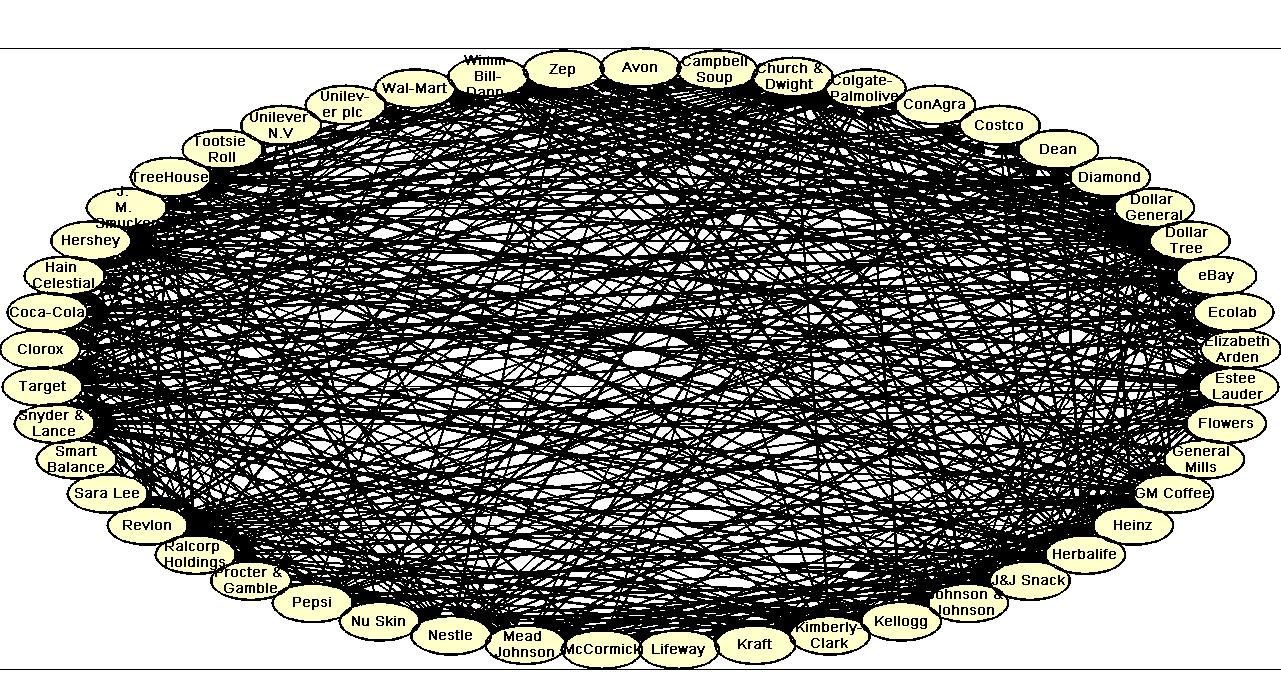}
\label{fig3:LASSO}
}
\subfigure[{\small Our Algorithm}]{
\includegraphics[width=2.3in]{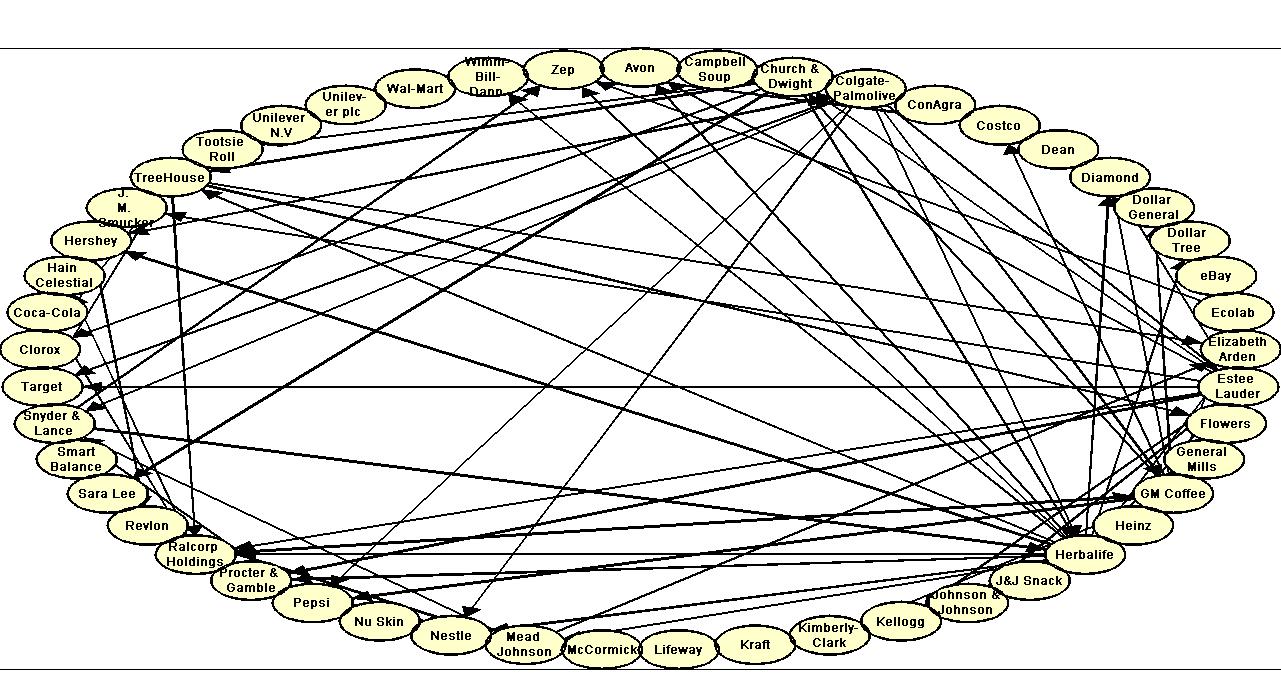}
\label{fig3:Ours}
}
\label{fig3}
\caption{\small Comparison of the stock dependencies recovered by Pure LASSO \cite{BENIBRMON} and our algorithm.}
\end{figure}

\begin{figure}[t]
\centering
\subfigure[{\small Model Sparsity}]{
\includegraphics[width=2.3in]{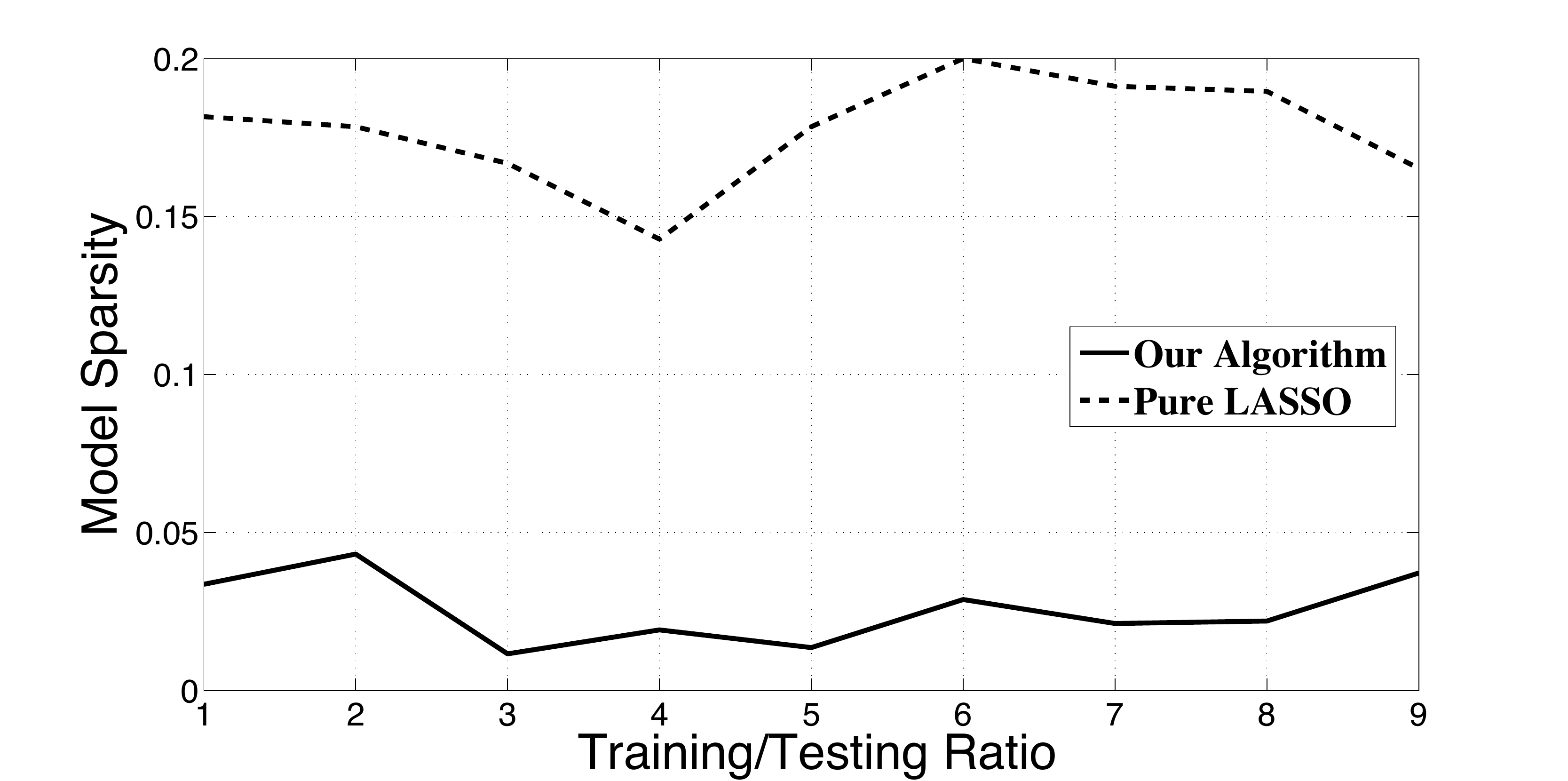}
\label{fig2:eta}
}
\subfigure[{\small Prediction Error}]{
\includegraphics[width=2.3in]{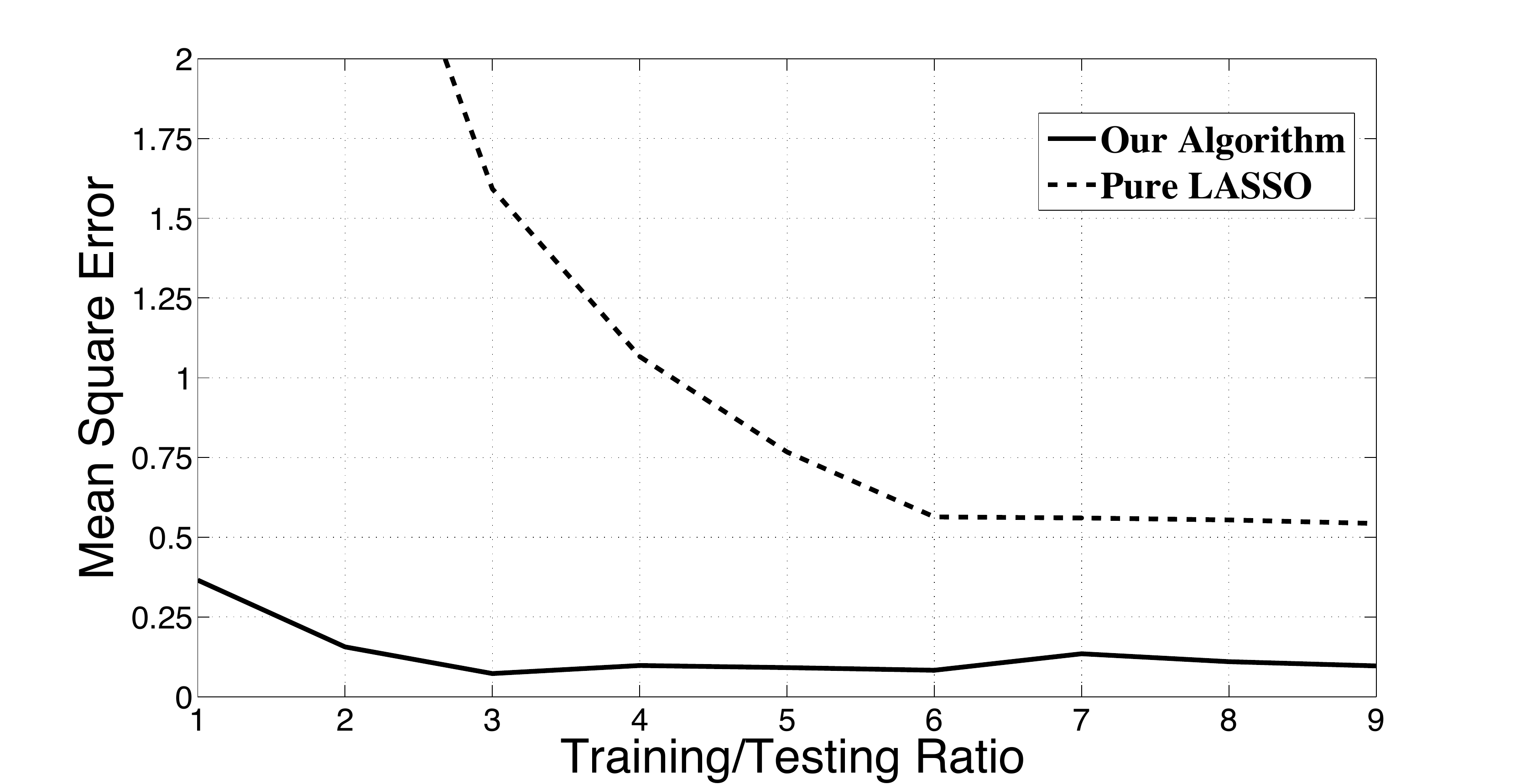}
\label{fig2:r}
}
\label{fig2}
\caption{\small Prediction error and model sparsity versus the ratio of the training/testing sample sizes for prediction of the stock price. Prediction error is measured using mean squared error and the model sparsity is the number of non-zero entries divided by the size of $\hA$.}
\end{figure}

To show the usefulness of our algorithm for prediction purposes, we apply our algorithm to this data and try to learn the model using the data for $n$ (consecutive) days and then compute the mean squared error in the prediction of the following month ($25$ business days). We randomly pick an starting day $n_0$ between day $1$ and day $255-25-n$. Then we learn the model using the data from the day $n_0$ to the day $n_0+n$ (total of $n$ days). Then, we test our data on the consecutive $25$ days. Finally, we average the error over $10$ different starting points $n_0$ for each value of $n$. We pick the regularizers by the semi-cross validation process explained in the previous section. The ratio $\frac{n}{25}$ shows the ratio of training sample size to the testing sample size. 

Figure~\ref{fig2:r} shows the prediction error for both our method and pure LASSO \cite{BENIBRMON} method as the train/test ratio increases. It can be seen that our method not only have better prediction, but also is more robust. Our algorithm requires only 3 months of the past data to give a robust estimation of the next month; in contrast with almost 6 months requirement of LASSO. However, the error of our algorithm is much smaller (by a factor of 6) than LASSO even in the steady state. Figure~\ref{fig2:eta} shows the sparsity level for our model and the LASSO model. The number of latent variables our model finds varies from $8-12$ for different train/test ratios. As Figure~\ref{fig2:eta} illustrates, our estimated $\hA$ is order of magnitude sparser than the one estimated by LASSO.

\appendix
\section{Proof of Lemma~\ref{lem:idea}}
Ignoring the term $\|x(n+1)-x(n)\|_2^2$ which is independent of $A$, minimization of $\L(A)$ with this infinite sample size is equivalent to
\begin{equation}
\begin{aligned}
&\!\!\!\!\min_{A}\,\mathbb{E}\left[x(n)^TA^TAx(n) - \frac{2}{\eta}(x(n+1)-x(n))^TAx(n)\right]\\ &\!\!\!\!= \min_{A}\,\mathbb{E}\left[\trace{Ax(n)x(n)^T\!\!A^T}\!\! -\! 2\left(A^*x(n)\!\!+\!B^*u(t)\right)^T\!\!\!Ax(n)\right]\\
&\!\!\!\!= \min_{A}\,\trace{AQ^*A^T}\!\! -\! 2\trace{A^*Q^*A^T}\!\! -\!2\trace{B^*R^*A^T}\\
&\!\!\!\!= \min_{A}\,\trace{\Big(A -2\left(A^* + B^*R^*(Q^*)^{-1}\right)\Big)Q^*A^T}.
\end{aligned}
\nonumber
\end{equation}
Here we ignored the term $w(n)$ due to the fact that it is zero mean and independent of $x(n)$ and $u(n)$. This implies that the asympotatic optimizer of $\L(\cdot)$ satisfies $\hat{A} = A^* + B^*R^*(Q^*)^{-1}$.  This concludes the proof of the lemma.

\section{Illustrative Example}
\label{sec:example}
In this section, we analyze the illustrative example discussed in Sec~\ref{sec:main}. For that example, Lyapunov stability equation requires
\begin{equation}
\left[\begin{array}{ccc} -2Q^*+B^*R^*+R^{*T}B^{*T} &\qquad& -2R^{*T}+B^*\\ -2R^*+B^{*T}& & -2P^*\end{array}\right] = \left[\begin{array}{ccc} -I && 0\\ 0& & -I\end{array}\right] .
\nonumber
\end{equation}
This entails that $R^* = \frac{1}{2} B^{*T}$ and $Q^* = \frac{1}{2}(I+B^*B^{*T})$ with $C_{\min}=\frac{1}{2}$. It can be easily checked that $Q^{*-1} = 2(I - \frac{r}{p+r}B^*B^{*T})$. Thus, the low-rank matrix of interest is
\begin{equation}
\begin{aligned}
L^* &= B^*R^*Q^{*-1}\\
&= B^*B^{*T}(I - \frac{r}{p+r}B^*B^{*T})\\
&= (1-\frac{p}{p+r})B^*B^{*T}.
\end{aligned}
\nonumber
\end{equation}
Taking singular value decomposition $U^*\Sigma^* V^*$ of this matrix, we get $U^*=V^*=\sqrt{\frac{r}{p}}B^*$ and hence $\mu=r$. Considering $s=1$, the identifiability assumption {\bf (A2)} becomes $\alpha =\frac{3r}{\sqrt{p}}\leq 1$ or equivalently, $r\leq\frac{\sqrt{p}}{3}$.

Considering assumption {\bf (A3)}, note that $Q^*_{\S_\svert\S_\svert}=1$ is just an scalar since $s=1$. Moreover, $Q^*_{\S_{\svert^c}\S_\svert}$ is a vector with all entries equal to $\frac{1}{2}$ and hence $\theta = \frac{1}{2}$.

\section{Proof of Lemma~\ref{LemWitnessOptCond}}
\label{sec:opt-cond}

Suppose $D_A = \tA - \hA$ and $D_L = \tL - \hL$. From condition {\bf (C5)}, $-\EstDual$ is the subgradient of the loss function at $(\tA,\tL)$ and hence,
\begin{equation}
\L(\hA+\hL) \geq \L(\tA+\tL) - \tr{-\EstDual}{D_A + D_L}. 
\label{eq:n1}
\end{equation}

Let $Z_A = \lambda_A\sgn(\tA) - F$ with $F=\lambda_A\sgn(\P_{\O^c}(D_A))$. Notice that $\P_\O(F)=0$ and $\tr{F}{D_A} =\lambda_A\|\P_{\O^c}(D_A)\|_1$. For $Z_A$ is in the subgradient of $\lambda_A\|\tA\|_1$, we have
\begin{equation}
\begin{aligned}
\lambda_A\|\hA\|_1 \geq \lambda_A\|\tA\|_1 - \tr{Z_A}{D_A}. 
\end{aligned}
\label{eq:n2}
\end{equation}

Suppose $\tL=\tU\tSigma\tV^T$ and $\P_{\T^c}(D_L)=U_D\Sigma_DV_D$ are SVD decompositions. Now, let $Z_L = \lambda_LU^*V^{*T} + W_1 - W_0$ with $W_0=(1-\alpha)\lambda_LU_DV_D$ and $W_1=\P_\T(\lambda_L\tU\tV^{T})-\lambda_LU^*V^{*T}$. In this construction, we have\\

\begin{itemize}
\item [(a)] $\P_{\T}(W_0)=0$ and $\|W_0\|_2\leq(1-\alpha)\lambda_L$ and $\tr{W_0}{D_L} =(1-\alpha)\lambda_L\|\P_{\T^c}(D_L)\|_*$.\\

\item [(b)] $\P_{\T^c}(W_1)=0$ and $\|W_1\|_2\leq 4\rho_0\lambda_L$ by Lemma~\ref{lem:W-existence}.\\
\end{itemize}

Let $W_2=-\P_{\T^c}(\lambda_L\tU\tV^T) - W_0$ and notice that $Z_L = \lambda_L\tU\tV^T + W_2$. Here, we have $\P_{\tT^c}(W_2)=0$ and $\|W_2\|_2<\lambda_L$ by Lemma~\ref{lem:W-existence}. Hence, our constructed $Z_L$ is in the subgradient of $\lambda_L\|\tL\|_*$, i.e.,
\begin{equation}
\begin{aligned}
\lambda_L\|\hL\|_* \geq \lambda_L\|\tL\|_* - \tr{Z_L}{D_L}. 
\end{aligned}
\label{eq:n3}
\end{equation}

Combining \eqref{eq:n1}-\eqref{eq:n3}, we get
\begin{equation}
\begin{aligned}
\L(\hA+\hL) + \lambda_A\|\hA\|_1 + \lambda_L\|\hL\|_*&\geq \L(\tA+\tL) + \lambda_A\|\tA\|_1 + \lambda_L\|\tL\|_*\\
&+ \tr{\EstDual}{D_A + D_L} - \tr{Z_A}{D_A} - \tr{Z_L}{D_L}. 
\end{aligned}
\nonumber
\end{equation}
Provided $\tr{\EstDual}{D_A + D_L} - \tr{Z_A}{D_A} - \tr{Z_L}{D_L}\geq 0$, we arrive to a contradiction with the optimality of $(\hA,\hL)$ and the result follows.

Notice that by first order optimality condition, we have $\P_\O(\EstDual)=\lambda_A\sgn(\tA)=\P_\O(Z_A)$. Hence, for some $\gamma<1$, we have
\begin{equation}
\begin{aligned}
\tr{\EstDual}{D_A} &- \tr{Z_A}{D_A}\\ &= \tr{\P_{\O^c}(\EstDual)}{\P_{\O^c}(D_A)} + \tr{F}{\P_{\O^c}(D_A)} + \underbrace{\tr{\P_{\O}(\EstDual)-\P_\O(Z_A)}{\P_\O(D_A)}}_{=0\quad\text{by {\bf (C1)}}}\\
&= \tr{\P_{\O^c}(\EstDual)}{\P_{\O^c}(D_A)} + \lambda_A\|\P_{\O^c}(D_A)\|_1\qquad \text{(Construction of $F$)}\\
&\geq -\gamma\lambda_A\|\P_{\O^c}(D_A)\|_1 + \lambda_A\|\P_{\O^c}(D_A)\|_1\qquad \text{(by {\bf (C2)})}\\
&=(1-\gamma)\lambda_A\|\P_{\O^c}(D_A)\|_1.
\end{aligned}
\label{eq:n4}
\end{equation}

Similarly, by first order optimality condition, $\P_{\tT}(\EstDual)=\lambda_L\tU\tV^T$ 
and by our construction, $\P_{\T}(\EstDual)=\lambda_LU^*V^{*T}+W_1=\P_\T(Z_L)$. Hence, we get
\begin{equation}
\begin{aligned}
\tr{\EstDual}{D_L} &- \tr{Z_L}{D_L}\\ &= \tr{\P_{\T^c}(\EstDual)}{\P_{\T^c}(D_L)} + \tr{W_0}{\P_{\T^c}(D_L)} + \underbrace{\tr{\P_{\T}(\EstDual)-\P_\T(Z_L)}{\P_\T(D_L)}}_{=0\quad\text{by {\bf (C3)}}}\\
&= \tr{\P_{\T^c}(\EstDual)}{\P_{\T^c}(D_L)} + (1-\alpha)\lambda_L\|\P_{\T^c}(D_L)\|_*\qquad \text{(Construction of $W_0$)}\\
&\geq -\gamma(1-\alpha)\lambda_L\|\P_{\T^c}(D_L)\|_* + (1-\alpha)\lambda_L\|\P_{\T^c}(D_L)\|_*\qquad \text{(by {\bf (C4)})}\\
&=(1-\gamma)(1-\alpha)\lambda_L\|\P_{\T^c}(D_L)\|_*.
\end{aligned}
\label{eq:n5}
\end{equation}

Combining \eqref{eq:n4} and \eqref{eq:n5}, we get
\begin{equation}
\begin{aligned}
\tr{\EstDual}{D_A + D_L} - \tr{Z_A}{D_A} - \tr{Z_L}{D_L}\geq 0.\\
\end{aligned}
\nonumber
\end{equation}

This concludes the proof of the lemma.\\

\begin{lemma}
For $W_1$ and $W_2$ constructed above, we have $\|W_2\|_2<\lambda_L$ and $\|W_1\|_2\leq 4\rho\lambda_L$.
\label{lem:W-existence}
\end{lemma}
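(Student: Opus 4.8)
The two estimates are essentially independent, and I would prove them in increasing order of difficulty. Throughout write $\rho := \rho(\tT,\T)$, so the oracle constraint gives $\rho \le \rho_0$, and record three bookkeeping facts that drive both bounds: (i) since $\tU\tV^T\in\tT$ we have $\P_{\tT}(\tU\tV^T)=\tU\tV^T$ and $\P_{\tT^c}(\tU\tV^T)=0$; (ii) since $\P_{\T^c}-\P_{\tT^c}=\P_{\tT}-\P_\T$, the definition of the metric yields $\|\P_{\T^c}(N)-\P_{\tT^c}(N)\|_2 = \|\P_{\tT}(N)-\P_\T(N)\|_2\le\rho\|N\|_2$ for every $N$; and (iii) $\|U_DV_D\|_2\le 1$ because $U_D\Sigma_DV_D$ is an SVD of $\P_{\T^c}(D_L)$, so $U_DV_D$ is a partial isometry.

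For $W_2=-\P_{\T^c}(\lambda_L\tU\tV^T)-W_0$ the triangle inequality gives $\|W_2\|_2\le\|\P_{\T^c}(\lambda_L\tU\tV^T)\|_2+\|W_0\|_2$. By (iii), $\|W_0\|_2=(1-\alpha)\lambda_L$ (and $W_0=0$ when $\P_{\T^c}(D_L)=0$). For the first term I use (i) to write $\P_{\T^c}(\tU\tV^T)=\P_{\T^c}(\tU\tV^T)-\P_{\tT^c}(\tU\tV^T)$ and apply (ii) with $N=\tU\tV^T$, which gives $\|\P_{\T^c}(\lambda_L\tU\tV^T)\|_2\le\rho\lambda_L$. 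Hence $\|W_2\|_2\le(1-\alpha+\rho)\lambda_L$, strictly below $\lambda_L$ since $\rho\le\rho_0\le\alpha/4<\alpha$ by the definition of $\rho_0$. Note that this half uses only feasibility of $\tL$.

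For $W_1=\P_\T(\lambda_L\tU\tV^T)-\lambda_LU^*V^{*T}$ I would first reduce to a statement about the two sign matrices. Since $U^*V^{*T}\in\T$ we have $\lambda_LU^*V^{*T}=\P_\T(\lambda_LU^*V^{*T})$, so $W_1=\lambda_L\,\P_\T(\tU\tV^T-U^*V^{*T})$. Using the elementary operator bound $\|\P_\T(N)\|_2\le 2\|N\|_2$ (from $\P_\T(N)=N-(I-U^*U^{*T})N(I-V^*V^{*T})$), it then suffices to establish the \emph{sign-matrix proximity} $\|\tU\tV^T-U^*V^{*T}\|_2\le 2\rho$, which yields $\|W_1\|_2\le 4\rho\lambda_L$. (The identity $\P_{\T^c}(W_1)=0$ is automatic, as $W_1\in\T$.)

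The sign-matrix proximity is the main obstacle, and it is genuinely harder than the $W_2$ bound because it cannot follow from tangent-space closeness alone: were $\tL$ to share the column and row spaces of $L^*$ (so that $\rho=0$) yet pair its singular directions differently, the two sign matrices could differ in spectral norm by as much as $2$ while $W_2$ stayed zero. The missing ingredient must therefore be the first-order optimality of the oracle solution, which pins $\tL$ to $L^*$ in value and not merely in tangent space. The plan is to invoke the oracle stationarity --- $\P_{\tT}(J_n)=\lambda_L\tU\tV^T$ with $\|\P_{\tT^c}(J_n)\|_2\le\lambda_L$, together with Lemma~\ref{lem:TtTConnection}, $\P_\T(J_n)=\P_\T(\lambda_L\tU\tV^T)$ --- to control $\|\tL-L^*\|_2$, and then to pass from this value gap to the sign-matrix gap $\|\tU\tV^T-U^*V^{*T}\|_2$ via a Wedin-type perturbation bound on the polar (sign) factor, whose constant is governed by $\sigma_{\min}(L^*)$. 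Verifying that these perturbation constants assemble into exactly the factor $2\rho$ is where the real effort lies; everything else reduces to the bookkeeping facts (i)--(iii).
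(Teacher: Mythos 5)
Your treatment of $W_2$ is correct, and is actually tighter than the paper's: where the paper detours through the bound $\|Z_L\|_2\le(3-\alpha)\lambda_L$ and concludes $\|W_2\|_2\le\left((3-\alpha)\rho_0+(1-\alpha)\right)\lambda_L$, your identity $\P_{\T^c}-\P_{\tT^c}=\P_{\tT}-\P_\T$ applied to $N=\tU\tV^T$ (for which $\P_{\tT^c}(N)=0$) gives directly $\|\P_{\T^c}(\lambda_L\tU\tV^T)\|_2\le\rho\lambda_L$, hence $\|W_2\|_2\le(1-\alpha+\rho)\lambda_L<\lambda_L$. The problem is $W_1$. Your reduction of the claim to the sign-matrix proximity $\|\tU\tV^T-U^*V^{*T}\|_2\le 2\rho$ is fine, and your observation that such proximity cannot follow from the subspace constraint alone is correct (the permuted-pairing example, with $\rho=0$ but sign matrices at spectral distance $2$, is exactly right). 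But that is where your proof stops: the proximity claim itself is left as a plan, namely to control $\|\tL-L^*\|_2$ via oracle stationarity and then pass to the sign matrices via a Wedin-type perturbation bound on the polar factor. This plan is not carried out, and there is good reason to doubt it can be carried out as stated: polar-factor perturbation bounds carry a constant of order $1/\sigma_{\min}(L^*)$, a quantity that appears nowhere in the target bound $4\rho\lambda_L$ and on which the lemma places no assumption, so there is no mechanism by which these constants could assemble into exactly the factor $2\rho$. As a proof of the lemma, the $W_1$ half is therefore a genuine gap.

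For comparison, the paper's own proof handles $W_1$ by writing $W_1=\left(\P_\T(Z_L)-\P_{\tT}(Z_L)\right)+\left(\lambda_L\tU\tV^{T}-\lambda_LU^*V^{*T}\right)$, bounding the first term by $\rho_0\|Z_L\|_2\le(3-\alpha)\rho_0\lambda_L$ via the oracle constraint, and then asserting $\|\lambda_L\tU\tV^{T}-\lambda_LU^*V^{*T}\|_2\le\rho_0\lambda_L$ with the same label ``Oracle Constraint''. That last assertion is precisely the sign-matrix proximity you isolated, and, as your counterexample shows, it does not follow from subspace closeness; any honest justification must use optimality of $\tL$ (not mere feasibility of the oracle constraint), which the paper never invokes at this point. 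So your critique is sharp and in fact exposes an unjustified step in the paper's own argument; but since you identified the hole without filling it, your proposal does not yet constitute a proof of the lemma either.
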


\begin{proof}
First, notice that for all $M$, $\left\|\P_{\T}(M)\right\|_2\leq 2\left\|M\right\|_2$ and hence,
\begin{equation}
\begin{aligned}
\left\|Z_L\right\|_2&= \left\|\P_{\T}(\lambda_L\tU\tV^{T}) - W_0\right\|_2\qquad\text{(Construction of $W_2$)}\\
&\leq \left\|\P_{\T}(\lambda_L\tU\tV^{T})\right\|_2 + \left\|W_0\right\|_2 \qquad\text{(Triangle Inequality)}\\
&\leq 2\left\|\lambda_L\tU\tV^{T}\right\|_2 + \left\|W_0\right\|_2\qquad\text{(Projection Properties)} \\&\leq (3-\alpha)\lambda_L.\\
\end{aligned}
\nonumber
\end{equation}

Using this, we can bound both $W_1$ and $W_2$. For $W_2$, we have
\begin{equation}
\begin{aligned}
\left\|W_2\right\|_2&\leq \left\|\P_{\T^c}(\lambda_L\tU\tV^{T})\right\|_2 + \left\|W_0\right\|_2\qquad\text{(Triangle Inequality)}\\
&= \left\|\P_{\T^c}(\lambda_L\tU\tV^{T}-\lambda_LU^*V^{*T}-W_1)\right\|_2 + \left\|W_0\right\|_2\qquad\text{(Null Space of $\T^c$)}\\
&\leq \left\|\lambda_L\tU\tV^{T}-\lambda_LU^*V^{*T}-W_1\right\|_2 + \left\|W_0\right\|_2\qquad\text{(Projection Properties)}\\
&= \left\|\P_{\tT}(Z_L)-\P_\T(Z_L)\right\|_2 + \left\|W_0\right\|_2\qquad\text{(Construction)}\\
&\leq \rho_0\left\|Z_L\right\|_2 + \left\|W_0\right\|_2\qquad\text{(Oracle Constraint)}\\
&\leq \left((3-\alpha)\rho_0 + (1-\alpha)\right)\lambda_L < \lambda_L.\\
\end{aligned}
\nonumber
\end{equation}

Since $\P_{\tT}(Z_L)=\lambda_L\tU\tV^{T}$, we can establish
\begin{equation}
\begin{aligned}
\left\|W_1\right\|_2&= \left\|\P_\T(\lambda_L\tU\tV^{T})-\lambda_LU^*V^{*T}\right\|_2\\
&\leq \left\|\P_\T(Z_L)-\P_{\tT}(Z_L)\right\|_2 +\left\|\lambda_L\tU\tV^{T}-\lambda_LU^*V^{*T}\right\|_2\qquad\text{(Triangle inequality)}\\
&\leq \rho_0\left\|Z_L\right\|_2 + \rho_0\lambda_L \qquad\text{(Oracle Constraint)}\\ &\leq \left((3-\alpha)\rho_0 + \rho\right)\lambda_L.
\end{aligned}
\nonumber
\end{equation}

This concludes the proof of the lemma.\\
\end{proof}

\section{Proof of Lemma~\ref{lem:tildecond}}
\label{sec:lemtilde}

\noindent \textbf{General Notation:} For a matrix $X\in\mathbb{R}^{a\times b}$, we use  $X^{(1)},\ldots,X^{(a)}$ to denote rows, $X_1,\ldots,X_b$ to denote columns and $X_1^{(1)},\ldots,X_b^{(a)}$ to denote entries. Also, for the sets of indecies $\mathcal{S}_1\subseteq\{1,\cdot\cdot\cdot,a\}$ and $\mathcal{S}_2\subseteq\{1,\cdot\cdot\cdot,b\}$, the matrix $X_{\mathcal{S}_1\mathcal{S}_2}\in\real^{\left|\mathcal{S}_1\right|\times\left|\mathcal{S}_2\right|}$ represents the sub-matrix of $X$ consisting of the rows and columns corresponding to index sets $\mathcal{S}_1$ and $\mathcal{S}_2$.\\

We prove part (b) of the lemma. By triangle inequality, we have

\small\begin{equation}
\begin{aligned}
\left\|\tL-L^*\right\|_2&\leq \left\|\P_{\tT}(\tL-L^*)\!- \P_{\T}(\tL-L^*)\right\|_2 + \left\|\P_{\tT}(L^*)\! - \P_{\T}(L^*)\right\|_2\!+ \left\|\P_{\T}(\tL) - L^*\right\|_2\\ &\leq \rho_0\left\|\tL-L^*\right\|_2 + \rho_0\left\|L^*\right\|_2 + \left\|\P_{\T}(\tL) - L^*\right\|_2\qquad\text{(Oracle Constraint)}\\ &\leq \rho_0\left\|\tL-L^*\right\|_2 + \rho_0\left\|L^*\right\|_2 + \left\|\P_\T(\tU\tV^{T})-U^*V^{*T}\right\|_2\left\|\tL - L^*\right\|_2\qquad\text{(SVD)} \\ &\leq \rho_0\left\|\tL-L^*\right\|_2 + \rho_0\left\|L^*\right\|_2 + 4\rho_0\left\|\tL - L^*\right\|_2\qquad\text{by {\bf (C3)}}.
\end{aligned}
\nonumber
\end{equation}\normalsize

Hence,
\small\begin{equation}
\begin{aligned}
\left\|\tL-L^*\right\|_\infty \leq \left\|\tL-L^*\right\|_2 \leq \frac{\rho_0}{1-5\rho_0}\left\|L^*\right\|_2.
\end{aligned}
\label{eq:L-norm}
\end{equation}\normalsize\\

\bigskip

Let $Q^{(n)}=\frac{1}{n}\sum_{i=1}^nx(i)x(i)^T$ and $R^{(n)}=\frac{1}{n}\sum_{i=1}^nu(i)x(i)^T$. Substituting $x(i+1)-x(i)=\eta A^*x(i) + \eta B^*u(i) +w(i)$ and $L^*=B^*R^*(Q^*)^{-1}$ in {\bf (C5)}, we get

\small\begin{equation}
\begin{aligned}
(\tA\!-A^*)Q^{(n)}\! &+ (\tL-L^*)Q^{(n)}\!-  \underbrace{B^*\!\left(R^{(n)}-R^*(Q^*)^{-1}Q^{(n)}\right)}_{Y^{(n)}}- \underbrace{\frac{1}{n\eta}\sum_{i=1}^n w(i)x(i)^T}_{W^{(n)}} + \EstDual\!\! =\! 0.
\end{aligned}
\label{eq:err-bnd}
\end{equation}\normalsize

We can rewrite this equation as
\begin{equation}
\begin{aligned}
\P_{\O^c}(\tL-L^*)Q^{(n)} &+ (\tA-A^*+\P_\O(\tL-L^*))Q^{(n)}-  Y^{(n)} - W^{(n)} + \EstDual = 0.
\end{aligned}
\label{eq:err-bnd-clean}
\end{equation}

Let us only focus on the $\svert^{th}$ row of the system of equations \eqref{eq:err-bnd}. We can break down \eqref{eq:err-bnd} on the $\svert^{th}$ row into two sets of linear equations as follows:
\begin{equation}
\begin{aligned}
&(\tA-A^*+\tL-L^*)^{(\svert)}_{\S_\svert}Q^{(n)}_{\S_\svert\S_\svert}= -  (\tL-L^*)^{(\svert)}_{\S_\svert^c}Q^{(n)}_{\S_\svert^c\S_\svert} + Y^{(n)}_{\S_\svert} + W^{(n)}_{\S_\svert} - \EstDual_{\S_\svert}\\
&(\tA-A^*+\tL-L^*)^{(\svert)}_{\S_\svert}Q^{(n)}_{\S_\svert\S_\svert^c}= -  (\tL-L^*)^{(\svert)}_{\S_\svert^c}Q^{(n)}_{\S_\svert^c\S_\svert^c} + Y^{(n)}_{\S_\svert^c} + W^{(n)}_{\S_\svert^c} - \EstDual_{\S_\svert^c}.
\end{aligned}
\label{eq:err-bnd-row}
\end{equation}

From the first line, we get

\small\begin{equation}
\begin{aligned}
\tA-A^*= L^*-\tL -  \left((\tL-L^*)^{(\svert)}_{\S_\svert^c}Q^{(n)}_{\S_\svert^c\S_\svert} + Y^{(n)}_{\S_\svert} + W^{(n)}_{\S_\svert} - \EstDual_{\S_\svert}\right)\left(Q^{(n)}_{\S_\svert\S_\svert}\right)^{-1}\\
\end{aligned}
\nonumber
\end{equation}\normalsize

By Lemma~\ref{lem:Q-Incoherence-bound}, we have

\small\begin{equation}
\begin{aligned}
\left\|(\tL-L^*)^{(\svert)}_{\S_\svert^c}Q^{(n)}_{\S_\svert^c\S_\svert}\left(Q^{(n)}_{\S_\svert\S_\svert}\right)^{-1}\right\|_\infty \leq \left(1-\frac{\theta}{2}\right) \left\|\P_{\O^c}(\tL-L^*)\right\|_\infty\leq \left\|\tL-L^*\right\|_\infty.
\end{aligned}
\nonumber
\end{equation}\normalsize

Thus, by Lemma~\ref{lem:Q-Cmin-bound} and {\bf (C1)}, we get
\begin{equation}
\begin{aligned}
\left\|\tA-A^*\right\|_\infty&\leq \frac{2\rho_0}{1-5\rho_0}\left\|L^*\right\|_2 + \frac{\sqrt{s}}{\mathcal{C}_{\min}}\left(\left\|Y^{(n)}\right\|_\infty\!\!\!\! +\left\|W^{(n)}\right\|_\infty\!\!\!\! + \lambda_{A}\right)\\ &\leq \frac{2\rho_0}{1-5\rho_0}\left\|L^*\right\|_2 + \frac{(8-\theta)\lambda_A\sqrt{s}}{\mathcal{C}_{\min}(4-\theta)}\qquad\text{(Lemmas~\ref{lem:W-bound},\ref{lem:Y-bound})}\\
&\leq \left(\frac{\alpha\theta}{\mathcal{D}_{\max}\left(1+\frac{\mathcal{D}_{\max}}{\mathcal{C}_{\min}}\right)} +\frac{(8-\theta)\sqrt{s}}{\mathcal{C}_{\min}(4-\theta)}\right)\lambda_A=\nu\lambda_A.
\end{aligned}
\label{eq:A-norm}
\end{equation}
The last inequality follows from the definition of $\rho_0$. This concludes the proof of the lemma.

\begin{lemma}[Convex Optimality]\label{LemOptNoisy}
If $\hA$ is a solution of \eqref{eq:opt-orig} then there exists a matrix $\widehat{Z}\in\real^{p\times p}$, called \emph{dual variable}, such that $\widehat{Z}\in \lambda_A \partial \|\hA\|_{1}$ and $\widehat{Z}\in \lambda_L \partial \|\hL\|_{*}$ and 
\begin{equation}\label{EqnConvOptCond}
-\frac{1}{\eta n} \sum_{i=1}^n \left(x(i+1)-x(i)-\eta(\hA+\hL)x(i)\right)x(i)^T + \widehat{Z} = 0.
\end{equation}
\end{lemma}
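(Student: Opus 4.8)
The plan is to recognize this as nothing more than the first-order (subgradient) optimality condition for the convex program \eqref{eq:opt-orig}, where the only genuinely structural observation is that the smooth loss couples $A$ and $L$ solely through their sum. Write the objective as $F(A,L) = \ell(A+L) + \lambda_A\|A\|_1 + \lambda_L\|L\|_*$, where $\ell(M) := \frac{1}{2\eta^2 n}\sum_{i=0}^{n-1}\|x(i+1)-x(i)-\eta M x(i)\|_2^2$ is a convex quadratic in its matrix argument and the two regularizers are finite-valued convex functions on all of $\real^{p\times p}$. Since $F$ is convex, $(\hA,\hL)$ is a global minimizer if and only if $0$ lies in the subdifferential of $F$ at $(\hA,\hL)$.

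First I would compute the gradient of the smooth part. Differentiating $\ell$ gives $\grad\ell(M) = -\frac{1}{\eta n}\sum_{i}\left(x(i+1)-x(i)-\eta M x(i)\right)x(i)^T$, which evaluated at $M=\hA+\hL$ is exactly $-J_n$, the empirical matrix appearing in \eqref{EqnConvOptCond}. The crucial point is that, because $\ell$ depends on $(A,L)$ only through $A+L$, the partial gradient in the $A$-block and the partial gradient in the $L$-block are one and the same matrix, namely $\grad\ell(\hA+\hL) = -J_n$.

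Next, because the nonsmooth terms are separable across the two blocks and everywhere finite, the Moreau--Rockafellar sum rule applies without any constraint qualification, and the joint subdifferential factorizes blockwise. Stationarity $0\in\partial F(\hA,\hL)$ therefore splits into $J_n \in \lambda_A\,\partial\|\hA\|_1$ and $J_n \in \lambda_L\,\partial\|\hL\|_*$. Setting $\widehat{Z} := J_n$ simultaneously certifies membership in both subdifferentials, and substituting back yields $-\frac{1}{\eta n}\sum_{i}\left(x(i+1)-x(i)-\eta(\hA+\hL)x(i)\right)x(i)^T + \widehat{Z} = -J_n + J_n = 0$, which is precisely \eqref{EqnConvOptCond}.

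There is no real analytic obstacle: the loss is smooth and convex and the regularizers are finite convex, so existence of the dual variable is automatic. The single point to state carefully is the shared-gradient structure, i.e.\ that one dual matrix $\widehat{Z}$ must lie in both $\lambda_A\,\partial\|\hA\|_1$ and $\lambda_L\,\partial\|\hL\|_*$. This coupling, forced by the loss depending only on $A+L$, is exactly what makes $\widehat{Z}$ a common certificate, and it is the structural fact that the later witness construction in Lemma~\ref{LemWitnessOptCond} exploits.
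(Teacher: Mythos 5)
Your proof is correct and takes essentially the same approach as the paper: the paper's own proof is a one-line appeal to ``the standard first order optimality argument,'' which is precisely what you have written out in full --- the shared gradient $-J_n$ in both blocks (since the loss depends only on $A+L$), the subdifferential sum rule for the everywhere-finite convex regularizers, and the choice $\widehat{Z}=J_n$ as the common certificate. Nothing is missing; your write-up is simply the expanded version of the argument the paper leaves implicit.
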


\begin{proof}
The proof follows from the standard first order optimality argument.\\
\end{proof}

\section{Proof of Lemma~\ref{lem:TtTConnection}}

The result follows from our construction of $W_0$, $W_1$ and $W_2$ in the proof of Lemma~\ref{LemWitnessOptCond}. With our dual construction, we have $\P_{\tT}(J_n)=\lambda_L\tU\tV^T$ and hence, $J_n = \lambda_L\tU\tV^T + W_2$ and by construction, $J_n = \P_\T(\lambda_L\tU\tV^T) + W_0$ which entails $\P_\T(J_n)=\P_\T(\lambda_L\tU\tV^T)$. This concludes the proof of the lemma.\\

\section{Proof of Lemma~\ref{lem:dual-feasibility}}

Substituting $(\tA-A^*+\tL-L^*)^{(\svert)}_{\S_\svert}$ from the first equation in the second in \eqref{eq:err-bnd-row}, we get
\begin{equation}
\begin{aligned}
&\EstDual_{\S_\svert^c} = -  (\tL-L^*)^{(\svert)}_{\S_\svert^c}Q^{(n)}_{\S_\svert^c\S_\svert^c} + Y^{(n)}_{\S_\svert^c} + W^{(n)}_{\S_\svert^c}\\
&\qquad\qquad\qquad-\!\left(\!- (\tL-L^*)^{(\svert)}_{\S_\svert^c}Q^{(n)}_{\S_\svert^c\S_\svert}\!\!\! + Y^{(n)}_{\S_\svert}\!\! + W^{(n)}_{\S_\svert}\!\! - \EstDual_{\S_\svert}\right)\!\! \left(\!Q^{(n)}_{\S_\svert\S_\svert}\!\right)^{\!\!-1}\!\!\!\!\! Q^{(n)}_{\S_\svert\S_\svert^c}.
\end{aligned}
\nonumber
\end{equation}

By triangle inequality, we get

\small
\begin{equation}
\begin{aligned}
\left\|\P_{\O^c}(\EstDual)\right\|_\infty &\leq   \max_\svert\left\|(\tL-L^*)^{(\svert)}_{\S_\svert^c} \left(Q^{(n)}_{\S_\svert^c\S_\svert^c}- Q^{(n)}_{\S_\svert^c\S_\svert}\left(Q^{(n)}_{\S_\svert\S_\svert}\right)^{\!-1}\!\!\!\! Q^{(n)}_{\S_\svert\S_\svert^c}\right)\right\|_\infty\\ &\qquad+ \left\|Y^{(n)}\right\|_\infty + \left\|W^{(n)}\right\|_\infty \\ & \qquad +\max_\svert\left\|Q^{(n)}_{\S_\svert^c\S_\svert}\left(Q^{(n)}_{\S_\svert\S_\svert}\right)^{-1}\right\|_{\infty,1}\!\! \left(\left\|Y^{(n)}\right\|_\infty\!\!\!\! +\left\|W^{(n)}\right\|_\infty\!\!\!\! + \lambda_{A}\right)\\
&\leq \frac{2\rho_0}{1-5\rho_0}\left(1+\frac{\mathcal{D}_{\max}}{\mathcal{C}_{\min}}\right) \mathcal{D}_{\max}\left\|L^*\right\|_2 \qquad\text{(Lemma~\ref{lem:schur})}\\ 
&\qquad+ \frac{\theta\lambda_A}{4(4-\theta)} + \frac{\theta\lambda_A}{4(4-\theta)} \qquad\text{(Lemmas~\ref{lem:W-bound},\ref{lem:Y-bound})}\\ 
& \qquad +\left(1-\frac{\theta}{2}\right)\left(\frac{\theta\lambda_A}{4(4-\theta)} +\frac{\theta\lambda_A}{4(4-\theta)} + \lambda_A\right)\qquad\text{(Lemmas~\ref{lem:Q-Incoherence-bound},\ref{lem:W-bound},\ref{lem:Y-bound})}\\ 
&= \frac{2\rho_0}{1-5\rho_0}\left(1+\frac{\mathcal{D}_{\max}}{\mathcal{C}_{\min}}\right) \mathcal{D}_{\max}\left\|L^*\right\|_2 + \left(1 - \frac{\theta}{4}\right)\lambda_A\\ 
&\leq \left(1 - \frac{(1-\alpha)\theta}{4}\right)\lambda_A.
\end{aligned}
\nonumber
\end{equation}\normalsize
Hence, condition {\bf (C2)} is satisfied.\\

To show {\bf (C4)} also holds, notice that from \eqref{eq:err-bnd-clean}, we have
\begin{equation}
\begin{aligned}
\left\|\P_{\T^c}(\EstDual)\right\|_2&\quad\leq \left\|\P_{\T^c}\left((\tA\!+\tL\!-A^*\!-L^*)Q^{(n)}\right)\right\|_2\!\! + \left\|Y^{(n)}\right\|_2\!\! + \left\|W^{(n)}\right\|_2\\ &\quad\leq  \left\|\P_{\T^c}\left((\tA\!+\tL\!-A^*\!-L^*)Q^{(n)}\right)\right\|_2 + \frac{\theta\lambda_A\sqrt{p}}{2(4-\theta)}.
\end{aligned}
\nonumber
\end{equation}
The last inequality follows from Lemmas~\ref{lem:W-bound} and \ref{lem:Y-bound} and the fact that $Q^{(n)}$ on the support is invertible for the given sample complexity due to Lemma~\ref{lem:Q-Cmin-bound}.

Next, notice that $L^*=B^*R^*(Q^*)^{-1}$ and hence the row-space of $L^*$ is the column/row space of $Q^*$ and consequently, for any matrix $F\in\T$, we have $\P_{\T^c}(FQ^*)=0$. Thus, by triangle inequality, we have

\small\begin{equation}
\begin{aligned}
&\left\|\P_{\T^c}\left((\tA+\tL-A^*-L^*)Q^{(n)}\right)\right\|_2\\ &= \left\|\P_{\T^c}\left((\tA+\tL-A^*-L^*)\left(Q^{(n)}-Q^*\right)\right)\right\|_2 + \left\|\P_{\T^c}\left(\tA+\tL-A^*-L^*\right)Q^*\right\|_2\\
&\leq \left\|(\tA+\tL-A^*-L^*)\left(Q^{(n)}-Q^*\right)\right\|_2+ \left\|\tA+\tL-A^*-L^*\right\|_2\left\|Q^*\right\|_2\sqrt{p}\qquad\text{(Projection Properties)}\\
&\leq \left(\sqrt{s}\left\|\tA\!-A^*\right\|_\infty\!\!\!\!\!+\left\|\tL-L^*\right\|_2\right)\left(\sqrt{p} \left\|Q^{(n)}-Q^*\right\|_\infty+\mathcal{D}_{\max}\right)\sqrt{p}\qquad\text{(Triangle Inequality)}.\\
\end{aligned}
\nonumber
\end{equation}\normalsize

Finally, from \eqref{eq:A-norm}, \eqref{eq:L-norm} and Lemma~\ref{lem:Q-lambda-A}, we get
\small\begin{equation}
\begin{aligned}
\left\|\P_{\T^c}(\EstDual)\right\|_2&\leq \left(\frac{\theta\mathcal{C}_{\min}\sqrt{p}}{9s\sqrt{s}} +\mathcal{D}_{\max}\right)\left(\frac{3\alpha\theta\sqrt{s}} {2\mathcal{D}_{\max}\left(1+\frac{\mathcal{D}_{\max}}{\mathcal{C}_{\min}}\right)} +\frac{(8-\theta)s}{\mathcal{C}_{\min}(4-\theta)}\right)\lambda_A\sqrt{p} +\frac{\theta\lambda_A\sqrt{p}}{2(4-\theta)}\\ &\leq \theta(1-\alpha)\lambda_L \qquad\text{(By {\bf (A4-2)})}.
\end{aligned}
\nonumber
\end{equation}\normalsize

Hence, condition {\bf (C4)} is also satisfied. This concludes the proof of the lemma.

\section{Concentration Results}
\label{sec:concent}

\noindent In this section we prove the concentration results used throughout the paper. Before, we state the results, we want to introduce some useful notations and inequalities used to get the results. By the dynamics of the system, we have
\begin{equation}
X(i)=\left[\begin{array}{c} x(i) \\ u(i) \end{array}\right] = \left(I+\eta\mathcal{A}^*\right)^i \underbrace{\left[\begin{array}{c} x(0) \\ u(0) \end{array}\right]}_{X(0)} + \sum_{l=0}^{i-1}\left(I+\eta\mathcal{A}^*\right)^{i-l-1} w(l).
\nonumber
\end{equation}

\begin{lemma}
Under assumptions {\bf (A3)} and {\bf (A5)}, for any $\S\subseteq\{1,2,\ldots,p\}$ with $|\S|\leq s$, with high probability we have
\begin{equation}
\left\|\mathcal{Q}^{(n)}_{\S^c\S}\left(\mathcal{Q}^{(n)}_{\S\S}\right)^{-1}\right\|_{\infty,1}\leq 1 - \frac{\theta}{2}.
\nonumber
\end{equation}
\label{lem:Q-Incoherence-bound}
\end{lemma}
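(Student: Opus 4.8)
The plan is to view the empirical incoherence as a perturbation of the population incoherence supplied by assumption \textbf{(A3)}, absorbing the perturbation into the slack between $1-\theta$ and $1-\theta/2$. Write $E\defn\mathcal{Q}^{(n)}-\mathcal{Q}^*$ for the deviation of the empirical covariance from its steady-state value, and recall that $\|\cdot\|_{\infty,1}$ (the maximum absolute row sum) is a submultiplicative operator norm. For any $\S$ with $|\S|\le s$, assumption \textbf{(A3)} gives $\|\mathcal{Q}^*_{\S^c\S}(\mathcal{Q}^*_{\S\S})^{-1}\|_{\infty,1}\le 1-\theta$; moreover, by Cauchy interlacing $\Lambda_{\min}(\mathcal{Q}^*_{\S\S})\ge\Lambda_{\min}(\mathcal{Q}^*)=\mathcal{C}_{\min}$, so $\|(\mathcal{Q}^*_{\S\S})^{-1}\|_2\le 1/\mathcal{C}_{\min}$ and hence $\|(\mathcal{Q}^*_{\S\S})^{-1}\|_{\infty,1}\le\sqrt{s}/\mathcal{C}_{\min}$. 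It therefore suffices to control the difference
$$\Delta\defn\mathcal{Q}^{(n)}_{\S^c\S}(\mathcal{Q}^{(n)}_{\S\S})^{-1}-\mathcal{Q}^*_{\S^c\S}(\mathcal{Q}^*_{\S\S})^{-1}$$
and to show $\|\Delta\|_{\infty,1}\le\theta/2$ with high probability.

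The key algebraic step is the resolvent expansion of the perturbed inverse. Using the exact identity $(\mathcal{Q}^{(n)}_{\S\S})^{-1}=(\mathcal{Q}^*_{\S\S})^{-1}-(\mathcal{Q}^*_{\S\S})^{-1}E_{\S\S}(\mathcal{Q}^{(n)}_{\S\S})^{-1}$ and substituting $\mathcal{Q}^{(n)}_{\S^c\S}=\mathcal{Q}^*_{\S^c\S}+E_{\S^c\S}$, the difference $\Delta$ expands into the three error terms
$$E_{\S^c\S}(\mathcal{Q}^*_{\S\S})^{-1}-\mathcal{Q}^*_{\S^c\S}(\mathcal{Q}^*_{\S\S})^{-1}E_{\S\S}(\mathcal{Q}^{(n)}_{\S\S})^{-1}-E_{\S^c\S}(\mathcal{Q}^*_{\S\S})^{-1}E_{\S\S}(\mathcal{Q}^{(n)}_{\S\S})^{-1},$$
each at least linear in $E$. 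I would bound each term by submultiplicativity, using the two population bounds above, the elementwise estimates $\|E_{\S^c\S}\|_{\infty,1}\le s\|E\|_\infty$ and $\|E_{\S\S}\|_{\infty,1}\le s\|E\|_\infty$ (each row of either block has at most $s$ nonzero columns), and an empirical conditioning bound $\|(\mathcal{Q}^{(n)}_{\S\S})^{-1}\|_{\infty,1}\le 2\sqrt{s}/\mathcal{C}_{\min}$, which holds once $\|E_{\S\S}\|_2\le\mathcal{C}_{\min}/2$ by Weyl's inequality (this is exactly the content of a companion lemma such as Lemma~\ref{lem:Q-Cmin-bound}). Collecting terms yields $\|\Delta\|_{\infty,1}\lesssim s^{3/2}\|E\|_\infty/\mathcal{C}_{\min}$, up to a negligible quadratic-in-$E$ remainder.

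It then remains to certify that $\|E\|_\infty=\|\mathcal{Q}^{(n)}-\mathcal{Q}^*\|_\infty$ is small, of order $\theta\mathcal{C}_{\min}/s^{3/2}$, with probability at least $1-\delta$; this elementwise concentration is the principal obstacle and the genuinely novel part of the argument. Because the samples $x(i)$ are produced by the stochastic linear system \eqref{disc_system}, they are correlated, so standard i.i.d.\ concentration does not apply. Instead I would use the stability assumption \textbf{(A1)} — the spectral gap $\Sigma_{\max}<1$ forces the correlation between $x(i)$ and $x(j)$ to decay geometrically in $|i-j|$ — to obtain a sub-exponential tail for each entry of $\mathcal{Q}^{(n)}$ and then union bound over the $O((s+2r)p+r^2)$ relevant entries. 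Requiring $\|E\|_\infty\lesssim\theta\mathcal{C}_{\min}/s^{3/2}$ and squaring the resulting $\sqrt{\log p/(n\eta)}$ rate is precisely what forces the $s^3/(D^2\theta^2\mathcal{C}_{\min}^2)\log(\cdot)$ horizon demanded by \textbf{(A5)}. This concentration, which the later concentration lemmas deliver — partly through the Schur-complement perturbation bounds of \cite{STE} that make the $\log p$ (rather than $p$) sample dependence possible — is where the real work lies; the remaining steps are routine norm bookkeeping.
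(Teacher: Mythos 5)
Your proposal is correct and follows essentially the same route as the paper: the paper also bounds the empirical incoherence by the population incoherence from \textbf{(A3)} plus perturbation terms of order $s\sqrt{s}\,\|\mathcal{Q}^{(n)}-\mathcal{Q}^*\|_\infty/\mathcal{C}_{\min}$ (linear) and a negligible quadratic remainder, invoking Lemma~\ref{lem:Q-Cmin-bound} for invertibility of $\mathcal{Q}^{(n)}_{\S\S}$ and Lemma~\ref{lem:Q-bound} for the elementwise concentration $\|\mathcal{Q}^{(n)}-\mathcal{Q}^*\|_\infty\leq\theta\mathcal{C}_{\min}/(9s\sqrt{s})$. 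The only difference is that the paper cites this perturbation expansion from Lemma~A.1 of \cite{BENIBRMON}, whereas you derive it explicitly via the resolvent identity; the substance is identical.
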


\begin{proof}
Using Lemma~\ref{lem:Q-Cmin-bound}, it can be shown (see Lemma A.1 in \cite{BENIBRMON} for example) that

\small\begin{equation}
\begin{aligned}
\left\|\mathcal{Q}^{(n)}_{\S^c\S}\left(\mathcal{Q}^{(n)}_{\S\S}\right)^{-1}\right\|_{\infty,1}&\leq \left\|\mathcal{Q}^*_{\S^c\S}\left(\mathcal{Q}^*_{\S\S}\right)^{-1}\right\|_{\infty,1}\\ &\qquad+\frac{3|\S|\sqrt{|\S|}}{\mathcal{C}_{\min}}\left\|\mathcal{Q}^{(n)}-\mathcal{Q}^*\right\|_{\infty} +\frac{2|\S|^2\sqrt{|\S|}}{\mathcal{C}_{\min}^2}\left\|\mathcal{Q}^{(n)}-\mathcal{Q}^*\right\|_{\infty}^2.
\end{aligned}
\nonumber
\end{equation}\normalsize
The result follows from Lemma~\ref{lem:Q-bound}. This concludes the proof of the lemma.\\
\end{proof}

\begin{lemma}
Under assumption {\bf (A5)}, for any $\S\subseteq\{1,2,\ldots,p\}$ with $|\S|\leq s$, with high probability, we have
\begin{equation}
\Lambda_{\min}\left(\mathcal{Q}^{(n)}_{\S\S}\right)\geq \frac{\mathcal{C}_{\min}}{2}.
\nonumber
\end{equation}
\label{lem:Q-Cmin-bound}
\end{lemma}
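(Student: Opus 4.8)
The plan is to split a short deterministic eigenvalue argument from the probabilistic heavy lifting, which is entirely contained in the entrywise concentration of $\mathcal{Q}^{(n)}$. First I would record two deterministic facts. Because assumption {\bf (A1)} gives $\mathcal{Q}^*\succ 0$ with $\Lambda_{\min}(\mathcal{Q}^*)=\mathcal{C}_{\min}$, the Cauchy interlacing theorem applied to the principal submatrix $\mathcal{Q}^*_{\S\S}$ yields $\Lambda_{\min}(\mathcal{Q}^*_{\S\S})\geq\Lambda_{\min}(\mathcal{Q}^*)=\mathcal{C}_{\min}$. Then Weyl's inequality for the two symmetric $|\S|\times|\S|$ matrices $\mathcal{Q}^{(n)}_{\S\S}$ and $\mathcal{Q}^*_{\S\S}$ gives
\[
\Lambda_{\min}\!\left(\mathcal{Q}^{(n)}_{\S\S}\right)\;\geq\;\Lambda_{\min}\!\left(\mathcal{Q}^*_{\S\S}\right)-\left\|\mathcal{Q}^{(n)}_{\S\S}-\mathcal{Q}^*_{\S\S}\right\|_2\;\geq\;\mathcal{C}_{\min}-\left\|\mathcal{Q}^{(n)}_{\S\S}-\mathcal{Q}^*_{\S\S}\right\|_2 .
\]
Thus it suffices to show that the spectral-norm perturbation of the submatrix is at most $\mathcal{C}_{\min}/2$ with the stated probability.

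Next I would pass from the spectral norm to the entrywise norm so that the existing concentration result applies. For any $s\times s$ matrix $M$ we have $\|M\|_2\leq\|M\|_F\leq s\|M\|_\infty$, where $\|\cdot\|_\infty$ is the entrywise maximum. Applying this to $M=\mathcal{Q}^{(n)}_{\S\S}-\mathcal{Q}^*_{\S\S}$ and noting that its entries form a subset of the entries of $\mathcal{Q}^{(n)}-\mathcal{Q}^*$, we obtain $\|\mathcal{Q}^{(n)}_{\S\S}-\mathcal{Q}^*_{\S\S}\|_2\leq s\,\|\mathcal{Q}^{(n)}-\mathcal{Q}^*\|_\infty$. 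The key observation here is that this bound is \emph{uniform} in $\S$: a single control of the entrywise deviation $\|\mathcal{Q}^{(n)}-\mathcal{Q}^*\|_\infty$ dominates every $|\S|\leq s$ principal submatrix simultaneously, so no union bound over the $\binom{p}{s}$ choices of $\S$ is required. It therefore remains to guarantee $s\,\|\mathcal{Q}^{(n)}-\mathcal{Q}^*\|_\infty\leq\mathcal{C}_{\min}/2$.

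The last step invokes the entrywise concentration Lemma~\ref{lem:Q-bound}, which under the sample complexity {\bf (A5)} bounds $\|\mathcal{Q}^{(n)}-\mathcal{Q}^*\|_\infty$ by a quantity of order $\frac{1}{\sqrt{D}}\sqrt{\frac{1}{n\eta}\log\!\left(\frac{4((s+2r)p+r^2)}{\delta}\right)}$ with probability $1-\delta$. Substituting the horizon $n\eta\geq K s^3/(D^2\theta^2\mathcal{C}_{\min}^2)\cdot\log\!\left(\frac{4((s+2r)p+r^2)}{\delta}\right)$ from {\bf (A5)} makes $s\,\|\mathcal{Q}^{(n)}-\mathcal{Q}^*\|_\infty\leq\mathcal{C}_{\min}/2$ hold with room to spare (indeed, even a horizon scaling like $s^2\log p/(D\mathcal{C}_{\min}^2)$ would close the estimate, which is consistent with the remark that the $s^3$ rate is likely loose). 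I expect the only genuine obstacle to lie inside Lemma~\ref{lem:Q-bound} itself, namely establishing the $\log(p)/(n\eta)$ entrywise rate for the \emph{dependent} samples generated by the linear system via the Schur-complement perturbation bound of \cite{STE}; the eigenvalue reduction above is otherwise elementary.
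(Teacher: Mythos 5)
Your proof is correct and follows essentially the same route as the paper's: it likewise combines the Courant--Fischer/Weyl perturbation bound with interlacing ($\Lambda_{\min}(\mathcal{Q}^*_{\S\S})\geq\mathcal{C}_{\min}$), converts the spectral perturbation of the submatrix into the entrywise norm of the full matrix (the paper uses a factor $\sqrt{s}$ where you use $s$ via Frobenius; both close the bound with room to spare, since Lemma~\ref{lem:Q-bound} gives $\theta\mathcal{C}_{\min}/(9s\sqrt{s})$), and then invokes Lemma~\ref{lem:Q-bound} under {\bf (A5)}. The only inaccuracies are in your side remarks --- the entrywise rate behind Lemma~\ref{lem:Q-bound} scales as $1/D$ rather than $1/\sqrt{D}$, and that lemma rests on Gaussian/$\chi^2$ concentration rather than the Schur-complement perturbation bound of \cite{STE}, which the paper uses elsewhere --- but neither affects your argument.
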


\begin{proof}
By the Courant-Fischer variational representation \cite{HORJOH}, we have
\begin{equation}
\begin{aligned}
\Lambda_{\min}\left(\mathcal{Q}^{(n)}_{\S\S}\right) &\geq \Lambda_{\min}\left(\mathcal{Q}^*_{\S\S}\right) - \Lambda_{\max}\left(\mathcal{Q}^*_{\S\S} - \mathcal{Q}^{(n)}_{\S\S}\right)\\
&\geq \mathcal{C}_{\min} - \sqrt{s}\left\|\mathcal{Q}^* - \mathcal{Q}^{(n)}\right\|_\infty.
\end{aligned}
\nonumber
\end{equation}
The last inequality follows from Lemma~\ref{lem:Q-bound}. This concludes the proof of the lemma.\\
\end{proof}

\begin{lemma}
Under assumptions {\bf (A4)} and {\bf (A5)}, with high probability, we have
\begin{equation}
\left\|\mathcal{W}^{(n)}\right\|_\infty\leq\frac{\theta\lambda_A}{4(4-\theta)}.
\nonumber
\end{equation}
\label{lem:W-bound}
\end{lemma}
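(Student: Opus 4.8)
The plan is to bound $\|\mathcal{W}^{(n)}\|_\infty=\max_{a,b}|\mathcal{W}^{(n)}_{ab}|$ entrywise and then take a union bound. Writing out the $(a,b)$ entry gives
$$\mathcal{W}^{(n)}_{ab} = \frac{1}{n\eta}\sum_{i=1}^{n} w_a(i)\,X_b(i).$$
The first and most important observation is the conditional-independence structure induced by the dynamics: from the recursion in the preamble of Section~\ref{sec:concent}, $X(i)=(I+\eta\mathcal{A}^*)X(i-1)+w(i-1)$ depends only on $X(0),w(0),\dots,w(i-1)$, so $X_b(i)$ is measurable with respect to the filtration $\mathcal{F}_{i-1}=\sigma(X(0),w(0),\dots,w(i-1))$, whereas $w_a(i)\sim\N(0,\eta)$ is zero-mean and independent of $\mathcal{F}_{i-1}$. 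Hence $\{w_a(i)X_b(i)\}_{i\ge1}$ is a martingale difference sequence and $M_n:=\sum_{i=1}^{n}w_a(i)X_b(i)$ is a martingale with conditionally Gaussian increments. This is exactly the device that replaces the ``condition on the design matrix'' trick available in the i.i.d.\ setting, which is \emph{not} available here since $w(i)$ drives $X(i+1),X(i+2),\dots$.

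The key computation is the conditional moment generating function
$$\mathbb{E}\!\left[e^{s\,w_a(i)X_b(i)}\,\middle|\,\mathcal{F}_{i-1}\right]=\exp\!\left(\tfrac12 s^2\eta\,X_b(i)^2\right),$$
valid because, conditioned on $\mathcal{F}_{i-1}$, the increment is precisely $\N\!\big(0,\eta X_b(i)^2\big)$. Iterating shows that $M_n$ concentrates like a Gaussian whose variance is the predictable quadratic variation $\langle M\rangle_n=\eta\sum_{i=1}^n X_b(i)^2=n\eta\,\mathcal{Q}^{(n)}_{bb}$, which is itself random. I would therefore split on the event that $\langle M\rangle_n$ is controlled: a conditionally-Gaussian martingale tail bound yields $\Prob\big(|M_n|\ge t,\ \langle M\rangle_n\le V\big)\le 2\exp(-t^2/2V)$, and the complementary event is absorbed into the failure probability.

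To control $\langle M\rangle_n$ I would invoke the covariance-concentration result (Lemma~\ref{lem:Q-bound}), which under the sample complexity \textbf{(A5)} guarantees that $\|\mathcal{Q}^{(n)}-\mathcal{Q}^*\|_\infty$ is small with high probability; since $\mathcal{Q}^{(n)}_{bb}=\frac1n\sum_i X_b(i)^2$ and $\mathcal{Q}^*_{bb}\le\mathcal{D}_{\max}$, this forces $\langle M\rangle_n\le 2n\eta\,\mathcal{D}_{\max}$ on a high-probability event, so I may take $V=2n\eta\mathcal{D}_{\max}$. Combining the two bounds, each entry satisfies
$$\Prob\!\left(|\mathcal{W}^{(n)}_{ab}|\ge u\right)\le 2\exp\!\left(-\frac{n\eta\,u^2}{4\mathcal{D}_{\max}}\right)+\Prob\big(\langle M\rangle_n>2n\eta\mathcal{D}_{\max}\big).$$

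Finally I would union-bound over the $O\big((s+2r)p+r^2\big)$ relevant index pairs and set $u=\frac{\theta\lambda_A}{4(4-\theta)}$. Substituting the definition of $\lambda_A$ from \textbf{(A4-1)}, the factor $n\eta$ cancels inside the exponent, leaving $n\eta u^2/(4\mathcal{D}_{\max})$ equal to a constant multiple of $\log\!\big(4((s+2r)p+r^2)/\delta\big)$, so the aggregated failure probability is at most $\delta$ up to the $c_1\exp(-c_2 n)$-type contribution from the quadratic-variation event. The bookkeeping here is exactly what makes the universal constant $K$ in \textbf{(A5)} and the $80/\sqrt{D}$ factor in \textbf{(A4)} appear. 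I expect the main obstacle to be precisely the correlated-samples issue: the martingale formulation together with the event-split on $\langle M\rangle_n$ is what circumvents the lack of independence, and making its constants line up with \textbf{(A4)}--\textbf{(A5)} so that the bound lands at exactly $\frac{\theta\lambda_A}{4(4-\theta)}$ is the delicate part.
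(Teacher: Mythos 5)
Your proposal is correct, but it takes a genuinely different route from the paper's. The paper substitutes the explicit solution $X(i)=(I+\eta\mathcal{A}^*)^iX(0)+\sum_{l<i}(I+\eta\mathcal{A}^*)^{i-l-1}w(l)$ and splits $\mathcal{W}^{(n)}$ into an initial-condition term $\frac{1}{\eta n}\sum_i w(i)E_1(i)$ and a noise-driven term $\frac{1}{\eta n}\sum_i w(i)E_2(i)$, giving each term half the budget, $\frac{\theta\lambda_A}{8(4-\theta)}$: the first is Gaussian with variance bounded deterministically (given $X(0)$) by a geometric series in $\Sigma_{\max}$, while the second is handled by essentially the conditional-Gaussian device you describe, with its quadratic variation bounded directly by $\chi^2$ concentration of the $w(l)$'s rather than by any covariance lemma. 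Your version instead treats each entry as a single martingale $M_n$ with $\langle M\rangle_n=n\eta\,\mathcal{Q}^{(n)}_{bb}$ and outsources the control of $\langle M\rangle_n$ to Lemma~\ref{lem:Q-bound}; this is legitimate (no circularity, since Lemma~\ref{lem:Q-bound} is proved independently of this lemma) and arguably cleaner, because the initial-condition effect is absorbed into the covariance concentration instead of being tracked by hand. What your route costs is that the exponent becomes $n\eta u^2/(4\mathcal{D}_{\max})$, so to land at failure probability $\delta$ you must verify that $4m^2/(D\,\mathcal{D}_{\max})\geq 1$; this does hold --- the discrete Lyapunov equation gives $\mathcal{Q}^*=\eta\sum_{l\geq 0}(I+\eta\mathcal{A}^*)^l\left((I+\eta\mathcal{A}^*)^T\right)^l$, whence $\mathcal{D}_{\max}\leq\eta/(1-\Sigma_{\max}^2)=1/D$, and $m\geq 1$ by its second term --- but it is exactly the piece of bookkeeping you flagged and left open. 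By contrast, the paper's decomposition makes the constants line up by construction: the two pieces of $m$ in {\bf (A4)}, namely $\|x(0)\|_2^2+\|u(0)\|_2^2$ and $(\sqrt{\eta}+1)^2$, are precisely the variance bounds for the $E_1$ and $E_2$ terms.
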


\begin{proof}
Let $X(i) = \left[ x(i)\,\, u(i)\right]^T$. According to the dynamics of the system, we have
\begin{equation}
\begin{aligned}
\mathcal{W}^{(n)} &= \frac{1}{\eta n}\sum_{i=0}^{n-1}w(i)\underbrace{X(0)^T\left((I+\eta \mathcal{A}^*)^i\right)^T}_{E_1(i)}\\ &\qquad+\frac{1}{\eta n}\sum_{i=1}^{n-1}w(i) \underbrace{\sum_{l=0}^{i-1}w(l)^T\left((I+\eta \mathcal{A}^*)^{i-l-1}\right)^T}_{E_2(i)}.
\end{aligned}
\nonumber
\end{equation}
We bound these two terms separately. Notice that $w(i)$ is distributed $\N(0,\eta I)$ independent of $x(0)$ and $w(j)$'s. Given $x(0)$, we have
\begin{equation}
w(i)_jE_1(i)^{(k)} \sim\N\left(0,\eta\left(E_1(i)^{(k)}\right)^2\right).
\nonumber
\end{equation}
By stability assumption, we have $\left(E_1(i)^{(k)}\right)^2\leq(\|x(0)\|_2^2+\|u(0)\|_2^2)\Sigma_{\max}^{2i}$ and hence,
\begin{equation}
\begin{aligned}
\text{VAR}\left(\frac{1}{\eta n}\sum_{i=0}^{n-1}w(i)_jE_1(i)^{(k)}\right) &\leq\frac{1}{\eta^2n^2}\sum_{i=0}^{n-1}\text{VAR}\left(w(i)_jE_1(i)^{(k)}\right)\\
&\leq\frac{\|x(0)\|_2^2+\|u(0)\|_2^2}{\eta n(1-\Sigma_{\max}^2)}.
\end{aligned}
\nonumber
\end{equation}
Consequently, by standard concentration of Gaussian random variables and union bound, we get
\begin{equation}
\begin{aligned}
\mathbb{P}\left[\left\|\frac{1}{\eta n}\sum_{i=0}^{n-1}w(i)E_1(i)\right\|_\infty\geq\epsilon\right] &\leq\sum_{j=1}^p\sum_{k=1}^p\mathbb{P}\left[\left|\frac{1}{\eta n}\sum_{i=0}^{n-1}w(i)_jE_1(i)^{(k)}\right|\geq\epsilon\right]\\
&\leq 2\exp\Big(-\frac{\epsilon^2(1-\Sigma_{\max}^2)}{2\left(\|x(0)\|_2^2+\|u(0)\|_2^2\right)}\eta n+\log((s+2r)p+r^2)\Big).
\end{aligned}
\nonumber
\end{equation}

With similar analysis, we get
\begin{equation}
\begin{aligned}
\text{VAR}\left(\frac{1}{\eta n}\sum_{i=0}^{n-1}w(i)_jE_2(i)^{(k)}\right)&\leq\frac{1}{\eta^2n^2}\sum_{i=0}^{n-1}\text{VAR}\left(w(i)_jE_2(i)^{(k)}\right)\\
&\leq\frac{\left(\sqrt{\eta}+1\right)^2}{\eta n(1-\Sigma_{\max}^2)}.
\end{aligned}
\nonumber
\end{equation}
The last inequality follows from the concentration of $\chi^2$ random variables \cite{LAUMAS}, in particular,
\begin{equation}
\begin{aligned}
\mathbb{P}\left[\frac{1}{\eta n}\sum_{l=0}^{n-2}\|w(l)_jE_2(l)^{(k)}\|_2^2\geq\frac{\left(1+\sqrt{\eta}\right)^2}{1-\Sigma_{\max}^2}\right] \leq \exp\left(-\frac{1}{2}\eta n+\log((s+2r)p+r^2)\right).
\end{aligned}
\nonumber
\end{equation}
Finally, we get
\begin{equation}
\begin{aligned}
\mathbb{P}\left[\left\|\frac{1}{\eta n}\sum_{i=0}^{n-1}w(i)E_2(i)\right\|_\infty\geq\epsilon\right]&\leq\sum_{j=1}^p\sum_{k=1}^p\mathbb{P}\left[\left|\frac{1}{\eta n}\sum_{i=0}^{n-1}w(i)_jE_2(i)^{(k)}\right|\geq\epsilon\right]\\
&\leq 2\exp\left(-\frac{\epsilon^2(1-\Sigma_{\max}^2)}{2\left(\sqrt{\eta}+1\right)^2}\eta n+\log((s+2r)p+r^2)\right).
\end{aligned}
\nonumber
\end{equation}

The result follows for $\epsilon=\frac{\theta\lambda_A}{8(4-\theta)}$. This concludes the proof of the lemma.\\
\end{proof}

\begin{lemma}
Under assumptions {\bf (A4)} and {\bf (A5)}, with high probability, we have
\begin{equation}
\left\|Y^{(n)}\right\|_\infty\leq \frac{\theta\lambda_A}{4(4-\theta)}.
\nonumber
\end{equation}
\label{lem:Y-bound}
\end{lemma}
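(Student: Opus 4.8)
The plan is to peel the matrix $B^*$ off of $Y^{(n)}$ and reduce the claim to a dimension-free entrywise concentration bound on a residual cross-covariance, which I then control by the same dynamics-expansion argument used in the proof of Lemma~\ref{lem:W-bound}.

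First I would rewrite the inner factor of $Y^{(n)}$ as a single residual average. Because $R^*=R^*(Q^*)^{-1}Q^*$, the matrix $M := R^{(n)}-R^*(Q^*)^{-1}Q^{(n)}$ satisfies $M=\frac1n\sum_{i=1}^n v(i)x(i)^T$, where $v(i):=u(i)-R^*(Q^*)^{-1}x(i)$ is the residual obtained by regressing the latent state on the observed state. Since $Y^{(n)}=B^*M$ and each entry of $B^*M$ pairs one row of $B^*$ with one column of $M$, we have $\|Y^{(n)}\|_\infty\le\|B^*\|_{\infty,1}\|M\|_\infty$, so it suffices to prove the entrywise bound $\|M\|_\infty\le\frac{\theta\lambda_A}{4(4-\theta)\|B^*\|_{\infty,1}}$. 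The crucial feature of this step is that pulling out $\|B^*\|_{\infty,1}$ keeps everything entrywise and avoids any operator-norm or $\sqrt p$ factor that would appear if I instead tried to bound $\|R^*(Q^*)^{-1}\|$ directly; this is exactly why $m$ in \textbf{(A4)} carries the factor $\|B^*\|_{\infty,1}$.

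Next, for a fixed index pair $(a,k)$ I would apply concentration to the scalar $M_{ak}=\frac1n\sum_i v_a(i)x_k(i)$. At stationarity $v_a(i)$ is independent of $x(i)$ with variance equal to the $(a,a)$ Schur-complement entry of $\mathcal Q^*$ (hence at most $\mathcal D_{\max}$), so $\mathbb{E}[M_{ak}]$ is zero in steady state and small and geometrically decaying in the transient. Mirroring Lemma~\ref{lem:W-bound}, I would substitute the closed form $X(i)=(I+\eta\mathcal A^*)^iX(0)+\sum_{l=0}^{i-1}(I+\eta\mathcal A^*)^{i-l-1}w(l)$ into $v_a(i)$ and $x_k(i)$, split $M_{ak}$ into an initial-condition piece and a noise-driven piece, use the stability bound $\sigma_{\max}(I+\eta\mathcal A^*)=\Sigma_{\max}<1$ to damp the temporal dependence geometrically, and bound the variance of each piece by a quantity of the same order as in that lemma. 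Standard Gaussian and $\chi^2$ tail inequalities (as in \cite{LAUMAS}) applied to the two pieces, together with a union bound over the at most $(s+2r)p+r^2$ relevant entries, then yield $\Prob\left[\|M\|_\infty\ge\epsilon\right]\le c_1\exp\left(-c_2\,\epsilon^2(1-\Sigma_{\max}^2)\,\eta n+\log((s+2r)p+r^2)\right)$.

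Finally I would take the per-piece deviation $\epsilon=\frac{\theta\lambda_A}{8(4-\theta)\|B^*\|_{\infty,1}}$ so that the two pieces sum to $\|M\|_\infty\le\frac{\theta\lambda_A}{4(4-\theta)\|B^*\|_{\infty,1}}$, giving the claimed $\|Y^{(n)}\|_\infty\le\frac{\theta\lambda_A}{4(4-\theta)}$. Substituting the definitions from \textbf{(A4)}, the proportionality $\lambda_A\propto m$ together with $m\ge\frac{80}{\sqrt D}\|B^*\|_{\infty,1}$ cancels the $\|B^*\|_{\infty,1}$ and reduces $\epsilon$ to a universal constant times $\frac1{\sqrt D}\sqrt{\log((s+2r)p+r^2)/(n\eta)}$; the sample-complexity assumption \textbf{(A5)} makes the negative exponent dominate the $\log$ term, so the bound holds with probability at least $1-\delta$. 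I expect the main obstacle to be precisely this concentration step: unlike $\mathcal W^{(n)}$, the product $v(i)x(i)^T$ is \emph{not} of the form (fresh innovation)$\times$(past state) --- $v(i)$ and $x(i)$ are driven by the same Brownian history --- so there is no clean martingale-difference structure to exploit, and it is the dynamics expansion plus the geometric damping from $\Sigma_{\max}<1$ that absorbs this sample dependence while still delivering the $\sqrt{\log p/(n\eta)}$ rate. The constant $80$ and the $1/\sqrt D$ in $m$ are what force the final constants to collapse to $\theta/[4(4-\theta)]$.
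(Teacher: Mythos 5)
Your algebraic setup is exactly the paper's: writing $Y^{(n)}=B^*M$ with $M=R^{(n)}-R^*(Q^*)^{-1}Q^{(n)}$, peeling off $\|B^*\|_{\infty,1}$ to keep everything entrywise and dimension-free, and observing that the factor $\frac{80}{\sqrt{D}}\|B^*\|_{\infty,1}$ inside $m$ (hence inside $\lambda_A$) is what cancels $\|B^*\|_{\infty,1}$ at the end --- all of this matches how \textbf{(A4)}--\textbf{(A5)} are actually used. The divergence, and the genuine gap, is in how you control $\|M\|_\infty$. You propose a fresh concentration argument for $M_{ak}=\frac1n\sum_i v_a(i)x_k(i)$ ``mirroring Lemma~\ref{lem:W-bound},'' but, as you yourself concede, that lemma's argument rests on the structure (fresh innovation)$\times$(function of the past): conditioned on the past, each summand there is Gaussian with an explicit variance, which is what makes the Gaussian and $\chi^2$ tail bounds apply term by term. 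The products $v_a(i)x_k(i)$ have no such structure --- both factors are driven by the same Brownian history and the summands are correlated across $i$ --- so the Lemma~\ref{lem:W-bound} machinery does not transfer; what is actually required is a quadratic-form (empirical second-moment) concentration of the kind proved in Lemma~\ref{lem:Q-bound}, and your plan would amount to re-deriving that lemma for the cross term. As written, your central inequality $\Prob\left[\|M\|_\infty\ge\epsilon\right]\le c_1\exp\left(-c_2\,\epsilon^2(1-\Sigma_{\max}^2)\,\eta n+\log((s+2r)p+r^2)\right)$ is asserted, not proved.

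The paper closes this step with no new concentration at all. Since $R^*=R^*(Q^*)^{-1}Q^*$, one has the deterministic identity $M=\left(R^{(n)}-R^*\right)-R^*(Q^*)^{-1}\left(Q^{(n)}-Q^*\right)$, whence
\begin{equation}
\left\|Y^{(n)}\right\|_\infty\leq \left\|B^*\right\|_{\infty,1}\left(1+\frac{\mathcal{D}_{\max}}{\mathcal{C}_{\min}}\right)\left\|\mathcal{Q}^{(n)}-\mathcal{Q}^*\right\|_\infty,
\nonumber
\end{equation}
using $\sigma_{\max}\left(R^*(Q^*)^{-1}\right)\leq\mathcal{D}_{\max}/\mathcal{C}_{\min}$; the right-hand side is then controlled by Lemma~\ref{lem:Q-lambda-A} (itself a corollary of the bound \eqref{eq:Q-fundamental-bound} established in Lemma~\ref{lem:Q-bound}), whose constant is tailored so that the product is exactly $\frac{\theta\lambda_A}{4(4-\theta)}$. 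In other words, your residual process $v(i)$ is a correct way to see that $\mathbb{E}[M]=0$ at stationarity, but the efficient route is to expand $M$ back into deviations of the sub-blocks $R^{(n)},Q^{(n)}$ of the full empirical covariance $\mathcal{Q}^{(n)}$ and quote the already-established covariance concentration, rather than to concentrate the dependent products directly. Your route could be completed, but only by carrying out the Lemma~\ref{lem:Q-bound}-type analysis that your proposal defers.
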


\begin{proof}
We can establish
\begin{equation}
Y^{(n)} = \underbrace{B^*\left(R^{(n)} - R^*\right)} + \underbrace{B^*R^*(Q^{*})^{-1}\left(Q^* - Q^{(n)}\right)}.
\nonumber
\end{equation}
We bound these two terms separately. For the first term, we have
\begin{equation}
\left\|B^*\left(R^*-R^{(n)}\right)\right\|_\infty \leq \left\|B^*\right\|_{\infty,1}\left\|\mathcal{Q}^* - \mathcal{Q}^{(n)}\right\|_\infty.\\
\nonumber
\end{equation}

For the second term, we have 
\begin{equation}
\begin{aligned}
\left\|B^*R^*(Q^*)^{-1}\left(Q^*-Q^{(n)}\right)\right\|_\infty&\leq \left\|B^*R^*(Q^*)^{-1}\right\|_{\infty,1}\left\|Q^*-Q^{(n)}\right\|_\infty\\ &\leq \left\|B^*\right\|_{\infty,1}\sigma_{\max}\left(R^*(Q^*)^{-1}\right)\left\|Q^*-Q^{(n)}\right\|_\infty\\
&\leq \left\|B^*\right\|_{\infty,1}\frac{\mathcal{D}_{\max}}{\mathcal{C}_{\min}}\left\|Q^*-Q^{(n)}\right\|_\infty.
\end{aligned}
\nonumber
\end{equation}

The result follows from Lemma~\ref{lem:Q-lambda-A}. This concludes the proof of the lemma.\\
\end{proof}

\begin{lemma}
Under assumption {\bf (A5)}, with high probability, we have
\begin{equation}
\left\|\mathcal{Q}^* - \mathcal{Q}^{(n)}\right\|_\infty\leq \frac{\theta\,\mathcal{C}_{\min}}{9\,s\sqrt{s}}.
\nonumber
\end{equation}
\label{lem:Q-bound}
\end{lemma}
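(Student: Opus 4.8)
\noindent The plan is to control $\mathcal{Q}^{(n)}-\mathcal{Q}^*$ entrywise and then union bound over the $(s+2r)p+r^2$ entries that actually enter the analysis, exactly as in the previous concentration lemmas. Using the expansion $X(i)=(I+\eta\mathcal{A}^*)^iX(0)+\sum_{l=0}^{i-1}(I+\eta\mathcal{A}^*)^{i-l-1}w(l)$ stated at the start of this section, I split $X(i)=a(i)+b(i)$ into the deterministic transient $a(i)=(I+\eta\mathcal{A}^*)^iX(0)$ and the stochastic part $b(i)$ driven by the independent Gaussian increments $w(0),\dots,w(i-1)$. Expanding $X(i)X(i)^T$ produces three kinds of terms: the transient $a(i)a(i)^T$, the cross terms $a(i)b(i)^T+b(i)a(i)^T$ that are linear in the noise, and the quadratic term $b(i)b(i)^T$. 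Since $\mathcal{Q}^*$ is the discrete steady-state covariance solving $\mathcal{A}^*\mathcal{Q}^*+\mathcal{Q}^*\mathcal{A}^{*T}+\eta\mathcal{A}^*\mathcal{Q}^*\mathcal{A}^{*T}+I=0$, it equals $\lim_{i\to\infty}\mathbb{E}[b(i)b(i)^T]$, so the comparison is naturally organized around the average $\frac1n\sum_i b(i)b(i)^T$.

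\noindent The transient and cross terms are routine. Stability gives $\|a(i)\|_2^2\le(\|x(0)\|_2^2+\|u(0)\|_2^2)\Sigma_{\max}^{2i}$, so $\frac1n\sum_i|a_j(i)a_k(i)|\le\frac{\|x(0)\|_2^2+\|u(0)\|_2^2}{n(1-\Sigma_{\max}^2)}$, which is negligible under the horizon imposed by \textbf{(A5)}. Conditionally on $X(0)$, the cross terms are centered Gaussians whose variances again decay geometrically in $i$; summing the geometric series and applying the scalar Gaussian tail bound exactly as in the proof of Lemma~\ref{lem:W-bound} controls them with the same order of failure probability.

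\noindent The crux is the quadratic term $\frac1n\sum_i b(i)b(i)^T$. Stacking the increments into a single (conditionally) Gaussian vector $\xi=(w(0)^T,\dots,w(n-1)^T)^T$, each entry $\frac1n\sum_i b_j(i)b_k(i)$ becomes a quadratic form $\xi^TM_{jk}\xi$, where $M_{jk}$ is assembled from products of the powers $(I+\eta\mathcal{A}^*)^{i-l-1}$. Two pieces must be bounded. The bias $\frac1n\sum_i\mathbb{E}[b(i)b(i)^T]-\mathcal{Q}^*$ is of order $1/(n(1-\Sigma_{\max}^2))$ entrywise, because $\mathbb{E}[b(i)b(i)^T]$ approaches $\mathcal{Q}^*$ at the geometric rate $\Sigma_{\max}^{2i}$ and its Ces\`aro average inherits only an $O(1/n)$ correction. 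The fluctuation $\xi^TM_{jk}\xi-\mathbb{E}[\xi^TM_{jk}\xi]$ is handled by a Hanson--Wright / $\chi^2$ concentration inequality (the same $\chi^2$ bound of \cite{LAUMAS} used in Lemma~\ref{lem:W-bound}), which requires estimating $\|M_{jk}\|_2$ and $\|M_{jk}\|_F$. Summing the geometric series $\sum_{i,i'}\Sigma_{\max}^{2|i-i'|}$ generated by the correlations between $b(i)$ and $b(i')$ shows that the effective number of independent samples is of order $(1-\Sigma_{\max}^2)\,n=\eta D\,n=D\,T$ rather than $n$, which is precisely where the sampling-frequency penalty enters. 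This yields an entrywise tail of the form $\exp(-c\,\epsilon^2\,n_{\mathrm{eff}}+\log((s+2r)p+r^2))$ with $n_{\mathrm{eff}}\propto(1-\Sigma_{\max}^2)\,n$.

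\noindent Combining the three contributions, taking a union bound over the $(s+2r)p+r^2$ relevant entries, setting $\epsilon=\frac{\theta\,\mathcal{C}_{\min}}{9\,s\sqrt{s}}$, and substituting the sample-complexity condition \textbf{(A5)}, namely $T=\eta n\ge\frac{K s^3}{D^2\theta^2\mathcal{C}_{\min}^2}\log(4((s+2r)p+r^2)/\delta)$, drives the exponent below $\log\delta$ and delivers the claim with the stated high probability. I expect the Hanson--Wright step to be the main obstacle: because the samples are dependent one cannot appeal to standard i.i.d.\ matrix concentration, and the entire result hinges on extracting the correct effective-sample-size factor $1-\Sigma_{\max}^2=\eta D$ (and hence the $D$-dependence that must be absorbed by the denominator of \textbf{(A5)}) from the Frobenius norm of $M_{jk}$ via the geometric decay of the transition operator $(I+\eta\mathcal{A}^*)$; by contrast the transient and bias terms reduce to elementary geometric-series estimates.
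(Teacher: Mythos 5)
Your proposal is correct in outline and follows the paper's decomposition step for step: the paper also splits $\mathcal{Q}^{(n)}-\mathcal{Q}^*$ into the deterministic transient $\frac1n\sum_i\mu(i)\mu(i)^T$ (with $\mu(i)=(I+\eta\mathcal{A}^*)^iX(0)$, i.e.\ your $a(i)a(i)^T$ term), a bias term $E_1=\frac1n\sum_i\mathbb{E}\left[(X(i)-\mu(i))(X(i)-\mu(i))^T\right]-\mathcal{Q}^*$, and a fluctuation term $E_2$, and it bounds the first two by exactly your geometric-series arguments (it even computes the Ces\`{a}ro average of the covariances in closed form, giving $\|E_1\|_\infty\le\eta/(n(1-\Sigma_{\max}^2)^2)$, matching your $O(1/(n(1-\Sigma_{\max}^2)))$ claim up to a factor of $1/D$). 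The genuine difference is the concentration tool for the quadratic term. Where you stack the increments and invoke Hanson--Wright on quadratic forms $\xi^TM_{jk}\xi$, the paper regroups the double sum as $\frac1n\sum_i b(i)b(i)^T=\sum_{j}(I+\eta\mathcal{A}^*)^j\bigl(\tfrac{n-j}{n}V_j\bigr)\bigl((I+\eta\mathcal{A}^*)^j\bigr)^T$ with $V_j=\frac{1}{n-j}\sum_{i}w(i)w(i)^T$, applies an entrywise Gaussian sample-covariance bound (Lemma 1 of \cite{RAVWAIRASYU}) to each i.i.d.\ block $V_j$, and sums the resulting geometric series in $j$; both routes extract the same effective sample size $\propto(1-\Sigma_{\max}^2)n=DT$ and the same tail $\exp\left(-c\epsilon^2 n_{\mathrm{eff}}+\log((s+2r)p+r^2)\right)$, followed by the identical endgame (union bound over entries, $\epsilon=\theta\mathcal{C}_{\min}/(9s\sqrt{s})$, absorb the $D$-dependence via \textbf{(A5)}). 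What each buys: the paper's regrouping avoids quadratic-form machinery by exploiting the i.i.d.\ increments, but as written it is not an exact identity --- it retains only the diagonal products $w(l)w(l)^T$ and silently drops the cross terms $w(l)w(l')^T$ for $l\ne l'$, and its three-term decomposition likewise omits the transient--noise cross terms $\frac1n\sum_i\left(\mu(i)b(i)^T+b(i)\mu(i)^T\right)$ altogether. Your plan keeps both kinds of cross terms explicitly (the latter bounded as in Lemma~\ref{lem:W-bound}) and handles all noise correlations at once through $\|M_{jk}\|_F$, so it is actually the more complete argument, at the price of heavier machinery.
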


\begin{proof}
Let $X(i) = \left[ x(i)\,\, u(i)\right]^T$. Let $\mu(i) = \mathbb{E}\left[X(i)\right]$ (clearly, $\mu(\infty)=0$). We have 
\small\begin{equation}
\begin{aligned}
\mathcal{Q}^{(n)} - \mathcal{Q}^* &= \underbrace{\frac{1}{n}\sum_{i=0}^{n-1}\mu(i)\mu(i)^T} + \underbrace{\frac{1}{n}\sum_{i=0}^{n-1}\mathbb{E}\left[\left(X(i)-\mu(i)\right)\left(X(i)-\mu(i)\right)^T\right] - \mathcal{Q}^*}_{E_1}\\ &\qquad+ \underbrace{\frac{1}{n}\sum_{i=0}^{n-1}\left(X(i)-\mu(i)\right)\left(X(i)-\mu(i)\right)^T-(E_1+\mathcal{Q}^*)}_{E_2}.
\end{aligned}
\nonumber
\end{equation}\normalsize
We bound these three terms, separately. For the first term, we have
\begin{equation}
\begin{aligned}
\left\|\frac{1}{n}\sum_{i=0}^{n-1}\mu(i)\mu(i)^T\right\|_\infty &\leq \frac{1}{n}\sum_{i=0}^{n-1}\Sigma_{\max}^{2i}\left(\left\|x(0)\right\|_2^2+\left\|u(0)\right\|_2^2\right)\\
&\leq\frac{\left\|x(0)\right\|_2^2+\left\|u(0)\right\|_2^2}{n(1-\Sigma_{\max}^2)}.
\end{aligned}
\nonumber
\end{equation}

For the second term, notice that by independency assumption on $w$, we have
\begin{equation}
\begin{aligned}
&\frac{1}{n}\sum_{i=0}^{n-1}\mathbb{E}\left[\left(X(i)-\mu(i)\right)\left(X(i)-\mu(i)\right)^T\right]\\ &\qquad\qquad= \frac{\eta}{n}\sum_{i=0}^{n-1}\sum_{l=0}^{i-1}\left(I+\eta\mathcal{A}^*\right)^{2l}\\
&\qquad\qquad= \frac{\eta}{n}\sum_{i=0}^{n-1}\left(I-\left(I+\eta\mathcal{A}^*\right)^{2i}\right) \left(I-\left(I+\eta\mathcal{A}^*\right)^2\right)^{-1}\\
&\qquad\qquad= \eta\left(\frac{n-1}{n}I - \left(I+\eta\mathcal{A}^*\right)^{2} +\frac{1}{n}\left(I+\eta\mathcal{A}^*\right)^{2n}\right)\left(I-\left(I+\eta\mathcal{A}^*\right)^2\right)^{-2}.
\end{aligned}
\nonumber
\end{equation}
On the other hand, we have
\begin{equation}
\begin{aligned}
\mathcal{Q}^* &= \mathbb{E}\left[\lim_{i\rightarrow\infty}\left(X(i)-\mu(i)\right)\left(X(i)-\mu(i)\right)^T\right] \\ &= \lim_{i\rightarrow\infty}\mathbb{E}\left[\left(X(i)-\mu(i)\right)\left(X(i)-\mu(i)\right)^T\right]\\
&= \lim_{i\rightarrow\infty}\eta\sum_{l=0}^{i-1}\left(I+\eta\mathcal{A}^*\right)^{2l}\\
&= \lim_{i\rightarrow\infty}\eta\left(I-\left(I+\eta\mathcal{A}^*\right)^{2i}\right) \left(I-\left(I+\eta\mathcal{A}^*\right)^2\right)^{-1}\\
&= \eta\left(I-\left(I+\eta\mathcal{A}^*\right)^2\right)^{-1}.
\end{aligned}
\nonumber
\end{equation}
In the above inequalities, we interchanged limit and expectation as a result of Gaussianity assumption and the stability of the system. Finally we get
\begin{equation}
\left\|E_1\right\|_\infty \leq \frac{\eta(1-\Sigma_{\max}^{2n})}{n(1-\Sigma_{\max}^2)^2}\leq \frac{\eta}{n(1-\Sigma_{\max}^2)^2}.
\nonumber
\end{equation}

To bound the third term, notice that 
\footnotesize\begin{equation}
\begin{aligned}
\frac{1}{n}\sum_{i=0}^{n-1}\left(X(i)-\mu(i)\right)\left(X(i)-\mu(i)\right)^T = \sum_{j=0}^{n-1}(I+\eta \mathcal{A}^*)^j\!\!\left(\frac{n-j}{n}\underbrace{\frac{1}{n-j}\sum_{i=0}^{n-j-1}w(i)w(i)^T\!\!}_{V_j}\right) \!\!\left((I+\eta \mathcal{A}^*)^j\right)^{\!\!T}.
\end{aligned}
\nonumber
\end{equation}\normalsize
By Lemma 1 in \cite{RAVWAIRASYU}, we have
\begin{equation}
\begin{aligned}
\mathbb{P}\left[\left\|V_j-\eta I\right\|_\infty>\frac{n}{n-j}\epsilon\right] \leq 4\exp\left(-\frac{\epsilon^2n}{3200\eta(n-j)}n+\log\left((s+2r)p+r^2\right)\right).
\end{aligned}
\nonumber
\end{equation}
Consequently, we get
\begin{equation}
\begin{aligned}
&\mathbb{P}\left[\frac{n-j}{n}\Sigma_{\max}^{2(n-j-1)}\left\|V_j-\eta I\right\|_\infty>\Sigma_{\max}^{2(n-j-1)}\epsilon\right]\\&\qquad\qquad\qquad\qquad\leq 4\exp\left(-\frac{\epsilon^2n}{3200\eta(n-j)}n+\log\left((s+2r)p+r^2\right)\right).
\end{aligned}
\nonumber
\end{equation}
Thus, we conclude
\begin{equation}
\begin{aligned}
&\mathbb{P}\left[\left\|E_2\right\|_\infty>\frac{1}{1-\Sigma_{\max}^2}\epsilon\right]\leq 4\exp\left(-\frac{\epsilon^2}{3200\eta}n+\log\left((s+2r)p+r^2\right)\right).
\end{aligned}
\nonumber
\end{equation}
We want this probability to be less than $\delta$. Putting all thre parts together, we get
\begin{equation}
\begin{aligned}
&\left\|\mathcal{Q}^* - \mathcal{Q}^{(n)}\right\|_\infty\leq \frac{1}{1-\Sigma_{\max}^2}\left(\frac{\eta(1-\Sigma_{\max}^2)^{-1}+\left\|x(0)\right\|_2^2+\left\|u(0)\right\|_2^2}{n} +\epsilon\right).
\end{aligned}
\label{eq:Q-fundamental-bound}
\end{equation}
For $n\eta\geq \frac{18\, s\sqrt{s}}{D\, \theta\, \mathcal{C}_{\min}}\left(D^{-1} +\left\|x(0)\right\|_2^2+\left\|u(0)\right\|_2^2\right)$ and $\epsilon = \frac{\eta D \theta \mathcal{C}_{\min}}{18\,s\sqrt{s}}$, the result follows, provide that the probabilities go to zero, i.e., $$n\eta\geq\frac{3\times 10^6\,s^3}{D^2\,\theta^2\,\mathcal{C}_{\min}^2}\log\left(\frac{4((s+2r)p+r^2)}{\delta}\right).$$ For large enough values of $p$, this lower bound dominates the earlier lower bound of $n\eta$, hence, we ignore that one. This concludes the proof of the lemma.\\
\end{proof}

\begin{lemma}
Under assumptions {\bf (A4)} and {\bf (A5)}, with high probability, we have
\begin{equation}
\left\|\mathcal{Q}^* - \mathcal{Q}^{(n)}\right\|_\infty\leq \frac{\theta\lambda_A}{4(4-\theta)\,\left\|B^*\right\|_{\infty,1}\left(\frac{\mathcal{D}_{\max}}{\mathcal{C}_{\min}}+1\right)}.
\nonumber
\end{equation}
\label{lem:Q-lambda-A}
\end{lemma}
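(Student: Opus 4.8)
The plan is to reuse the machinery already developed in the proof of Lemma~\ref{lem:Q-bound} rather than re-run the concentration argument from scratch. That proof produces two reusable objects: the deterministic-plus-stochastic decomposition summarized in the fundamental bound~\eqref{eq:Q-fundamental-bound}, and the sub-Gaussian tail bound on $\|E_2\|_\infty$ controlling the stochastic fluctuation at any threshold $\epsilon$. Note first that Lemma~\ref{lem:Q-bound} and the present lemma bound the \emph{same} quantity $\|\mathcal{Q}^*-\mathcal{Q}^{(n)}\|_\infty$ but toward different targets: the earlier target $\frac{\theta\mathcal{C}_{\min}}{9s\sqrt{s}}$ is a fixed constant, whereas the target here shrinks like $\lambda_A\sim\sqrt{\log(\cdot)/(n\eta)}$ and is therefore \emph{smaller} once $n\eta$ is large. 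Hence I cannot simply quote Lemma~\ref{lem:Q-bound}; instead I re-instantiate~\eqref{eq:Q-fundamental-bound} with the fluctuation threshold $\epsilon$ retuned to the $\lambda_A$ scale.

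Concretely, abbreviate the target as $\tau\defn\frac{\theta\lambda_A}{4(4-\theta)\|B^*\|_{\infty,1}(\mathcal{D}_{\max}/\mathcal{C}_{\min}+1)}$ and recall from assumption~{\bf (A1)} that $1-\Sigma_{\max}^2=\eta D$, so the prefactor $\frac{1}{1-\Sigma_{\max}^2}$ in~\eqref{eq:Q-fundamental-bound} equals $\frac{1}{\eta D}$ and the deterministic term there becomes $\frac{D^{-1}+\|x(0)\|_2^2+\|u(0)\|_2^2}{\eta D n}$. I would first choose $\epsilon=\tfrac{1}{2}\eta D\,\tau$ so that the stochastic contribution $\epsilon/(1-\Sigma_{\max}^2)$ is exactly $\tau/2$. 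Since the deterministic term is of order $1/n=\eta/(n\eta)$ with $\eta$ bounded above by the stability constraint, whereas the target $\tau$ is of order $1/\sqrt{n\eta}$, the sample-complexity lower bound of~{\bf (A5)} forces the deterministic term below $\tau/2$; adding the two halves yields $\|\mathcal{Q}^*-\mathcal{Q}^{(n)}\|_\infty\le\tau$ on the event $\|E_2\|_\infty\le\epsilon/(1-\Sigma_{\max}^2)$. Substituting $\epsilon=\tfrac12\eta D\tau$ into the $E_2$ tail bound, the failure probability is $4\exp\!\big(-\tfrac{n\eta D^2\tau^2}{c}+\log((s+2r)p+r^2)\big)$ for a universal constant $c$, so it suffices to verify $n\eta\gtrsim \tfrac{1}{D^2\tau^2}\log\!\big(4((s+2r)p+r^2)/\delta\big)$.

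The main obstacle is that this last requirement is \emph{self-referential}: $\tau$ is proportional to $\lambda_A$, which by~{\bf (A4-1)} is itself proportional to $m\sqrt{\log(\cdot)/(n\eta)}/\sqrt{D}$, so $D^2\tau^2$ carries a factor $\log(\cdot)/(n\eta)$. Plugging this in, the factors of $n\eta$ and of $\log(\cdot)$ cancel, and the requirement collapses to a pure inequality among the constants, of the form $D\,m^2\gtrsim \|B^*\|_{\infty,1}^2(\mathcal{D}_{\max}/\mathcal{C}_{\min}+1)^2$. This is exactly what the definition of $m$ in~{\bf (A4)} is engineered to guarantee, since $m\ge \frac{80}{\sqrt D}\|B^*\|_{\infty,1}$ forces $D\,m^2\ge 6400\,\|B^*\|_{\infty,1}^2$, absorbing the condition-number factor with room to spare; the large constant $K$ in~{\bf (A5)} supplies the remaining slack and also dominates the separate lower bound on $n\eta$ needed to kill the deterministic term. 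The delicate part of the write-up is therefore purely bookkeeping: tracking the universal constant $c$ from the $\chi^2$/Gaussian tail through the cancellation and checking it is dominated by the $80$ in~{\bf (A4)} and the $3\times10^6$ in~{\bf (A5)}.
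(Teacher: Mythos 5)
Your proposal takes essentially the same route as the paper, whose entire proof of this lemma is the one-line observation that the result follows from \eqref{eq:Q-fundamental-bound} with the fluctuation threshold retuned to $\epsilon = \frac{\theta\lambda_A D}{8(4-\theta)\|B^*\|_{\infty,1}\left(\frac{\mathcal{D}_{\max}}{\mathcal{C}_{\min}}+1\right)}$ ``assuming $p$ is large enough'' --- this is your $\epsilon=\tfrac12\eta D\tau$ up to a factor of $\eta$ that the paper appears to drop, with your choice being the one consistent with how \eqref{eq:Q-fundamental-bound} is used in Lemma~\ref{lem:Q-bound}. The self-referential cancellation you carry out (plugging $\lambda_A\propto m\sqrt{\log(\cdot)/(n\eta)}/\sqrt{D}$ into the tail exponent so that $n\eta$ and the log factor cancel, leaving a constant condition backed by $m\geq \frac{80}{\sqrt{D}}\|B^*\|_{\infty,1}$) is precisely the verification the paper leaves implicit; the only caveat is that your claim that the constant $6400$ absorbs the factor $\left(\frac{\mathcal{D}_{\max}}{\mathcal{C}_{\min}}+1\right)^2$ ``with room to spare'' holds only when that condition number is moderate, a looseness inherited from, not worse than, the paper's own terse argument.
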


\begin{proof}
According to \eqref{eq:Q-fundamental-bound}, the result follows if $\epsilon = \frac{\theta\lambda_A\,D}{8(4-\theta)\left\|B^*\right\|_{\infty,1}\left(\frac{\mathcal{D}_{\max}}{\mathcal{C}_{\min}}+1\right)}$ assuming $p$ is large enough.\\
\end{proof}

\begin{lemma}
For sample complexity $$n\eta\geq \frac{3\times 10^6 \left(\mathcal{D}_{\max}+2\mathcal{C}_{\min}\right)} {D^2\left(\mathcal{D}_{\max}+\mathcal{C}_{\min}\right)}\log\left(\frac{4((s+2r)p+r^2)}{\delta}\right)$$ with high probability, we have
\begin{equation}
\underbrace{\left\|Q^{(n)}_{\S_\svert^c\S_\svert^c}- Q^{(n)}_{\S_\svert^c\S_\svert}\left(Q^{(n)}_{\S_\svert\S_\svert}\right)^{\!-1}\!\!\!\! Q^{(n)}_{\S_\svert\S_\svert^c}\right\|_2}_{S^{(n)}}\leq 2(1+\frac{\mathcal{D}_{\max}}{\mathcal{C}_{\min}})\mathcal{D}_{\max}.
\nonumber
\end{equation}
\label{lem:schur}
\end{lemma}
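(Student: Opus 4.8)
The plan is to exploit the positive–semidefinite structure of the empirical covariance together with the fact that $S^{(n)}$ is a Schur complement, so that a spectral bound on a $(p-s)$–dimensional object can be extracted from quantities already controlled on the small support block. First I would record the population analogue $S^* := Q^*_{\S_\svert^c\S_\svert^c} - Q^*_{\S_\svert^c\S_\svert}(Q^*_{\S_\svert\S_\svert})^{-1}Q^*_{\S_\svert\S_\svert^c}$, the Schur complement of $Q^*$ with respect to its $\S_\svert$ block. Since $Q^*\succeq 0$ and $\Lambda_{\min}(Q^*_{\S_\svert\S_\svert})\ge\mathcal{C}_{\min}>0$, this complement is well defined, and because we subtract the PSD term $Q^*_{\S_\svert^c\S_\svert}(Q^*_{\S_\svert\S_\svert})^{-1}Q^*_{\S_\svert\S_\svert^c}$ (PSD as $Q^*_{\S_\svert\S_\svert^c}=(Q^*_{\S_\svert^c\S_\svert})^{T}$), we get $0\preceq S^*\preceq Q^*_{\S_\svert^c\S_\svert^c}\preceq Q^*$, whence $\|S^*\|_2\le\Lambda_{\max}(Q^*)\le\mathcal{D}_{\max}$. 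This PSD-ordering step supplies the first $\mathcal{D}_{\max}$ in the target constant without any dimension-dependent loss.

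The second, and main, step is a perturbation bound $\|S^{(n)}-S^*\|_2\le(1+\tfrac{\mathcal{D}_{\max}}{\mathcal{C}_{\min}})\mathcal{D}_{\max}$, after which the claim follows from $\|S^{(n)}\|_2\le\|S^*\|_2+\|S^{(n)}-S^*\|_2$. Here I would invoke the Schur-complement perturbation result of \cite{STE}: writing $\mathcal{Q}^{(n)}=\mathcal{Q}^*+E$, with both matrices PSD and with $Q^{(n)}_{\S_\svert\S_\svert}$ invertible (guaranteed by Lemma~\ref{lem:Q-Cmin-bound}, which gives $\Lambda_{\min}(Q^{(n)}_{\S_\svert\S_\svert})\ge\mathcal{C}_{\min}/2$), the perturbation of the Schur complement is governed by $E$ amplified by the regression factor $\|Q^*_{\S_\svert^c\S_\svert}(Q^*_{\S_\svert\S_\svert})^{-1}\|_2\le\mathcal{D}_{\max}/\mathcal{C}_{\min}$. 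Equivalently one argues variationally: for any unit $x$ supported on $\S_\svert^c$, $x^{T}S^{(n)}x=\min_y[y;x]^{T}Q^{(n)}[y;x]$, and inserting the population-optimal $y^\circ=-(Q^*_{\S_\svert\S_\svert})^{-1}Q^*_{\S_\svert\S_\svert^c}x$ splits the form into $x^{T}S^*x\le\mathcal{D}_{\max}$ plus an error controlled by $E$ and by $\|y^\circ\|^2\le(\mathcal{D}_{\max}/\mathcal{C}_{\min})^2$.

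The hard part is precisely that $S^{(n)}$ is a $(p-s)\times(p-s)$ matrix, so a naive passage from the entrywise control $\|\mathcal{Q}^{(n)}-\mathcal{Q}^*\|_\infty\le\theta\mathcal{C}_{\min}/(9s\sqrt{s})$ of Lemma~\ref{lem:Q-bound} to a spectral bound on the large block $Q^{(n)}_{\S_\svert^c\S_\svert^c}$ would cost a factor growing with $p$ and wreck the $\log p$ sample complexity. The entire reason for routing through \cite{STE} rather than bounding $Q^{(n)}_{\S_\svert^c\S_\svert^c}$ directly is to confine the dimension-dependent error to the small $\S_\svert$ block of size $s$, where Lemma~\ref{lem:Q-Cmin-bound} already furnishes a $p$-free lower bound on the least eigenvalue, and to the regression operator $Q^*_{\S_\svert^c\S_\svert}(Q^*_{\S_\svert\S_\svert})^{-1}$, whose spectral norm is at most $\mathcal{D}_{\max}/\mathcal{C}_{\min}$ independently of $p$. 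I therefore expect the delicate bookkeeping to lie in verifying that Stewart's bound, specialized to PSD matrices and fed the invertibility certificate of Lemma~\ref{lem:Q-Cmin-bound}, really collapses to the clean factor $(1+\mathcal{D}_{\max}/\mathcal{C}_{\min})\mathcal{D}_{\max}$, and in checking that the stated threshold on $n\eta$ is exactly what drives the residual perturbation term small enough.
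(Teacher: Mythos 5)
Your proposal is correct and follows essentially the same route as the paper's own proof: the triangle inequality $\|S^{(n)}\|_2\leq\|S^{(n)}-S^*\|_2+\|S^*\|_2$, the Schur-complement perturbation theorem of \cite{STE} (fed the empirical covariance concentration and the invertibility of the small support block) to control the first term, and a direct bound on the population Schur complement $S^*$ for the second. The only difference is cosmetic: your PSD-ordering argument yields the tighter bound $\|S^*\|_2\leq\mathcal{D}_{\max}$, whereas the paper uses the cruder $\|S^*\|_2\leq\mathcal{D}_{\max}\left(1+\frac{\mathcal{D}_{\max}}{\mathcal{C}_{\min}}\right)$; both fit within the claimed constant $2\left(1+\frac{\mathcal{D}_{\max}}{\mathcal{C}_{\min}}\right)\mathcal{D}_{\max}$.
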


\begin{proof}
Since $Q^*$ and $Q^{(n)}$ are positive semi-definite matrices and
\begin{equation}
\left\|S^{(n)}\right\|_2\leq\left\|S^{(n)}-S^*\right\|_2+\left\|S^*\right\|_2\\
\nonumber
\end{equation}
The result directly follows from Theorem in \cite{STE} for $\epsilon\defn\|Q^{(n)}-Q^*\|_\infty=\frac{\mathcal{D}_{\max}+\mathcal{C}_{\min}} {4\left(\mathcal{D}_{\max}+2\mathcal{C}_{\min}\right)}$ considering the fact that $\left\|S^*\right\|_2\leq \mathcal{D}_{\max}\left(1+\frac{\mathcal{D}_{\max}}{\mathcal{C}_{\min}}\right)$.\\
\end{proof}

\bibliographystyle{plainnat}
\bibliography{SystemID}

\end{document}